\documentclass[10pt,twocolumn,letterpaper]{article}
\usepackage[letterpaper,top=0.75in,bottom=0.85in,left=0.75in,right=0.75in,columnsep=0.4in]{geometry}
\usepackage[T1]{fontenc}
\usepackage{newtxtext}
\usepackage{courier}
\usepackage{amsmath,amsthm}
\usepackage{newtxmath}
\usepackage{graphicx}
\graphicspath{{figures/}}
\usepackage{booktabs}
\usepackage{array}
\newcolumntype{L}[1]{>{\raggedright\arraybackslash}p{#1}}
\usepackage[font=small,labelfont=bf]{caption}
\usepackage{algorithm}
\usepackage{algorithmic}
\usepackage{natbib}
\usepackage[hyphens]{url}
\usepackage{titlesec}
\titleformat{\section}{\large\bfseries\raggedright}{\thesection}{0.6em}{}
\titleformat{\subsection}{\normalsize\bfseries\raggedright}{\thesubsection}{0.6em}{}
\titlespacing*{\section}{0pt}{2.0ex plus 0.6ex minus 0.3ex}{1.0ex plus 0.3ex}
\titlespacing*{\subsection}{0pt}{1.6ex plus 0.5ex minus 0.3ex}{0.8ex plus 0.2ex}
\titlespacing*{\paragraph}{0pt}{1.0ex plus 0.4ex minus 0.2ex}{0.8em}
\renewenvironment{abstract}{\section*{Abstract}\small}{\par\vskip 0.5ex}
\usepackage[hidelinks]{hyperref}
\hypersetup{pdftitle={Measured or Real? When Model-Derived Measurements Can Be Corrected or Compared},
  pdfauthor={Yifeng Li, Ziang Ni},bookmarksnumbered=true}
\newcommand{\rhat}{\hat{\rho}}
\newcommand{\robs}{r_{\mathrm{obs}}}
\newcommand{\rplug}{\hat r_{\mathrm{plug}}}
\newcommand{\rgate}{\hat r_{\mathrm{gate}}}
\newcommand{\rtrue}{r^{\star}}
\newcommand{\thetastar}{\theta^{\star}}
\newcommand{\qgate}{q_{\mathrm{gate}}}
\newcommand{\qucb}{q_{\mathrm{ucb}}}
\newcommand{\qalpha}{q_{\alpha}}
\newcommand{\MSW}{\mathrm{MSW}}
\newcommand{\MSB}{\mathrm{MSB}}
\newcommand{\ICC}[1]{\mathrm{ICC}(#1)}
\newcommand{\clip}{\mathrm{clip}}
\newcommand{\E}{\mathbb{E}}
\newcommand{\Certified}{\textsc{certified}}
\newcommand{\ProtDep}{\textsc{protocol-dependent}}
\newcommand{\Refused}{\textsc{refused}}
\newcommand{\Declined}{\textsc{declined}}
\theoremstyle{plain}
\newtheorem{proposition}{Proposition}
\newtheorem{lemma}{Lemma}
\theoremstyle{remark}
\newtheorem{remark}{Remark}

\begin{document}
\twocolumn[{\centering
  {\Large\bfseries Measured or Real?\\ When Model-Derived Measurements Can Be Corrected or Compared\par}
  \vskip 1.1em
  {\large Yifeng Li$^{1,*}$, Ziang Ni$^{2,*}$\par}
  \vskip 0.5em
  {\normalsize $^{1}$University of Trento\\ $^{2}$Delft University of Technology\\
    yifeng.li@studenti.unitn.it, ziangni@tudelft.nl\par}
  \vskip 1.8em}]
\begin{NoHyper}\renewcommand\thefootnote{$*$}\footnotetext{These authors contributed equally.}\end{NoHyper}
\begin{abstract}
Numbers computed from trained models, such as an example's difficulty or a model's benchmark lead, change with seed and prompt. We ask when such measurements can be corrected or compared. Noise shrinks correlations, and the textbook correction divides by the square root of two estimated reliabilities. On two CIFAR-100 difficulty scores, it returned correlations above one in 70 of 100 classes. We prove that its mean squared error is infinite even when the correlation and the reliabilities use separate seeds. Our rule corrects only when both reliabilities pass a significance test, then divides by their upper confidence bounds. Its output is always bounded, and its coverage guarantees assume Gaussian noise. It helps once reliability is well estimated, more than halving the error with 500 simulated units. Prompt settings can reverse rankings. On ARC, Qwen2-1.5B trails Llama-3.2-3B by 3.8 points with no examples in the prompt and leads by 16.6 with two. Across four benchmarks, averaging the two settings names one winner for 15 of 16 such reversals. We compare models within each setting, controlling family-wise error asymptotically in the number of items for a fixed prompt grid. Aligning embedding coordinates across seeds raises held-out per-dimension reliability from 0.003 to 0.813.
\end{abstract}
\section{Introduction}\label{sec:intro}

\begin{figure*}[t]
\centering
\includegraphics[width=\textwidth]{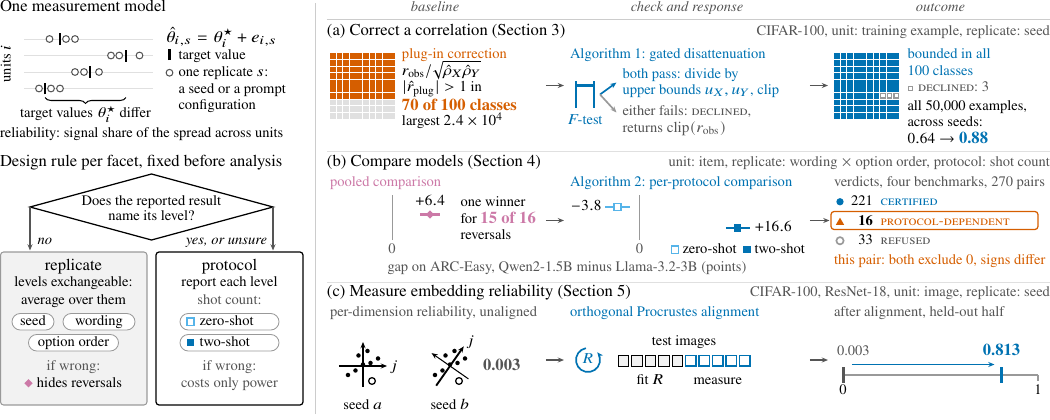}
\caption{In each of the three analyses the baseline fails, and the check in this paper gives an outcome that the data support. Left: all three rest on one measurement model and one design rule. Measuring unit $i$ under replicate $s$ gives its target value $\thetastar_i$ plus replicate noise $e_{i,s}$. A facet is a protocol, reported per level, when the reported result names its level or when we are unsure. Otherwise, if its levels are exchangeable, it is a replicate and is averaged over. (a) The plug-in correction of the correlation between EL2N and dynamic uncertainty, computed within each \mbox{CIFAR-100} class, exceeds 1 in magnitude in 70 of the 100 classes (vermilion cells). Algorithm~\ref{alg:gated} returns a bounded value in all 100 and declines in 3 (open cells). Cells count classes and are not matched between the two grids. This demonstration uses the same two seeds for the correlation and the reliabilities, with the gate at $\alpha{=}0.25$. With the correlation computed across seeds on all 50,000 training examples, Algorithm~\ref{alg:gated} corrects 0.64 to 0.88. (b) On ARC-Easy, the pooled comparison puts \mbox{Qwen2-1.5B} ahead of \mbox{Llama-3.2-3B}, while the simultaneous 95\% intervals of Algorithm~\ref{alg:perproto} exclude zero with opposite signs in the two protocols. Intervals are drawn to scale. The counts of reversals and verdicts cover 270 pairs on four benchmarks. (c) Without alignment, dimension $j$ of one seed need not match dimension $j$ of another. Orthogonal Procrustes alignment of \mbox{ResNet-18} features across seeds, fitted on half of the \mbox{CIFAR-100} test images, raises per-dimension reliability on the other half from 0.003 to 0.813.}
\label{fig:overview}
\end{figure*}

Machine learning papers often report numbers computed from trained models, such as how hard a training example is or how far one model leads another on a benchmark. These model-derived measurements change when we retrain with a new seed or reword the prompt. We ask when such measurements can be corrected for that noise, and when they can be compared.

\paragraph{A correction that returns impossible correlations.}
Data pruning ranks training examples by difficulty scores. EL2N (error L2 norm) \citep{paul2021} measures the size of the prediction error early in training, and dynamic uncertainty \citep{he2024} measures how much the true-class probability fluctuates across epochs. Within each \mbox{CIFAR-100} class, we correlate the two scores across about 500 examples, each averaged over two training seeds. Seed noise pulls this correlation toward zero. Spearman's textbook correction \citep{spearman1904} divides it by the square root of the two scores' reliabilities. A score's reliability is the share of the differences between units that would reappear if we retrained with a new seed, and here the units are training examples. Some estimated reliabilities fall near or below zero, down to $-$0.06 for the one-way intraclass correlation $\ICC{1,2}$ of Section~\ref{sec:reliability}. With these estimates plugged in, Spearman's formula (the plug-in correction) returns correlations above one in magnitude in 70 of the 100 classes (Figure~\ref{fig:overview}(a)). The largest is $2.4\times10^4$. Section~\ref{sec:real} gives the details, and Figure~\ref{fig:teaser}(a, b) shows the same failure in simulation.

\paragraph{A ranking that flips with the prompt.}
Our second example ranks language models on the 2,365 four-option test questions of ARC-Easy. We score each model under 16 prompt configurations, four instruction wordings crossed with four orders of the answer options. We do this in two settings, zero-shot and two-shot, where two-shot places the same two solved examples inline before every question. \mbox{Llama-3.2-3B} leads \mbox{Qwen2-1.5B} by 3.8 percentage points zero-shot. With two shots, \mbox{Qwen2-1.5B} leads by 16.6 points. Both gaps stay clear of zero even under simultaneous intervals, which are built to cover every gap on the benchmark at once (Figure~\ref{fig:overview}(b)). Averaging the two settings puts \mbox{Qwen2-1.5B} ahead by 6.4 points and hides the reversal. Across four benchmarks we find 16 such reversals, and a pooled comparison that averages the settings names a single winner for 15 of them.

\paragraph{Both examples ask how much of a result survives a change of replicate.}
Both measure units under replicates that we average over. The units are training examples or benchmark items, and the replicates are seeds or prompt configurations. The shot count is not a replicate, because papers report it by name, as in two-shot accuracy. We call such a setting a protocol and analyze each protocol separately. A correction needs both reliabilities to be clearly above zero, and a winner needs evidence pointing the same way in every protocol. When the data fall short, declining to correct and declining to name a winner are legitimate results. Our design rule is to name the unit, the replicates and the protocols before any correction or comparison (Figure~\ref{fig:overview}).

\paragraph{Contributions.}
\begin{enumerate}\itemsep1pt
\item \textbf{Separate seeds do not make the plug-in correction safe.} On the event that both estimated reliabilities are positive, the plug-in's mean squared error (MSE) is infinite even when the correlation and the reliabilities use separate seeds. The proof needs conditional independence and a density bound on the reliability estimate near zero, and no Gaussian assumption (Proposition~\ref{prop:split}). We respond with gated disattenuation (Algorithm~\ref{alg:gated}). Disattenuation corrects a correlation for the shrinkage that noise causes, and ours corrects only when both reliabilities pass a significance test and then divides by their upper confidence bounds. Its output is always bounded, and its only tuning setting is the significance level. We claim no optimality for it. It helps once reliability is well estimated, and with 500 simulated units its MSE is 0.048, against 0.105 for the uncorrected estimate. Across the settings of Table~\ref{tab:mse}, it is never more than 0.01 above the uncorrected estimate.
\item \textbf{Two uses of the design rule.} For model comparison, the rule leads to per-protocol comparison (Algorithm~\ref{alg:perproto}). Algorithm~\ref{alg:perproto} gives every pair of models a simultaneous interval for its accuracy gap in each protocol and so controls the family-wise error, the chance of any false claim on a benchmark. We call a direction \emph{certified} when, in every protocol, its simultaneous interval excludes zero with the same sign. The family-wise error level is $\alpha$, asymptotically in the number of items, for items drawn independently from an item population and the fixed prompt grid, with every model scored the same way on the same items. It does not extend to unseen prompts or other harnesses. For embeddings, a seed counts as a replicate only if each dimension means the same thing under every seed, so we align coordinates across seeds on held-out data before measuring per-dimension reliability.
\item \textbf{Findings.} One prompt configuration measures item difficulty poorly, with reliability at most 0.65 for every model. Most reversals involve \mbox{Llama-3.2-3B}, and on ARC its two-shot drop depends on the format of the two examples, because giving them as earlier chat turns removes it. For \mbox{ResNet-18} features on \mbox{CIFAR-100}, aligning embedding coordinates raises held-out per-dimension reliability from 0.003 to 0.813.
\end{enumerate}

\paragraph{What is standard and what is new.}
The tools we build on are standard. Correction uses the intraclass correlation and its exact $F$ distribution \citep{shrout1979}, Spearman's correction, a significance pre-test \citep{bancroft1944} and upper confidence bounds. Comparison uses max-$t$ intervals, built from the largest $t$ statistic \citep{hothorn2008}, and generalizability theory (G-theory) \citep{cronbach1972,brennan2001}. Embeddings use Procrustes alignment \citep{schonemann1966} and centered kernel alignment (CKA) \citep{kornblith2019}. Proposition~\ref{prop:known} restates known facts \citep{searle1992,zimmerman1997,kinal1980}. Proposition~\ref{prop:split} is new. We are not aware of a prior statement of this result. Algorithm~\ref{alg:gated} adds an explicit cap on the correction factor to a standard gate and standard upper bounds, and Algorithm~\ref{alg:perproto} adds a separate target for each protocol and three verdicts with measured error rates.

Figure~\ref{fig:overview} and Table~\ref{tab:roadmap} map each problem to its check and response. Readers interested only in model comparison can read Sections~\ref{sec:units}, \ref{sec:protocols} and~\ref{sec:compare}.
\section{Measurements, reliability, and what to average over}\label{sec:framework}

One measurement model covers both running examples. Appendix~\ref{app:notation} collects the notation (Table~\ref{tab:notation}) and the reliability estimate behind each analysis (Table~\ref{tab:analyses}).

\begin{table*}[t]\centering\scriptsize\setlength{\tabcolsep}{3pt}
\begin{tabular}{@{}L{1.8cm}L{3.4cm}L{1.35cm}L{1.8cm}L{1.8cm}L{2.75cm}L{2.6cm}L{0.75cm}@{}}
\toprule
Problem & Example in this paper & Unit & Replicate (protocol) & What we estimate & Check & Response & Sec. \\
\midrule
Noise shrinks a correlation &
EL2N vs.\ dynamic uncertainty per CIFAR-100 class: $|\rplug|>1$ in 70/100 classes &
training example & seed &
correlation of seed-mean scores &
$F$-test gate on each reliability; variance--mean check &
Alg.~\ref{alg:gated}: correct only if both gates pass, then divide by upper bounds &
\ref{sec:disatt} \\\addlinespace[3pt]
Errors from one run correlate &
the two scores from the same seed: error correlation +0.28 &
training example & seed &
as above &
error correlation within a seed &
compute $\robs$ from different seeds &
\ref{sec:real} \\\addlinespace[3pt]
A replicate shifts every unit &
one wording makes all items easier; synthetic case (App.~\ref{app:persist}): within-run error 0.44 vs.\ 0.0007 with independent readouts &
item &
wording $\times$ option order &
paired gap on one shared grid &
a model against itself on two halves of the grid: 3.6\% false winners (pooled 18.7\%) &
Alg.~\ref{alg:perproto} on one grid shared by all models &
\ref{sec:harness}, App.~\ref{app:persist} \\\addlinespace[3pt]
The setting changes the answer &
ARC, \mbox{Qwen2-1.5B} minus \mbox{Llama-3.2-3B}: $-$3.8 points zero-shot, +16.6 two-shot &
item &
configuration (protocol: zero- vs.\ two-shot) &
one gap per protocol, $\Delta_{p,\pi}$ &
sign of each protocol's simultaneous interval &
Alg.~\ref{alg:perproto}: three verdicts; pooling names a stable winner for 15/16 reversals &
\ref{sec:compare} \\\addlinespace[3pt]
Coordinates are arbitrary across runs &
ResNet-18 penultimate features on CIFAR-100 &
image (per dimension) & seed &
per-dimension reliability after alignment &
per-dimension reliability before vs.\ after alignment: 0.003 vs.\ 0.813 &
held-out Procrustes alignment; distances; CKA &
\ref{sec:embed} \\
\bottomrule
\end{tabular}
\caption{Each problem needs its own check and response.}
\label{tab:roadmap}
\end{table*}

\subsection{Units, replicates and the target}\label{sec:units}

A model-derived measurement assigns a number to each \emph{unit}, which may be a training example, a benchmark item or an image. The number changes with the \emph{replicate}, one instance of the randomness we average over, such as a training seed. Measuring unit $i$ under replicate $s$ gives
\begin{equation}\label{eq:model}
\hat\theta_{i,s}=\thetastar_i+e_{i,s},\qquad i=1,\dots,N,\quad s=1,\dots,K.
\end{equation}
The \emph{target value} $\thetastar_i$ is the mean of unit $i$'s measurement over the population of replicates, and $e_{i,s}$ is the replicate noise. Different seeds may find different solutions, and the target averages over them. The \emph{estimand} is the population quantity we want to estimate, such as the correlation between the target values of two measurements.

In LLM evaluation the unit is a benchmark item, and scoring is deterministic, so rerunning one prompt gives identical answers. The replicates are therefore prompt \emph{configurations}, each pairing one wording of the instructions with one order of the answer options.

\subsection{Reliability and how we estimate it}\label{sec:reliability}

Reliability measures how much of a measurement's spread across units is signal. Let $\tau^2$ be the variance of the target values across units and $\sigma^2$ the variance of the noise $e_{i,s}$. The \emph{single-replicate reliability} $\rho_1=\tau^2/(\tau^2+\sigma^2)$ is the share of one replicate's variance across units that comes from the target values. Averaging $K$ replicates divides the noise variance by $K$, so the reliability of a $K$-replicate mean follows the Spearman--Brown formula \citep{spearman1910,brown1910},
\begin{equation}\label{eq:rhoK}
\rho_K=\frac{\tau^2}{\tau^2+\sigma^2/K}=\frac{K\rho_1}{1+(K-1)\rho_1}.
\end{equation}

We estimate both reliabilities with the one-way intraclass correlation (ICC), which treats the replicates of each unit as interchangeable and ignores which seed or configuration produced them. The mean square within units, $\MSW$, measures how much the $K$ replicates of one unit disagree. The mean square between units, $\MSB$, measures how much the unit means differ. In the notation of \citet{shrout1979}, the one-way estimates are
\begin{equation}\label{eq:icc}
\begin{aligned}
\rhat_K&=\ICC{1,K}=1-\frac{\MSW}{\MSB},\\
\rhat_1&=\ICC{1,1}=\frac{\MSB-\MSW}{\MSB+(K-1)\MSW}.
\end{aligned}
\end{equation}
These two estimates obey the Spearman--Brown formula exactly, and for $K=2$ the formula inverts to $\rhat_1=\rhat_2/(2-\rhat_2)$. A correction must use a reliability for the same number of replicates as the measurement it corrects. A correlation of single-seed scores needs $\rho_1$, and a correlation of $K$-seed means needs $\rho_K$.

A replicate can also shift every unit at once, as when one prompt wording makes every item easier. To report how reliable a score or an item difficulty is, we use the crossed two-way form of G-theory, which separates such shared shifts from the noise \citep{cronbach1972,brennan2001}. The corrections in Section~\ref{sec:disatt} use the one-way estimates, because the $F$ distribution behind Algorithm~\ref{alg:gated} holds for the one-way model.

\subsection{Replicate or protocol? A rule fixed before analysis}\label{sec:protocols}

Some facets of an evaluation are noise to average over, and others change the question. A facet is a factor of the evaluation design, such as the instruction wording, the option order or the number of solved examples in the prompt. A facet is a replicate when the reported result does not name its level and the levels are exchangeable, meaning any one could stand in for another. Wording and option order are replicates, and the target is their average. A facet is a \emph{protocol} when the result names its level, as in ``zero-shot accuracy'' and ``two-shot accuracy''. Each protocol defines its own estimand and is reported separately. Generalizability theory calls these random and fixed facets \citep{brennan2001}. We fix the assignment before analysis by asking whether the result names the level, so it is never chosen from the data (Figure~\ref{fig:overview}, left).

The two possible mistakes cost different things. Treating the shot count as a replicate gives the pooled comparison, which names one winner for 15 of the 16 reversals in Section~\ref{sec:results}. Separating two protocols that agree costs only power (Table~\ref{tab:pooled}). When unsure, we therefore treat the facet as a protocol.

\subsection{Noise and four other problems}\label{sec:failures}

Let $\robs$ be the observed correlation across units between two measurements $X$ and $Y$, and let the \emph{true correlation} $\rtrue$ be the correlation of their target values. Noise shrinks $\robs$ toward zero, and the plug-in correction of \citet{spearman1904} divides by the estimated reliabilities,
\begin{equation}\label{eq:plugin}
\rplug=\frac{\robs}{\sqrt{\rhat_X\,\rhat_Y}}.
\end{equation}
The correction assumes that the noise averages to zero for every unit, that the errors in $X$ and $Y$ are independent, and that no replicate shifts every unit. Section~\ref{sec:disatt} shows how it fails when an estimated reliability is near zero. Table~\ref{tab:roadmap} lists four further problems, each with its own check.

\paragraph{Errors from one run correlate.} Two scores computed from the same training run share that run's noise, which inflates $\robs$. The cross-seed correction in Section~\ref{sec:real} computes $\robs$ from scores on different seeds.

\paragraph{A replicate shifts every unit.} When one prompt wording makes every item easier, the shift survives averaging over more items and survives dividing by a reliability. Our model comparisons score every model on one grid of configurations fixed in advance, so the target and the guarantee refer to that grid (Section~\ref{sec:harness}). Appendix~\ref{app:persist} gives a synthetic example in which noise shared within one run biases an estimate in a way that no reliability correction can undo.

\paragraph{The setting changes the answer.} Two-shot prompting reverses rankings, so Section~\ref{sec:compare} compares models within each protocol and names a winner only when the protocols agree in sign.

\paragraph{Coordinates are arbitrary across runs.} Two training runs can represent the same information in different coordinates, so dimension $j$ of one run need not match dimension $j$ of another. For the \mbox{ResNet-18} features we study, per-dimension reliability then mostly measures the arbitrary basis, and Section~\ref{sec:embed} aligns the coordinates on held-out data before measuring it.
\section{Correcting a correlation for noise}\label{sec:disatt}

If $\rhat$ is often near zero, $1/\rhat$ is sometimes enormous.
Sections~\ref{sec:plugin} and~\ref{sec:split} show how this breaks the plug-in correction, even with separate seeds, and Sections~\ref{sec:gated}--\ref{sec:whento} give and test Algorithm~\ref{alg:gated}, which corrects only after both reliabilities pass an $F$-test.

\subsection{Dividing by an estimated reliability}\label{sec:plugin}

The plug-in correction~\eqref{eq:plugin} divides $\robs$ by $\sqrt{\rhat_X\rhat_Y}$.
How often this denominator lands near or below zero has an exact answer in the simplest model of replicate noise, the balanced Gaussian one-way model.
In that model each unit has a Gaussian target value, each replicate adds independent Gaussian noise of the same variance, and every unit has $K$ replicates.
The estimate $\rhat_K=\ICC{1,K}$ then obeys the exact pivot
\begin{equation}\label{eq:pivot}
\frac{1-\rhat_K}{1-\rho_K}\sim F_{\nu_1,\nu_2},\quad \nu_1=N(K-1),\ \ \nu_2=N-1,
\end{equation}
where $F_{\nu_1,\nu_2}$ is the $F$ distribution with $\nu_1$ and $\nu_2$ degrees of freedom \citep{shrout1979}.
A pivot is a function of the estimate and the true value whose distribution is known exactly.
Because an $F$ variable can be arbitrarily large, \mbox{$\rhat_K=1-(1-\rho_K)F$} can take any value below one, negative values included.

Proposition~\ref{prop:known} collects three failures of the plug-in.
These facts are known \citep{searle1992,zimmerman1997,kinal1980}. We restate them for the ICC plug-in.

\begin{proposition}[Known finite-sample facts for the ICC plug-in]\label{prop:known}
Part (i) uses the pivot~\eqref{eq:pivot}, part (ii) uses only $\Pr(\rhat_K\le0)>0$, and part (iii) uses only the condition stated in it.
\par\noindent\textup{(i)} $\Pr(\rhat_K\le 0)=1-F_{\mathrm{cdf}}\big(\tfrac{1}{1-\rho_K};\nu_1,\nu_2\big)>0$, so $\sqrt{\rhat_K}$ is undefined as a real number on an event of positive probability.
\par\noindent\textup{(ii)} With floor $\epsilon$, the factor $g_\epsilon=\max(\rhat_K,\epsilon)^{-1/2}$ has $\E[g_\epsilon^2]=\Theta(1/\epsilon)\to\infty$ as $\epsilon\to0$. Its second moment is finite for each fixed $\epsilon>0$ but not uniformly bounded.
\par\noindent\textup{(iii)} Suppose that, for almost every value of $\robs$, the pair $(\rhat_X,\rhat_Y)$ lies in $(0,\eta)^2$ with positive conditional probability for every $\eta>0$. In the balanced Gaussian model this holds whether $\robs$ and the reliabilities use disjoint seed blocks or share replicates (Appendix~\ref{app:proof-known}). Then the events $\{|\rplug|>1,\ \rhat_X\rhat_Y>0\}$ and $\{0<\rhat_X\rhat_Y<\robs^2\}$ coincide, and for almost every value $\robs\neq0$ their conditional probability is positive.
\end{proposition}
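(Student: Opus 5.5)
We treat the three parts separately, since each relies on a different hypothesis.

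For part (i), we apply the pivot~\eqref{eq:pivot} directly. Write $\rhat_K=1-(1-\rho_K)F$ with $F\sim F_{\nu_1,\nu_2}$ and $\rho_K<1$. Then $\rhat_K\le0$ exactly when $F\ge 1/(1-\rho_K)$. This gives $\Pr(\rhat_K\le0)=1-F_{\mathrm{cdf}}(1/(1-\rho_K);\nu_1,\nu_2)$. The $F$ distribution has a density that is positive on all of $(0,\infty)$, so this probability is strictly positive. On that event $\rhat_K<0$ almost surely, or equals zero on a null set, and $\sqrt{\rhat_K}$ is not a real number.

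For part (ii), let $p=\Pr(\rhat_K\le0)>0$. Since $g_\epsilon^2=1/\max(\rhat_K,\epsilon)$, splitting $\E[g_\epsilon^2]$ on this event gives the lower bound $p/\epsilon$, because $\max(\rhat_K,\epsilon)=\epsilon$ there. For the matching upper bound we use $\max(\rhat_K,\epsilon)\ge\epsilon$ everywhere, so $g_\epsilon^2\le 1/\epsilon$ and $\E[g_\epsilon^2]\le 1/\epsilon$. Together these give $p/\epsilon\le\E[g_\epsilon^2]\le1/\epsilon$, which is $\Theta(1/\epsilon)$. The second moment is therefore finite for each fixed $\epsilon$ and diverges as $\epsilon\to0$, so it is not uniformly bounded. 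Only $p>0$ is used here, as the statement claims.

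For part (iii), the equivalence of the two events is algebraic. On $\{\rhat_X\rhat_Y>0\}$ the square root is real, and $|\rplug|>1$ is equivalent to $\robs^2>\rhat_X\rhat_Y$. For positivity, fix $\robs=r\ne0$ and take $\eta=|r|$. If $(\rhat_X,\rhat_Y)\in(0,\eta)^2$, then $0<\rhat_X\rhat_Y<\eta^2=r^2$. By hypothesis this event has positive conditional probability, and it lies inside the target event.

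The main work is verifying the hypothesis in the balanced Gaussian model, which is deferred to Appendix~\ref{app:proof-known}. I would argue as follows. Conditional on the data entering $\robs$, each $\rhat$ is still a continuous function of the remaining Gaussian quantities $(\MSW,\MSB)$. When seed blocks are disjoint, these are independent of $\robs$, and each $\rhat$ has a positive density near zero by the pivot. When $\rhat_X$ and $\rhat_Y$ themselves share no variables, independence gives a positive product probability. When replicates are shared, I would write the joint Gaussian vector conditional on $\robs$ and check that the conditional law of $(\MSW_X,\MSB_X,\MSW_Y,\MSB_Y)$ retains full support on an open set where $\MSW<\MSB<\MSW/(1-\eta)$. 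This follows because the conditioning is on a single smooth scalar whose level sets are manifolds of dimension at least $N K$ minus one, and the ratios still vary freely along them. This full-support check in the shared-replicate case is the step I expect to be the main obstacle.
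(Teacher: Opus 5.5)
Your proofs of (i) and (ii), the event identity in (iii), and the choice $\eta=|r|$ are the same as the paper's. The gap is in the sub-claim that the hypothesis of (iii) holds in the balanced Gaussian model.

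\emph{Disjoint seed blocks.} You assert that $(\MSW_\ell,\MSB_\ell)$ is independent of $\robs$ and that $\rhat_X$ and $\rhat_Y$ are independent, and you take positivity near zero from the pivot. Unconditionally, both independence claims are false. The statistics $\robs$, $\MSB_X$ and $\MSB_Y$ all depend on the target values $\thetastar$, and the targets of $X$ and $Y$ are correlated. Independence holds only given $\thetastar$. Given $\thetastar$, however, the pivot~\eqref{eq:pivot} no longer describes $\rhat_\ell$, because $\MSB_\ell$ is then a scaled noncentral $\chi^2$. You would therefore need the conditional density of $\rhat_\ell$ given $\thetastar$ to be positive on $(0,1)$. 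You would then multiply the two factors and average over $\thetastar$ given $\robs$. This is what the paper's Lemma~\ref{lem:gauss} supplies.

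\emph{Shared replicates.} Here the level-set argument is a heuristic, not a proof. Conditioning on the null event $\{\robs=r\}$ goes through a regular conditional distribution. The claim that the ratios still vary freely along the level sets is exactly what has to be shown.

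The missing idea is that, for Gaussian noise, the within-unit deviations $e_{\ell,i,s}-\bar e_{\ell,i}$ are independent of the unit means. The statistics $\robs$, $\MSB_X$ and $\MSB_Y$ are functions of the unit means, while $\MSW_X$ and $\MSW_Y$ are functions of the deviations. So $\MSW_X$ and $\MSW_Y$ are independent of each other and of $(\robs,\MSB_X,\MSB_Y)$, with $\nu_1\MSW_\ell/\sigma_\ell^2\sim\chi^2_{\nu_1}$.

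Given the unit means, the event $\{\rhat_\ell\in(0,\eta)\}$ is the event that $\MSW_\ell$ lies in $\big((1-\eta)\MSB_\ell,\MSB_\ell\big)$ for $\ICC{1,K}$. For $\ICC{1,1}$ the interval is $\big((1-\eta)\MSB_\ell/(1+(K-1)\eta),\MSB_\ell\big)$. Either interval has positive length because $\MSB_\ell>0$ almost surely. The conditional probability is therefore a product of two positive $\chi^2$ probabilities. Averaging over the unit means given $\robs$ keeps it positive for almost every value of $\robs$.

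No manifold or coarea machinery is needed. The same decomposition also repairs the disjoint-block case, since each block's $\MSW_\ell$ is independent of all unit means, of $\thetastar$, and of the other blocks.
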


The proofs are in Appendix~\ref{app:proof-known}.
On real difficulty scores the plug-in returned correlations above one in magnitude in 70 of 100 \mbox{CIFAR-100} classes (Section~\ref{sec:real}).

\subsection{Splitting seeds does not fix the denominator}\label{sec:split}

A natural remedy is to compute $\robs$ on seeds separate from those used for the reliabilities.
The numerator then carries no information about the noise in the denominator.
Proposition~\ref{prop:split} shows that the second moment of the plug-in stays infinite all the same.

\begin{proposition}[The second moment stays infinite when seeds are split]\label{prop:split}
Let $\robs$ be computed on seeds disjoint from those used for $\rhat_X$ and $\rhat_Y$, and let $\thetastar$ be the vector of target values. Assume
\textup{(H1)} given $\thetastar$, $\robs$ is independent of $(\rhat_X,\rhat_Y)$;
\textup{(H2)} there is a set $\Theta_0$ with $\Pr(\thetastar\in\Theta_0)>0$ such that, for every $\thetastar\in\Theta_0$, $\E[\robs^2\mid\thetastar]>0$, and for some $c,\delta>0$ (which may depend on $\thetastar$), $\Pr(\rhat_X\in(a,b),\,\rhat_Y>0\mid\thetastar)\ge c\,(b-a)$ whenever $0<a<b<\delta$;
\textup{(H3)} $\rhat_Y<1$ almost surely.
Then $\E\big[\rplug^{\,2}\,\mathbf 1\{\rhat_X>0,\rhat_Y>0\}\big]=\infty$.
\end{proposition}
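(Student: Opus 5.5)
Our approach is to condition on $\thetastar$, factor the conditional second moment with (H1), and show that the reliability factor has a divergent conditional expectation for every $\thetastar\in\Theta_0$. Since $\Pr(\thetastar\in\Theta_0)>0$ and the integrand is nonnegative, the unconditional expectation is then infinite by the tower property (monotone convergence handles the nonnegative extended-valued integrand).

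First we write $\rplug^{\,2}\mathbf 1\{\rhat_X>0,\rhat_Y>0\}=\robs^2\cdot (\rhat_X\rhat_Y)^{-1}\mathbf 1\{\rhat_X>0,\rhat_Y>0\}$. Given $\thetastar$, (H1) makes the two factors conditionally independent, so the conditional expectation is
\[
\E[\robs^2\mid\thetastar]\cdot\E\big[(\rhat_X\rhat_Y)^{-1}\mathbf 1\{\rhat_X>0,\rhat_Y>0\}\mid\thetastar\big].
\]
The product rule holds for nonnegative variables, with the convention $0\cdot\infty=0$. The first factor is strictly positive on $\Theta_0$ by (H2), so it suffices to show that the second factor is $+\infty$ there.

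Next, on $\{\rhat_Y>0\}$, (H3) gives $0<\rhat_Y<1$, hence $1/\rhat_Y>1$. This bounds the second factor below by $\E[\rhat_X^{-1}\mathbf 1\{\rhat_X>0,\rhat_Y>0\}\mid\thetastar]$, which removes all dependence between $\rhat_X$ and $\rhat_Y$ from the problem.

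The main step is a dyadic layer-cake bound using the density condition in (H2). Fix $\thetastar\in\Theta_0$ with constants $c,\delta$. For $k\ge k_0$ with $2^{-k_0}<\delta$, take the interval $(a,b)=(2^{-k-1},2^{-k})\subset(0,\delta)$. On the event $\{\rhat_X\in(a,b),\rhat_Y>0\}$ we have $\rhat_X^{-1}>2^{k}$, and (H2) gives this event conditional probability at least $c\,2^{-k-1}$. The dyadic intervals are disjoint, so
\[
\E[\rhat_X^{-1}\mathbf 1\{\rhat_X>0,\rhat_Y>0\}\mid\thetastar]\ge\sum_{k\ge k_0}2^{k}\cdot c\,2^{-k-1}=\sum_{k\ge k_0}c/2=\infty.
\]
Equivalently, the integral $\int_0^\delta c\,x^{-1}\,dx$ diverges.

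The step we expect to need the most care is the measure-theoretic bookkeeping rather than the analysis. We must make sure the regular conditional distributions exist, that (H1) licenses factoring the conditional expectation of a product of nonnegative extended-valued functions, and that the (H2) bound is used jointly with $\rhat_Y>0$ rather than marginally. Combining the two factors then gives conditional expectation $+\infty$ on $\Theta_0$, and integrating over $\Theta_0$, which has positive probability, yields $\E[\rplug^{\,2}\mathbf 1\{\rhat_X>0,\rhat_Y>0\}]=\infty$. No Gaussian assumption enters.
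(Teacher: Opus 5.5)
Your proof is correct, and its skeleton matches the paper's. You condition on $\thetastar$, use (H3) to discard the factor $1/\rhat_Y>1$, factor the conditional expectation with (H1) extended to nonnegative functions, and integrate over $\Theta_0$. You factor before bounding and the paper bounds before factoring, which makes no difference. The one real difference is how you turn the interval bound in (H2) into a divergent integral. The paper proves Lemma~\ref{lem:borel}, which uses outer regularity of the finite measure $\mu(B)=\Pr(\rhat_X\in B,\,\rhat_Y>0\mid\thetastar)$ to upgrade the interval bound to $\mu(B)\ge c\,\lambda(B)$ for every Borel $B\subseteq(0,\delta)$. This gives $\int h\,d\mu\ge c\int_0^\delta h(t)\,dt$ for every Borel $h\ge0$, which the paper applies with $h(t)=1/t$. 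Your dyadic layer-cake bound uses (H2) only on the countably many disjoint intervals $(2^{-k-1},2^{-k})$, together with monotonicity and countable additivity. It is therefore shorter, fully self-contained, and needs no regularity argument. The paper's lemma instead buys a comparison that holds for any nonnegative integrand, at the cost of a measure-theoretic detour this particular integrand does not need. One small caution: your remark that the bound is ``equivalently'' $\int_0^\delta c\,x^{-1}\,dx=\infty$ is exactly the step the paper's lemma exists to justify. In your argument the dyadic sum carries the proof, so that remark is a heuristic gloss rather than a step.
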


\emph{Proof sketch.}
On $\{\rhat_X>0,\rhat_Y>0\}$ we have $\rplug^{\,2}\ge\robs^2/\rhat_X$, by (H3).
Given $\thetastar\in\Theta_0$, (H1) factorizes the conditional expectation, and (H2) bounds it below by $\E[\robs^2\mid\thetastar]\int_0^\delta c/t\,dt=\infty$.
Tonelli's theorem gives the unconditional claim (Appendix~\ref{app:proof-split}).

We are not aware of a prior statement of this result.
Because the true correlation lies in $[-1,1]$, an infinite second moment on this event means an infinite mean squared error there, the form stated in Section~\ref{sec:intro}.
In words, (H1) says that once the target values are fixed, the correlation carries no information about the reliability estimates.
(H2) says that, while the estimate for $Y$ stays positive, the estimate for $X$ lands in any small interval just above zero with probability at least proportional to the width of that interval.
The proposition needs no Gaussian assumption and allows $\rhat_X$ and $\rhat_Y$ to be dependent.
In the balanced Gaussian model with three disjoint seed blocks, one each for $\robs$, $\rhat_X$ and $\rhat_Y$, conditions (H1)--(H3) all hold (Appendix~\ref{app:proof-split} also gives a general sufficient condition for (H3)).
There, with floor $\epsilon$ on each reliability, the second moment of the plug-in grows like $\epsilon^{-2}$ (Appendix~\ref{app:proof-split}, with a synthetic check in Appendix~\ref{app:interval}).
A just-identified instrumental-variable (IV) estimator divides by an estimated first-stage coefficient and has no finite moments \citep{kinal1980}, which matters in practice when the first stage is weak \citep{staiger1997,bound1995}.
Split-sample IV also estimates the first stage on a separate sample \citep{angrist1995}.
Proposition~\ref{prop:split} is the analogous statement for the ICC plug-in.

\paragraph{The failure is a small-sample, low-reliability effect.}
At any fixed $\rho>0$ the plug-in is consistent and asymptotically normal, and $\Pr(\rhat\le0)$ decays exponentially in $N$.
Proposition~\ref{prop:split} holds at every finite $N$ and concerns the unclipped plug-in.
The event that drives it, an estimated reliability just above zero, also has a probability that decays exponentially in $N$, so the failure matters in practice at small $N$ and low reliability.
Clipping to $[-1,1]$ bounds the error but pulls estimates toward $\pm1$, and in the base row of Table~\ref{tab:full} the clipped plug-in has MSE 0.52, against 0.12 uncorrected (Appendix~\ref{app:baselines} evaluates a positive-part rule and other baselines).

\subsection{Gated disattenuation (Algorithm~\ref{alg:gated})}\label{sec:gated}

\begin{table*}[t]\centering\scriptsize\setlength{\tabcolsep}{3pt}
\begin{tabular}{@{}L{1.95cm}L{3.65cm}L{3.85cm}L{3.75cm}L{3.7cm}@{}}
\toprule
Result & Needs & Gives & Measured here & Not covered \\
\midrule
Prop.~\ref{prop:known} (known) & (i) Gaussian one-way model; (ii) $\Pr(\rhat\le0)>0$; (iii) $(\rhat_X,\rhat_Y)\in(0,\eta)^2$ with positive prob.\ given $\robs$ & closed form of $\Pr(\rhat_K\le0)>0$; floored second moment $\Theta(1/\epsilon)$; $\Pr(|\rplug|>1)>0$ & plug-in exceeds 1 in 70/100 CIFAR-100 classes (same seeds) & -- \\\addlinespace[2pt]
Prop.~\ref{prop:split} (new) & (H1)--(H3); no Gaussian assumption; $\rhat_X,\rhat_Y$ may be dependent & infinite second moment with $\robs$ on separate seeds; floored rate $\Theta(\epsilon^{-2})$ in the Gaussian disjoint-seed design & synthetic check (App.~\ref{app:interval}) & whether real scores meet (H2) \\\addlinespace[2pt]
Alg.~\ref{alg:gated}: bound and cap & none (algebra) & $|\rgate|\le1$; accepted factor $\le(1-1/(\qgate\qucb))^{-1}$ if $m{=}K$, 2.12 in the base setting (App.~\ref{app:cap}) & bounded in 100/100 classes & -- \\\addlinespace[2pt]
Alg.~\ref{alg:gated}: levels; Prop.~\ref{prop:magnitude} & balanced Gaussian one-way model (Gaussian target values and noise); max rule: for each measure, a bound that covers on the branch where it is used & gate level $\alpha$; each bound covers with prob.\ $\ge1-\alpha$; $|\rgate|\le$ correction with true reliabilities with prob.\ $\ge1-2\alpha$ & MSE (Table~\ref{tab:mse}, Fig.~\ref{fig:robust}); coverage given acceptance 0.90--1.00; exact-$F$ coverage 0.88--0.90 up to $|S|\approx0.5$, 0.79--0.84 near 0.74 & an interval for $\rtrue$; optimality; same-run correlated errors; MSE or coverage of the max rule \\\addlinespace[2pt]
Corrected value (0.88) & mean-zero errors; independent errors in $X$ and $Y$; exchangeable seeds & a corrected value with no Gaussian assumption; the Gaussian model enters only through the gate and $\qucb\approx1.01$ & agrees with the joint estimate (0.93) and the plug-in over 12 seed partitions (0.89--0.98) & accuracy on real data (no ground truth) \\\addlinespace[2pt]
Alg.~\ref{alg:perproto} & i.i.d.\ items; paired-item CLT; consistent $\hat R$; one shared grid fixed in advance; every model scored alike & simultaneous coverage $1-\alpha$ of all $2P$ gaps, asymptotic in items; family-wise error control of all sign claims & Tables~\ref{tab:verdicts} and~\ref{tab:pooled}; new wordings 1/93 (App.~\ref{app:transfer}) & unseen prompts, harnesses, scoring or tasks; finite samples; rank sets across wordings (0--2 of 6 splits); single configurations (5/93 pairs flip) \\\addlinespace[2pt]
$g(1)$, $g(32)$ & Gaussian G-theory components on 0/1 correctness & item-difficulty reliability of one configuration and of the mean of all 32 (protocols pooled) & beta-binomial check within 0.05 for 14/15 models & protocol-specific coefficients \\\addlinespace[2pt]
Aligned embedding reliability & alignment fit on a disjoint half; three seeds & per-dimension reliability in a common basis & stable across reference seed and fit fraction & architectures other than ResNet-18 \\
\bottomrule
\end{tabular}
\caption{What each result needs, what it gives, what we measured beyond its assumptions, and what it does not cover. Among the guarantees for Algorithm~\ref{alg:gated}, only the gate's level and the coverage levels need Gaussian noise; Proposition~\ref{prop:split} needs none.}
\label{tab:scope}
\end{table*}

Write $\clip(r)$ for $r$ truncated to $[-1,1]$.
A fixed floor on each reliability followed by clipping already keeps the squared error at most 4, because the clipped estimate and the true correlation both lie in $[-1,1]$.
The choice of floor, however, is arbitrary.
Algorithm~\ref{alg:gated} replaces the floor with a significance level $\alpha$.
Each reliability must pass a gate, the $F$-test of $\rho=0$, which fails when $\MSB/\MSW\le\qgate$, the upper-$\alpha$ quantile of $F_{\nu_2,\nu_1}$.
When $\rho=0$, the pivot~\eqref{eq:pivot} makes $\MSW/\MSB$ follow $F_{\nu_1,\nu_2}$, so $\MSB/\MSW$ follows $F_{\nu_2,\nu_1}$.
A failed gate means that the $F$-test cannot tell this reliability from zero.
If either reliability fails the gate, Algorithm~\ref{alg:gated} declines to correct. It returns the uncorrected estimate $\clip(\robs)$, flagged \Declined.
When both gates pass, the algorithm divides $\robs$ by exact-$F$ upper confidence bounds $u_X$ and $u_Y$.
Under the Gaussian model each bound covers its true reliability with probability $1-\alpha$, meaning that it lies at or above it.
The division therefore errs toward a smaller correction.
If $\robs$ averages $m\neq K$ replicates per unit, each bound is first mapped to $m$ replicates with the Spearman--Brown formula~\eqref{eq:rhoK}.
The map is increasing, so a mapped bound covers exactly when the original does (Appendix~\ref{app:cap} gives the cap after mapping).

\begin{algorithm}[t]\small
\caption{Gated disattenuation. Correct only when both reliabilities pass an $F$-test, then divide by upper confidence bounds.}\label{alg:gated}
\begin{algorithmic}[1]
\STATE \textbf{input} replicate matrices for $X$ and $Y$ ($N$ units $\times$ $K$ replicates); $\robs$, computed on $m$ replicates per unit and on seeds separate from these replicates when possible; level $\alpha{=}0.1$
\STATE $\qgate\gets$ upper-$\alpha$ quantile of $F_{\nu_2,\nu_1}$;\ \ $\qucb\gets$ upper-$\alpha$ quantile of $F_{\nu_1,\nu_2}$ \COMMENT{$\nu_1{=}N(K{-}1)$, $\nu_2{=}N{-}1$}
\FOR{$\ell\in\{X,Y\}$}
  \STATE $\rhat_\ell\gets 1-\MSW_\ell/\MSB_\ell$ \COMMENT{$\ICC{1,K}$}
  \IF{$\MSB_\ell/\MSW_\ell\le\qgate$}
    \STATE \textbf{return} $\clip(\robs)$, flagged \Declined \COMMENT{gate fails}
  \ENDIF
  \STATE $u_\ell\gets 1-(1-\rhat_\ell)/\qucb$ \COMMENT{exact-$F$ upper bound on the $K$-replicate reliability}
  \STATE $S_\ell\gets$ Spearman correlation, across units, of within-unit variance and unit mean
  \IF{$|S_\ell|>0.5$}
    \STATE $u^{B}_\ell\gets(1{-}\alpha)$ quantile of $\ICC{1,K}$ over $B{=}400$ resamples of units (all $K$ replicates kept)
    \STATE $u_\ell\gets\max(u_\ell,u^{B}_\ell)$ \COMMENT{max rule}
  \ENDIF
  \IF{$m\neq K$}
    \STATE $u_\ell\gets (m/K)\,u_\ell\,/\,\big(1-(1-m/K)\,u_\ell\big)$ \COMMENT{Spearman--Brown}
  \ENDIF
\ENDFOR
\STATE \textbf{return} $\clip\big(\robs/\sqrt{u_Xu_Y}\big)$
\end{algorithmic}
\end{algorithm}

\begin{figure}[t]
\centering
\includegraphics[width=\columnwidth]{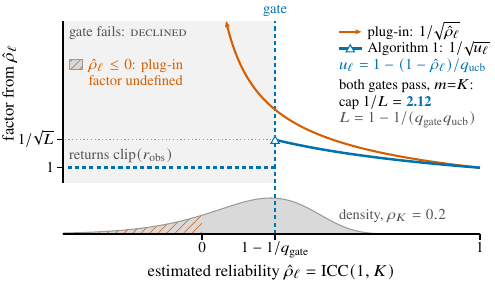}
\caption{Algorithm~\ref{alg:gated} declines wherever the plug-in factor is undefined or grows without bound, and dividing by an upper bound caps the factor that remains. The curves show the factor that one reliability estimate $\rhat_\ell=\ICC{1,K}$ contributes to the correction, computed exactly at $N{=}50$, $K{=}3$, $\alpha{=}0.1$. Left of the gate, where $\MSB_\ell/\MSW_\ell\le\qgate$, Algorithm~\ref{alg:gated} returns $\clip(\robs)$, so its factor is 1 (thick dashed line). Right of it, if the other gate also passes, Algorithm~\ref{alg:gated} divides by an upper bound that is at least $u_\ell$, so each factor is below $1/\sqrt{L}$. For $m{=}K$ the two factors together are therefore at most $1/L$, which is 2.12 here (Proposition~\ref{prop:magnitude}(a)). Bottom: the sampling density of $\rhat_\ell$ under the balanced Gaussian model when the reliability of the 3-replicate mean is 0.2, as in the base row of Table~\ref{tab:mse}. Most of it lies left of the gate, and in that row a gate fails in 87\% of runs.}
\label{fig:gate}
\end{figure}

\paragraph{When the noise grows with the score, divide by the larger bound.}
The exact-$F$ bound assumes that every unit has the same noise variance.
Algorithm~\ref{alg:gated} checks for heteroscedasticity, noise whose variance changes with the mean, through $S_\ell$, the Spearman correlation across units between each unit's variance over replicates and its mean.
The threshold $|S_\ell|>0.5$ is pre-specified from a calibration that computes $S$ against the simulated target values (Appendix~\ref{app:hetero}).
Above it, the algorithm also computes a bound $u^B_\ell$ by resampling units and divides by the larger of the two bounds, which we call the max rule.
The larger bound keeps the cap of Proposition~\ref{prop:magnitude}(a) and covers $\rho_\ell$ at least as often as either bound alone.
Every Algorithm~\ref{alg:gated} result we report uses the exact-$F$ branch, because the simulations have equal noise variances and both \mbox{CIFAR-100} scores have $|S|\le0.38$.
The max rule leaves the gate untouched, so the per-class analysis of Section~\ref{sec:real} declines in the same 3 classes under either branch.

Proposition~\ref{prop:magnitude} states what the gate and the upper bounds guarantee when Algorithm~\ref{alg:gated} accepts, that is, when both gates pass.

\begin{proposition}[Magnitude conservatism]\label{prop:magnitude}\hfill\par\nopagebreak\noindent
Suppose $\qgate\qucb>1$. (a) If both gates pass, then $u_X,u_Y\ge1-1/(\qgate\qucb)>0$ before any Spearman--Brown map, so for $m=K$ the correction factor is at most $(1-1/(\qgate\qucb))^{-1}$. This needs no distributional assumption.
\par\noindent (b) Suppose each $u_\ell$ is a level-$(1-\alpha)$ upper confidence bound for $\rho_\ell$, the reliability of the $m$-replicate measurement $\ell$. Under the balanced Gaussian model this holds for the exact-$F$ bound, and hence for the bound Algorithm~\ref{alg:gated} divides by on either branch, which is never smaller. Without the Gaussian model it holds when the bound used on each branch covers, for example with coverage at least $1-\alpha$ given the branch (Appendix~\ref{app:proof-magnitude}). Then, with probability at least $1-2\alpha$, $|\rgate|\le|\robs|/\sqrt{\rho_X\rho_Y}$, the magnitude of the correction with the true reliabilities. When a gate fails, $\rgate=\clip(\robs)$ meets this bound trivially.
\end{proposition}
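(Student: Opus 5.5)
Part (a) is pure algebra on the quantities Algorithm~\ref{alg:gated} computes. Passing the gate for measure $\ell$ means $\MSB_\ell/\MSW_\ell>\qgate$, so $1-\rhat_\ell=\MSW_\ell/\MSB_\ell<1/\qgate$. The exact-$F$ bound is $u_\ell=1-(1-\rhat_\ell)/\qucb$, and substituting the gate inequality gives $u_\ell>1-1/(\qgate\qucb)$. This lower bound is positive because $\qgate\qucb>1$. On the heteroscedastic branch the max rule replaces $u_\ell$ by $\max(u_\ell,u^B_\ell)\ge u_\ell$, so the lower bound is unchanged. With $m=K$ no Spearman--Brown map is applied. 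The correction factor $1/\sqrt{u_Xu_Y}$ is then at most $1/\sqrt{L^2}=1/L$, where $L=1-1/(\qgate\qucb)$. No distributional assumption enters, since only the realized mean squares are used. I would also note that $u_\ell\le1$: this holds because $\rhat_\ell<1$ almost surely, and it keeps the square root well defined.

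For part (b) I would first justify the premise in the Gaussian case. By the pivot~\eqref{eq:pivot}, $\Pr\big((1-\rhat_K)/(1-\rho_K)\le\qucb\big)=1-\alpha$. On that event $1-\rho_K\ge(1-\rhat_K)/\qucb$, which gives $\rho_K\le u_\ell$. The max rule only enlarges the bound, so it still covers. If $m\ne K$, the Spearman--Brown map is increasing, so the mapped bound covers the $m$-replicate reliability $\rho_\ell$ exactly when the original covers $\rho_K$. For the non-Gaussian version, I would use the law of total probability over the two branches (the exact-$F$ branch and the max-rule branch). If the bound used on each branch covers with conditional probability at least $1-\alpha$ given that branch, then unconditional coverage is at least $1-\alpha$. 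This is the content of the pointer to Appendix~\ref{app:proof-magnitude}.

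The main step is a union bound. Let $A_\ell=\{u_\ell\ge\rho_\ell\}$. Then $\Pr(A_X\cap A_Y)\ge1-2\alpha$, with no independence needed between $X$ and $Y$. Now split on acceptance. If a gate fails, $|\rgate|=|\clip(\robs)|\le|\robs|\le|\robs|/\sqrt{\rho_X\rho_Y}$, since reliabilities lie in $(0,1]$; this case holds surely. If both gates pass, then on $A_X\cap A_Y$ we have $\sqrt{u_Xu_Y}\ge\sqrt{\rho_X\rho_Y}$. Therefore $|\rgate|\le|\robs|/\sqrt{u_Xu_Y}\le|\robs|/\sqrt{\rho_X\rho_Y}$, where clipping can only decrease the magnitude.

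The only delicate point is that the coverage events must refer to the bound actually used. The bound is random through the data-dependent branch choice $|S_\ell|>0.5$, and the event $A_\ell$ has to be defined with the divided-by bound. I would handle this with monotonicity: in the Gaussian case the exact-$F$ event implies the max-rule event. I would also need to assume $\rho_\ell>0$ so that the right-hand side is finite; the case $\rho_\ell=0$ makes it $+\infty$, and the bound then holds trivially.
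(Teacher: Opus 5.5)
Your proposal is correct and follows the paper's proof: part (a) is the same gate-then-max-rule algebra, and part (b) is the same split on acceptance with a union bound, where the max rule and the increasing Spearman--Brown map carry coverage over. The paper needs only $\Pr(G\cap\{u_\ell<\rho_\ell\})\le\alpha$, with $G$ the event that both gates pass, and states the Spearman--Brown equivalence only on $G$, where the pre-map bound lies in $(0,1]$. Your unconditional ``exactly when'' can fail for $m>K$ at negative pre-map bounds, but you only need ``original covers $\Rightarrow$ mapped covers'', which holds because a covering bound is at least $\rho_K>0$.
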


Part (a) is algebra.
A passed gate means $1-\rhat_\ell=\MSW_\ell/\MSB_\ell<1/\qgate$, which gives the bound on $u_\ell$.
Part (b) is a union bound over the two coverage events (Appendix~\ref{app:proof-magnitude}).
The cap in part (a) rules out the failures of Section~\ref{sec:plugin}, because after a passed gate the correction factor is bounded by a constant that depends only on $N$, $K$, $m$ and $\alpha$ (Appendix~\ref{app:cap} and Figure~\ref{fig:gate}).

\paragraph{Algorithm~\ref{alg:gated} returns a bounded point estimate.}
The output always lies in $[-1,1]$, whatever the noise distribution, and Appendix~\ref{app:alpha} varies its only tuning setting, $\alpha$.
Take the base setting of our simulations ($N{=}50$, $K{=}3$, $\alpha{=}0.1$).
There $\qgate\qucb=1.89$, so an accepted correction multiplies $\robs$ by at most 2.12 (Figure~\ref{fig:gate}).
The cap loosens as $N$ grows, because both quantiles approach one and a smaller reliability can then pass the gate (Table~\ref{tab:caps}).

\paragraph{Proposition~\ref{prop:magnitude} bounds the size of the correction.}
With probability at least $1-2\alpha$, the corrected value is no larger in magnitude than the correction with the true reliabilities.
The proposition gives no interval for $\rtrue$, and the uncorrected estimate meets the same bound.
A Fieller-type interval for $\rtrue$ \citep{fieller1954} over-covered in our simulations, but at $\rho{=}0.2$ its mean width was about 1.9, nearly the whole range of a correlation (Appendix~\ref{app:interval}).
Declining to correct is the point-estimate counterpart of such an interval.
We claim no optimality for Algorithm~\ref{alg:gated}.
Table~\ref{tab:scope} lists what each result needs and what it leaves uncovered (its last three rows summarize Sections~\ref{sec:compare} and~\ref{sec:embed}).

\begin{figure*}[t]
\centering
\includegraphics[width=\textwidth]{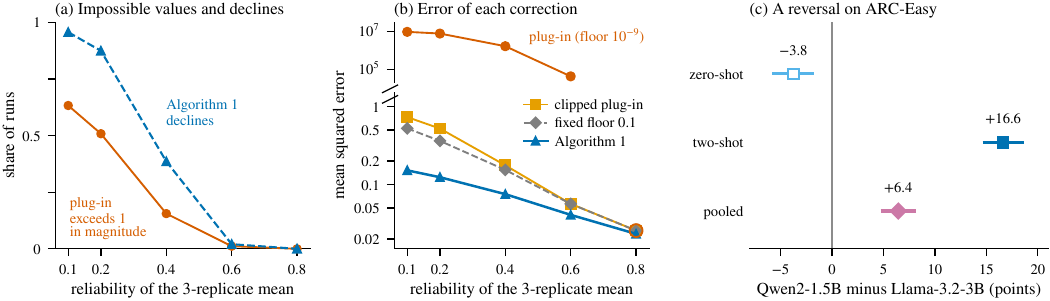}
\caption{Where the plug-in correction returns impossible values, Algorithm~\ref{alg:gated} mostly declines to correct (a, b), and averaging two prompt settings hides a ranking reversal (c). Panels (a) and (b) simulate a true correlation of 0.4 with $N{=}50$ units, $K{=}3$ replicates per unit and Gaussian noise, using 4,000 simulation runs per point. The correlation and the reliabilities use the same replicates, and Algorithm~\ref{alg:gated} uses $\alpha{=}0.1$. (a) Share of runs in which the plug-in exceeds 1 in magnitude, and in which Algorithm~\ref{alg:gated} declines. (b) Mean squared error on a broken log scale. The plug-in floors the product of the two reliabilities at $10^{-9}$. The clipped plug-in cuts its value back to $[-1,1]$, and the fixed floor raises each reliability to at least 0.1 before dividing and clipping. At reliability 0.8 the plug-in (open ring) matches both. (c) ARC-Easy, \mbox{Qwen2-1.5B} against \mbox{Llama-3.2-3B}, with the simultaneous 95\% interval of Algorithm~\ref{alg:perproto} in each setting and the pooled comparison's interval for their average. A gap is the first-named model's accuracy minus the second-named model's, in percentage points.}
\label{fig:teaser}
\end{figure*}

\subsection{Correction pays off once reliability is well estimated}\label{sec:sim}

Simulations let us measure accuracy against a known true correlation $\rtrue$.
Each simulation run computes $\robs$ from the $K$-replicate means of $N$ units and both reliabilities, $\ICC{1,K}$, from the same replicates, as in the per-class example of Section~\ref{sec:real}.
Table~\ref{tab:mse} reports the mean squared error (MSE) of each method (design in Appendix~\ref{app:sim-design}).

\begin{table}[t]\centering\footnotesize\setlength{\tabcolsep}{3.4pt}
\begin{tabular}{@{}lcccccc@{}}
\toprule
Scenario & Uncorr. & Joint & \shortstack{Boot.\\floor} & Alg.~\ref{alg:gated} & \shortstack{Best\\floor$^\dagger$} & \shortstack{Declined\\(\%)} \\
\midrule
Base & \textbf{.12} & .31 & .22 & \textbf{.12} & .14 & 87 \\
Asym.\ $.3/.8$ & \textbf{.06} & .11 & .10 & \textbf{.06} & .06 & 42 \\
$t_3$ noise & \textbf{.12} & .30 & .21 & \textbf{.12} & .13 & 84 \\
$\rtrue{=}0$ & \textbf{.02} & .28 & .19 & .03 & .08 & 87 \\
$\rtrue{=}.8$ & .43 & .45 & \textbf{.29} & .42 & .27 & 86 \\
$N{=}20$ & \textbf{.16} & .33 & .29 & \textbf{.16} & .26 & 94 \\
$N{=}100$ & \textbf{.11} & .27 & .19 & \textbf{.11} & .10 & 75 \\
$N{=}500$ & .11 & .08 & .08 & \textbf{.05} & .04 & 12 \\
\bottomrule
\end{tabular}
\caption{At low reliability Algorithm~\ref{alg:gated} stays within 0.01 of the uncorrected estimate, and with 500 units it more than halves the uncorrected error. Entries are the mean squared error (MSE) against the true correlation over 2,000 simulation runs per row, rounded to two decimals. Base row: $N{=}50$ units, $K{=}3$ replicates, reliability of the 3-replicate mean 0.2 for both measurements, true correlation 0.4, Gaussian noise. Each other row changes one setting. Asym.: reliabilities 0.3 and 0.8. $t_3$: Student-$t$ noise with 3 degrees of freedom. In every row $\robs$ and the reliabilities use the same replicates. Uncorr.: $\clip(\robs)$. Joint: moment estimator from covariances between different replicates, clipped. Boot.\ floor: floor chosen by a parametric bootstrap. Alg.~\ref{alg:gated}: $\alpha{=}0.1$. $^\dagger$Best post-hoc floor: the single floor from a grid of eight (Appendix~\ref{app:baselines}) with the lowest MSE against the truth. It is chosen after seeing the truth, so it is not deployable, and it can exceed the uncorrected MSE. Declined: percentage of runs in which a gate fails. Bold: lowest deployable MSE (ties bold). The plug-in with a $10^{-9}$ floor on the product of the reliabilities is omitted (MSE $10^6$--$10^7$). Monte Carlo s.e.\ of Alg.~\ref{alg:gated} cells $\le0.005$.}
\label{tab:mse}
\end{table}

\paragraph{The plug-in is unusable at low reliability.}
To keep the plug-in finite, we floor the product of the two estimated reliabilities at $10^{-9}$.
Its MSE is still about $10^7$, and it grows as the floor shrinks.
When $\rho_K$ is 0.1, the plug-in exceeds one in magnitude in more than 60\% of runs (Figure~\ref{fig:teaser}(a)).
We call the five rows of Table~\ref{tab:mse} with both reliabilities at 0.2 and at most 50 units the low-reliability rows.
In those rows the joint estimator, a moment estimator built from covariances between different replicates, is undefined or outside $[-1,1]$ before clipping in 48--68\% of runs (Appendix~\ref{app:fulltable}).

\paragraph{At low reliability Algorithm~\ref{alg:gated} mostly declines to correct.}
In the low-reliability rows a gate fails in 84--94\% of runs, and Figure~\ref{fig:gate} (bottom) shows why for the base row.
A declined run returns the uncorrected estimate $\clip(\robs)$ exactly, so any MSE difference between Algorithm~\ref{alg:gated} and that estimate comes from the runs in which both gates pass.
Across all of Table~\ref{tab:mse}, Algorithm~\ref{alg:gated} is never more than 0.01 above the uncorrected estimate.

\paragraph{With 500 units the correction more than halves the error.}
With 500 units, Algorithm~\ref{alg:gated} has MSE 0.048, against 0.105 for the uncorrected estimate.
Among deployable methods, Algorithm~\ref{alg:gated} is best or tied in the asymmetric row and with 100 or 500 units, and the uncorrected estimate is best or tied in the low-reliability rows except at $\rtrue=0.8$.
The bootstrap floor does better when the true correlation is large, with 0.29 against 0.42 at $\rtrue=0.8$.
Every deployable correction other than Algorithm~\ref{alg:gated}, the bootstrap floor included, is at least 0.10 worse than the uncorrected estimate in some row (Appendix~\ref{app:fulltable}).

\paragraph{The gate does most of the work.}
Removing the gate raises the MSE by up to 2.5$\times$ at low reliability (Appendix~\ref{app:ablation}).
Levels from 0.05 to 0.25 give MSE within about 25\% of each other in every cell (Appendix~\ref{app:alpha}).
Every other simulation and the cross-seed correction use $\alpha=0.1$, which we did not tune.

\subsection{Real difficulty scores}\label{sec:real}

\paragraph{The plug-in fails on real scores.}
We return to the difficulty scores of Section~\ref{sec:intro}.
EL2N is taken at epoch 10, and dynamic uncertainty in its single-window form, the variability score of dataset cartography \citep{swayamdipta2020} after a 5-epoch warm-up (Appendix~\ref{app:score-rel}).
Within each \mbox{CIFAR-100} class we correlate two-seed means of the two scores across the class's roughly 500 training examples, with reliabilities $\ICC{1,2}$ from the same two seeds.
Several reliabilities are near zero, and the plug-in exceeds one in magnitude in 70 of the 100 classes (Figure~\ref{fig:overview}(a)).
Algorithm~\ref{alg:gated} returns a bounded value in all 100 classes.
In the 3 classes where either reliability is not significant at $\alpha=0.25$, a looser level used only for this two-seed demonstration, the gate fails and Algorithm~\ref{alg:gated} declines to correct.
The correlation and the reliabilities share the same two seeds, so these values illustrate the failure and are not offered as corrected estimates (Appendix~\ref{app:perclass}).

\paragraph{Dynamic uncertainty is the more reliable score.}
With eight seeds we estimate single-seed reliability on all 50,000 training examples, using the crossed form of Section~\ref{sec:reliability}.
On \mbox{CIFAR-100} it is 0.58 for EL2N and 0.84 for dynamic uncertainty.
On \mbox{CIFAR-10} the two values are 0.55 and 0.79, and on both datasets the seed-subset intervals of the two scores do not overlap (Table~\ref{tab:scorerel}).

\paragraph{Correcting across seeds gives 0.88 on all 50,000 examples.}
On \mbox{CIFAR-100}, the errors of the two scores from the same seed correlate at +0.28.
In a two-seed simulation, an error correlation of 0.3 adds +0.15 to a same-seed estimate (Appendix~\ref{app:errcorr}).
An error correlation of 0.6 adds +0.29.
We therefore compute $\robs$ from EL2N on seed 2 and dynamic uncertainty on seed 3.
The single-seed reliabilities $\ICC{1,1}$ come from seeds 0 and 1.\footnote{Our run plugged these single-seed estimates into the exact-$F$ bound for a two-seed mean, which is conservative. Running Algorithm~\ref{alg:gated} as written, with its Spearman--Brown step, also gives 0.88 (Appendix~\ref{app:pooled}).}
The corrected value itself, as opposed to the coverage of its bounds, assumes errors with mean zero, exchangeable seeds and independent errors in $X$ and $Y$.
This design provides the last, and it also computes $\robs$ on seeds disjoint from the reliability seeds.
Algorithm~\ref{alg:gated} corrects $\robs=0.64$ to 0.88.

The joint estimator agrees at 0.93, and over the 12 ways to choose the $X$ seed and the $Y$ seed from the four, with reliabilities from the other two, the plug-in ranges from 0.89 to 0.98 (Appendix~\ref{app:pooled}).
These partitions share seeds, so their spread is not an error bar.
Real data have no ground truth, so these numbers show agreement, and Table~\ref{tab:mse} is where accuracy is measured.
The Gaussian model enters only through the gate and the quantile $\qucb$, which is 1.01 at $N=50{,}000$, so the upper bounds nearly equal the point estimates.

\paragraph{The \mbox{CIFAR-100} scores pass the noise check.}
On \mbox{CIFAR-100}, the variance--mean correlation $S$ of Section~\ref{sec:gated}, computed over the eight seeds, is +0.15 for EL2N and $-$0.38 for dynamic uncertainty.
Negating every score gives the mirror-image design with $-S$ and the same ICC and bounds, so only $|S|$ matters, and both values lie in the range where exact-$F$ bounds covered near the nominal 0.90 (Appendix~\ref{app:hetero}).
The strongest value, +0.74, is for EL2N on \mbox{CIFAR-10}.
In simulations with heteroscedasticity this strong, exact-$F$ coverage fell to 0.79--0.84, and a unit-bootstrap bound raised it to 0.87.
Neither real-data finding depends on exact-$F$ coverage, since the 70 impossible values are a count and 0.88 needs only the conditions listed above.

\subsection{When to correct}\label{sec:whento}

The evidence in Sections~\ref{sec:gated}--\ref{sec:real} suggests six steps.
\begin{enumerate}\setlength{\itemsep}{0pt}\setlength{\parsep}{0pt}
\item Compute $\robs$ across seeds when $K\le2$ or when errors from the same run correlate strongly (Appendix~\ref{app:errcorr}).
\item Gate both reliabilities at $\alpha=0.1$. If either fails, report the uncorrected estimate, flagged \Declined.
\item Check the variance--mean correlation $S$. Use the max rule above \mbox{$|S|=0.5$}.
\item Expect the correction to change the answer only when reliability is precisely estimated (Table~\ref{tab:mse}). At low reliability, expect the uncorrected estimate back.
\item For an interval for $\rtrue$, use a Fieller-type interval \citep{fieller1954}, which over-covered in our simulations but is wide at low reliability.
\item At large true correlations the bootstrap floor did better, as in the row $\rtrue=0.8$ of Table~\ref{tab:mse}.
\end{enumerate}

\section{Comparing models across prompt settings}\label{sec:compare}

A ranking of models must name its prompt setting (Figures~\ref{fig:overview}(b) and~\ref{fig:teaser}(c)).

\subsection{What we compare}\label{sec:estimand}

We compare 15 open Instruct checkpoints from four families, Qwen2 and Qwen2.5 \citep{yang2024a,yang2024b}, SmolLM2 \citep{benallal2025}, Falcon3 \citep{falcon2024} and \mbox{Llama-3.2} \citep{grattafiori2024,meta2024}, ranging from 135M to 7B parameters (Table~\ref{tab:models} lists them).
A model's answer is the option letter with the largest next-token logit after the model's own chat template, so scoring is deterministic.
Our main benchmark is ARC-Easy \citep{clark2018}, and its items are the 2,365 four-option test questions.
Eleven of the models are also compared on HellaSwag \citep{zellers2019}, MMLU \citep{hendrycks2021} and OpenBookQA \citep{mihaylov2018}, with 2,365 items each.
Appendix~\ref{app:models} describes these subsets and explains the choice of models, two of which score near chance and serve as negative controls.

Every model answers every item under the same prompt grid, 4 instruction wordings crossed with 4 option orders, so each protocol has 16 configurations.
In the two-shot protocol the same two solved examples are written inline before every question.
Following Section~\ref{sec:protocols}, wording and option order are replicates, and the number of solved examples is a protocol with its own target.

Let $C_A[c,\pi,i]$ equal 1 when model $A$ answers item $i$ correctly under configuration $c$ of protocol $\pi$, and 0 otherwise.
For a pair $p=(A,B)$, the per-item gap $d^{p,\pi}_i$ averages $C_A-C_B$ over the 16 configurations of protocol $\pi$.
The target is the mean of this gap over the population of items,
\begin{equation}
\begin{split}
\Delta_{p,\pi}&=\E_i\big[d^{p,\pi}_i\big],\\
d^{p,\pi}_i&=\operatorname{mean}_c\{C_A[c,\pi,i]-C_B[c,\pi,i]\},
\end{split}
\label{eq:delta}
\end{equation}
and its estimate $\hat\Delta_{p,\pi}$ is the mean of $d^{p,\pi}_i$ over the benchmark's items.
A gap is the first-named model's accuracy minus the second-named model's, in percentage points.
Each benchmark forms one comparison family, which holds every pair of its models in both protocols, so $P$ pairs give $2P$ contrasts.
ARC has 105 pairs, and each of the other benchmarks has 55.
The three 7B models and \mbox{Qwen2.5-3B} are included only in the ARC comparison family.

\subsection{Algorithm~\ref{alg:perproto} and its three verdicts}\label{sec:alg2}

Algorithm~\ref{alg:perproto} builds one interval for every pair and protocol and reads a verdict from their signs (Figure~\ref{fig:alg2}).
All $2P$ intervals on a benchmark share one critical value $\qalpha$ from the max-$t$ method \citep{hothorn2008}.
The method treats the $2P$ standardized gaps as one Gaussian vector whose correlation $\hat R$ is estimated from the per-item gaps.
The value $\qalpha$ is the $(1-\alpha)$ quantile of the vector's largest absolute entry, so with probability about $1-\alpha$ every interval covers its gap at the same time.
Such intervals are called simultaneous, and we use $\alpha=0.05$ throughout.

\begin{algorithm}[t]\small
\caption{Per-protocol simultaneous comparison. A direction is \Certified{} only if every protocol's simultaneous interval excludes zero with the same sign; the family-wise level is $\alpha$, asymptotically in items drawn from an item population, for the fixed prompt grid.}\label{alg:perproto}
\begin{algorithmic}[1]
\STATE \textbf{input} correctness $C_m[c,\pi,i]\in\{0,1\}$ for $M$ models, configurations $c$, protocols $\pi$ and $N$ shared items; level $\alpha$
\FOR{each pair $p=(A,B)$ and protocol $\pi$}
  \STATE $d^{p,\pi}_i\gets\operatorname{mean}_c\{C_A[c,\pi,i]-C_B[c,\pi,i]\}$
  \STATE $\hat\Delta_{p,\pi}\gets\operatorname{mean}_i d^{p,\pi}_i$;\ \ $\widehat{\mathrm{se}}_{p,\pi}\gets\operatorname{sd}_i(d^{p,\pi}_i)/\sqrt N$
\ENDFOR
\STATE $\hat R\gets$ correlation matrix of the centered vectors $d^{p,\pi}_i-\hat\Delta_{p,\pi}$
\STATE $\qalpha\gets(1{-}\alpha)$ quantile of $\max_j|Z_j|$, $Z\sim\mathcal N(0,\hat R)$ \COMMENT{max-$t$}
\STATE $I_{p,\pi}\gets\hat\Delta_{p,\pi}\pm\qalpha\,\widehat{\mathrm{se}}_{p,\pi}$ for every $(p,\pi)$
\FOR{each pair $p$}
  \IF{every $I_{p,\pi}$ excludes $0$, all with the same sign}
    \STATE $V_p\gets$ \Certified{} (with that sign)
  \ELSIF{every $I_{p,\pi}$ excludes $0$ and the signs differ}
    \STATE $V_p\gets$ \ProtDep
  \ELSE
    \STATE $V_p\gets$ \Refused
  \ENDIF
\ENDFOR
\STATE \textbf{return} the intervals $I_{p,\pi}$, the verdicts $V_p$, and the rank sets they induce
\end{algorithmic}
\end{algorithm}

\begin{figure}[t]
\centering
\includegraphics[width=\columnwidth]{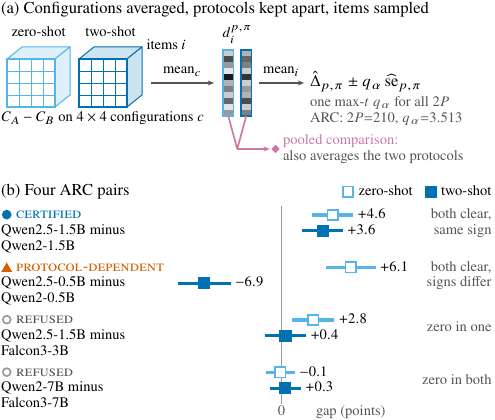}
\caption{Algorithm~\ref{alg:perproto} averages over prompt configurations, keeps the protocols apart and reads one verdict per pair from the signs of its simultaneous intervals. (a) For a pair $p=(A,B)$, the correctness difference on each item is averaged over the 16 configurations of a protocol, 4 wordings $\times$ 4 option orders, giving the per-item gap $d^{p,\pi}_i$. Its mean and standard error over items give one interval per pair and protocol. All $2P$ intervals on a benchmark share one max-$t$ critical value $\qalpha$, which controls the family-wise error asymptotically in the number of items, for the fixed prompt grid. The pooled comparison also averages over the two protocols, so it cannot return \ProtDep{}. (b) Four ARC pairs with their simultaneous 95\% intervals, drawn to scale, in percentage points. A pair is \Certified{} when both intervals exclude zero with the same sign, \ProtDep{} when both exclude zero with opposite signs, and \Refused{} when zero lies inside one interval or both.}
\label{fig:alg2}
\end{figure}

We treat the items as an independent sample from a population of items.
Coverage is then asymptotic in the number of items, under a central limit theorem for the paired per-item gaps and a consistent $\hat R$.
The coverage holds conditionally on the fixed grid and needs no model of prompt noise, and Appendix~\ref{app:alg2-proc} gives the numerical details.

We call a direction \emph{certified} when, in every protocol, its simultaneous interval excludes zero with the same sign. The family-wise error level is $\alpha$, asymptotically in the number of items, for items drawn independently from an item population and the fixed prompt grid, with every model scored the same way on the same items. It does not extend to unseen prompts or other harnesses.
\ProtDep{}: every protocol's interval excludes zero, and the signs differ.
We also call such a pair a reversal.
\Refused{}: in at least one protocol the simultaneous interval contains zero, so no winner is certified at level $\alpha$. A refusal is not evidence of a tie, and the intervals are still reported.
A pair is resolved in a protocol when its interval there excludes zero.
A method names a stable winner for a pair when it reports one model as better without reference to protocol.
Algorithm~\ref{alg:perproto} does so only for a \Certified{} pair, and the pooled comparison does so whenever its interval for the average gap excludes zero.
A \Refused{} pair is either unresolved in both protocols or resolved in only one protocol (Figure~\ref{fig:alg2}(b)).
Algorithm~\ref{alg:perproto}'s \Refused{}, a verdict about a pair, is unrelated to Algorithm~\ref{alg:gated}'s \Declined{}, a decision not to correct.
Table~\ref{tab:scope} lists what Algorithm~\ref{alg:perproto} needs and what it leaves uncovered.

The pair intervals also give each model a rank set within each protocol (Appendix~\ref{app:alg2-proc}).
These rank sets are simultaneous only on the fixed grid, and across held-out wordings they covered the ranks of all models in only 0--2 of the 6 splits (Appendix~\ref{app:transfer}).

\paragraph{How Algorithm~\ref{alg:perproto} relates to standard methods.}
\looseness=1 \mbox{Max-$t$} intervals over the $2P$ contrasts are standard \citep{hothorn2008,westfall1993}, and we add the per-protocol target and the three-verdict rule.
Because all intervals cover at once, every \Certified{} and \ProtDep{} sign claim keeps the family-wise error rate (FWER), the chance of at least one false claim on a benchmark, at $\alpha$ asymptotically.
Requiring every protocol to agree is intersection--union logic \citep{berger1982}, and flagging opposite signs resembles a test for a qualitative interaction \citep{gailsimon1985}.
Appendix~\ref{app:alg2-mult} places Bonferroni and \v{S}id\'ak intervals \citep{sidak1967}, resampling max-$t$ and step-down methods \citep{holm1979} under the same verdict rule.
\subsection{Results}\label{sec:results}

\begin{table}[t]\centering\footnotesize\setlength{\tabcolsep}{1.8pt}
\begin{tabular}{@{}lrrrr@{}}
\toprule
 & ARC & HellaSwag & MMLU & OpenBookQA \\
\midrule
models / pairs & 15/105 & 11/55 & 11/55 & 11/55 \\
$\qalpha$ & 3.513 & 3.336 & 3.344 & 3.380 \\
\Certified & 93 & 42 & 43 & 43 \\
\ProtDep & 5 & 4 & 3 & 4 \\
\Refused & 7 & 9 & 9 & 8 \\
\midrule
\multicolumn{5}{@{}l}{\emph{Measured error rates (\%)}} \\
held-out wording & 2.0 & 1.6 & 2.4 & 5.0 \\
held-out order & 0.0 & 0.0 & 0.4 & 0.0 \\
perm.-null FWER & 6.5 & 6.0 & 7.0 & 5.0 \\
self-comp., Alg.~2 & 5.2 & 1.6 & 1.5 & 5.4 \\
self-comp., pooled & 19.4 & 13.3 & 11.5 & 30.5 \\
\bottomrule
\end{tabular}
\caption{Most pairs are certified, and the measured error rates of Algorithm~\ref{alg:perproto} are low. Level $\alpha{=}0.05$ and 2,365 items per benchmark. $\qalpha$: max-$t$ critical value. Held-out wording (order): share of certified directions whose sign flips on the two wordings (orders) left out of calibration. Perm.-null FWER: family-wise error rate under a permutation null, the share of 200 label-permuted benchmarks in which some pair is falsely resolved (Monte Carlo s.e.\ \mbox{1.5--1.8\%}). Self-comp.: split-grid self-comparison, a model compared with itself on two disjoint halves of its grid, where any stable winner is an error. The last row applies the same test to the pooled comparison.}
\label{tab:verdicts}
\end{table}

\paragraph{Most pairs are certified, and measured error rates are low.}
Algorithm~\ref{alg:perproto} certifies 221 of the 270 pairs across the four benchmarks (Table~\ref{tab:verdicts}).
To see how often it names a winner where none exists, we build null benchmarks by shuffling the model labels within each item.
At least one pair is falsely resolved in 5.0--7.0\% of them (Monte Carlo s.e.\ 1.5--1.8\%), consistent with the nominal 5\% within Monte Carlo error.
In a separate null simulation of Algorithm~\ref{alg:perproto} alone, over 2,000 simulated benchmarks with binary items, the family-wise error rate is 4.3\% (s.e.\ 0.5\%).
The 44 pairs certified on two or more benchmarks agree in direction wherever they are certified.

\paragraph{Few certified directions change sign on held-out prompts.}
We refit Algorithm~\ref{alg:perproto} on two of the four wordings and check each certified direction on the other two.
The sign flips in 34 of 1,324 checks, or 1.6--5.0\% depending on the benchmark.
With option orders held out instead, 1 of 1,305 fails.
We also wrote four new ARC wordings after all verdicts were fixed, and 1 of the 93 certified ARC directions reverses on them.
These are measured error rates, outside the guarantee.
Certification concerns the mean direction within each protocol, and 5 of the 93 certified ARC pairs flip on some single configurations (Appendix~\ref{app:verdicts}).

\paragraph{Two-shot prompting reverses sixteen rankings.}
Sixteen pairs are \ProtDep{}, so the zero-shot and two-shot protocols name different winners (Appendix~\ref{app:pooled-baseline}, Figure~\ref{fig:reversals}).
On ARC, \mbox{Qwen2.5-0.5B} leads \mbox{Qwen2-0.5B} by 6.1 points at zero-shot and trails it by 6.9 points with two shots (Figure~\ref{fig:alg2}(b), intervals in Appendix~\ref{app:verdicts}).
Each of the other 15 reversals pairs \mbox{Llama-3.2-3B} with another model.
Averaged over its 16 configurations in each protocol, the ARC accuracy of \mbox{Llama-3.2-3B} is 0.885 at zero-shot and 0.677 at two-shot.
On HellaSwag, MMLU and OpenBookQA, every reversal also involves \mbox{Llama-3.2-3B}, with the same sign pattern against \mbox{1--3B} models of the other families.

\paragraph{On ARC, the \mbox{Llama-3.2-3B} drop depends on how the examples are formatted.}
Across the four ARC wordings, the inline two-shot block costs \mbox{Llama-3.2-3B} between 10 and 51 points.
Given the same two examples as earlier chat turns, its two-shot accuracy with the options in their original order rises from 0.67 to 0.89, averaged over the four wordings.
\mbox{Falcon3-3B} barely changes (Figure~\ref{fig:shotformat}, Appendix~\ref{app:shotformat}).
The control re-evaluates four models on ARC only.

\begin{table}[t]\centering\footnotesize\setlength{\tabcolsep}{4pt}
\begin{tabular}{@{}lcc@{}}
\toprule
 & Alg.~2 & Pooled \\
\midrule
\multicolumn{3}{@{}l}{\emph{Real data, 270 pairs}} \\
pairs with a stable winner & 221 & 254 \\
reversals given a stable winner & 0/16 & 15/16 \\
split-grid self-comparison, false winners & 3.6\% & 18.7\% \\
\midrule
\multicolumn{3}{@{}l}{\emph{Simulated benchmarks, 1,000 per scenario}} \\
stable rankings: correct verdict / resolved & 89.5\% & $\approx$100\% \\
near-ties: correct verdict / resolved & 4.5\% & 33.3\% \\
reversals detected & 68.3\% & -- \\
reversals given a false stable winner & 0\% & 64.6\% \\
simulated-null FWER & 5.1\% & 4.3\% \\
\bottomrule
\end{tabular}
\caption{Pooling buys power and hides reversals. Alg.~2: per-protocol comparison. Pooled: the pooled comparison, which averages the two protocols per item before the same \mbox{max-$t$} step. Simulated benchmarks have six models and 2,365 binary items. For stable rankings and near-ties, the Alg.~2 column gives the share of pairs with the correct verdict, and the pooled column the share resolved. --: the pooled comparison cannot return \ProtDep{}. Simulated-null FWER: family-wise error rate under the global null, in which all models are equal in both protocols (Monte Carlo s.e.\ about 0.7\%).}
\label{tab:pooled}
\end{table}

\paragraph{Pooling hides reversals.}
The pooled comparison averages each item's gap over the two protocols and then applies the same paired max-$t$ intervals.
On the real benchmarks it resolves 254 of the 270 pairs.
Among them are 15 of the 16 reversals, each given a stable winner.
Simulated benchmarks with planted reversals show what this costs (Table~\ref{tab:pooled}).
There the pooled comparison names a stable winner for 64.6\% of the reversed pairs, which is always wrong, and Algorithm~\ref{alg:perproto} never does, although both have similar error rates under the null.
Algorithm~\ref{alg:perproto} detects 68.3\% of the reversals and returns \Refused{} for the rest.

\paragraph{Separating protocols costs some power.}
When the protocols agree, the pooled comparison finds more winners.
On simulated benchmarks with stable rankings, Algorithm~\ref{alg:perproto} returns the correct verdict for 89.5\% of pairs, and the pooled comparison resolves about 100\%.
When the true gaps are near zero, the rates are 4.5\% and 33.3\%.
Appendix~\ref{app:pooled-baseline} discusses three variants, none of which we ran, that could recover some of this power at a cost.

\paragraph{Refused pairs.}
Seven ARC pairs are \Refused{}.
Two are unresolved in both protocols, and five are resolved in only one, for example \mbox{Qwen2.5-1.5B} against \mbox{Falcon3-3B} (+2.8 and +0.4 points, Figure~\ref{fig:alg2}(b)).
Across the four benchmarks 33 pairs are \Refused{}, and Algorithm~\ref{alg:perproto} reports their intervals like those of any other pair (Appendix~\ref{app:verdicts} lists the ARC pairs).

\paragraph{One prompt configuration measures item difficulty poorly.}\label{sec:gcoef}
With configurations as replicates, the crossed item $\times$ configuration form of G-theory (Section~\ref{sec:reliability}) gives $g(1)$, the reliability of item difficulty under one configuration, and $g(32)$, that of its mean over all 32.
Because $g(32)$ averages over both protocols, it pools a facet that Section~\ref{sec:protocols} treats as a protocol.
We use it only to show how much averaging over configurations helps, and we leave protocol-specific coefficients open (Table~\ref{tab:scope}).
For every model, $g(1)$ is at most 0.65 on ARC and at most 0.62 on the other benchmarks (Appendix~\ref{app:gtheory}, Figure~\ref{fig:gcoef}).
On HellaSwag, MMLU and OpenBookQA, all 24 model--benchmark combinations with $g(1)\ge0.2$ reach $g(32)\ge0.91$.

\subsection{One grid shared by all models}\label{sec:harness}
With the same items and identical configurations fixed in advance, the intervals of Algorithm~\ref{alg:perproto} cover the gaps on the configurations that all models share.
A configuration that some models never see adds a bias that more items cannot shrink.
Bounding it would need configurations sampled at random from a stated population, or an assumed cap on their effects.
We therefore state no theorem for comparisons across evaluation harnesses, meaning scoring pipelines that differ in template, shot format, scoring rule or grid.
We measure the bias instead with a split-grid self-comparison, a deliberate stress test outside the guarantee.
This test compares each model's scores on one half of its 16 wording-and-order configurations with its scores on the other half, so any winner it finds is false.
Algorithm~\ref{alg:perproto} names a false stable winner in 3.6\% of splits (342 of 9,600), and the pooled comparison in 18.7\%.
These false winners are more frequent for models whose accuracy varies more across configurations (Figure~\ref{fig:selfcomp}).
Any grid can serve, provided it is fixed in advance and shared by all models.
\section{Embeddings: align coordinates before measuring reliability}\label{sec:embed}

A common audit of representation stability correlates each embedding dimension across training seeds.
For the penultimate features of a \mbox{ResNet-18}, this audit mostly measures the arbitrary choice of coordinates.
The training loss sees these features only through a linear output layer, so many bases give nearly the same predictions and nothing puts the same feature in dimension $j$ of every seed.
Appendix~\ref{app:embed} explains why the choice of basis is only approximately free.

\paragraph{After alignment on held-out images, the features are reliable.}
We train three \mbox{ResNet-18} \citep{he2016} seeds on \mbox{CIFAR-100} \citep{krizhevsky2009}, with test images as units and seeds as replicates.
Per-dimension reliability correlates one feature dimension across images under two seeds, averaged over dimensions and seed pairs.
Before alignment, this reliability is 0.003.
Orthogonal Procrustes alignment finds the rotation or reflection that best maps one seed's features onto another's \citep{schonemann1966}.
We fit it on a random half of the test images and measure on the other half, where the aligned features reach 0.813 (Figure~\ref{fig:overview}(c)).
This value barely moves with the reference seed or with the share of images used for the fit (Appendix~\ref{app:embed}).

\paragraph{Measures that ignore the coordinates agree.}
Distances between pairs of images have a cross-seed reliability of 0.823.
Linear centered kernel alignment (CKA), a similarity index between two sets of features \citep{kornblith2019}, gives an agreement of 0.852.
\mbox{CIFAR-10} shows the same pattern (Appendix~\ref{app:embed}, Table~\ref{tab:embed}).
We therefore measure embedding reliability after an alignment fitted on held-out images, or on quantities such as distances that do not depend on the basis.
\section{Related work}\label{sec:related}

\paragraph{Training randomness and benchmark variance.}
Seeds and pipeline choices move reported results \citep{picard2021,bouthillier2021,sellam2022}, and equally accurate pipelines can return predictors that behave differently \citep{damour2022}.
\citet{dror2018} give a practical guide to significance testing in NLP.
These works do not report the reliability of a measurement across seeds or prompts, and we ask when that reliability supports a correction or a comparison.

\paragraph{Measurement theory.}
Correcting correlations for measurement error is classical \citep{spearman1904,fuller1987,carroll2006}.
\citet{searle1992} derive the probability of a negative variance-component estimate, and \citet{zimmerman1997} show by simulation that corrected correlations can exceed one.
Our gate is a preliminary test in the sense of \citet{bancroft1944}, and positive-part estimators \citep{baranchik1964} and Fieller's interval \citep{fieller1954} serve as comparisons.
\citet{jacobs2021} bring measurement theory to ML through construct validity.

\paragraph{Training-dynamics scores.}
Forgetting events \citep{toneva2019}, dataset cartography \citep{swayamdipta2020}, EL2N \citep{paul2021} and dynamic uncertainty \citep{he2024} score training examples from a single run and are used to prune datasets \citep{sorscher2022}.
Their reliability across seeds is usually assumed, and we measure it.

\paragraph{Item response theory and multi-prompt evaluation.}
\looseness=-1 Item response theory (IRT) models the difficulty of benchmark items and has been used to rank leaderboard entries \citep{martinez2019,rodriguez2021}.
PromptEval \citep{polo2024}, a multi-prompt variant, models item and prompt parameters and efficiently estimates each model's performance distribution across many prompt templates.
We target paired per-protocol gaps on shared items, with three verdicts and family-wise error control.

\paragraph{Prompt sensitivity and evaluation statistics.}
LLM accuracy shifts with prompt formatting \citep{sclar2024}, paraphrased instructions \citep{mizrahi2024} and the position of the answer option \citep{zheng2024}.
\citet{miller2024} gives standard errors for scores from one evaluation setting, including paired standard errors for the difference between two models.
\citet{avelar2026} build prompt-specific confidence sets for ranks under a contextual Bradley--Terry model of human preferences.
Shared harnesses \citep{liang2023,gao2023} standardize prompt settings across models but do not test rankings against them.

\paragraph{Simultaneous inference and representation similarity.}
Betting gives finite-sample intervals for bounded means \citep{waudbysmith2024}, and \citet{mogstad2024} give confidence sets for ranks.
Features from different seeds match only partly unit by unit \citep{li2016}, one recipe can yield models that generalize differently \citep{mccoy2020}, and CKA \citep{kornblith2019}, shape metrics \citep{williams2021} and their statistical tests \citep{ding2021} quantify agreement between representations.
These indices are not calibrated as reliabilities, so Section~\ref{sec:embed} uses them only as corroboration.

\section{Limitations}\label{sec:limits}

\begin{itemize}\setlength{\itemsep}{0pt}\setlength{\parsep}{0pt}
\item \textbf{Algorithm~\ref{alg:gated}.}
The gate's level, the coverage of the upper bounds and Proposition~\ref{prop:magnitude}(b) assume the balanced Gaussian one-way model.
Strong heteroscedasticity lowers the coverage of the exact-$F$ bound, and we report no MSE or coverage for the max rule that widens it.
Errors correlated within one training run are handled only by computing $\robs$ across runs.

\item \textbf{Algorithm~\ref{alg:perproto}.}
Coverage is asymptotic in the number of items, and requiring evidence in every protocol costs power when the protocols agree (Table~\ref{tab:pooled}).
The guarantee needs one prompt grid shared by all models (Section~\ref{sec:harness}), and we wrote ours, with 16 configurations in each of two protocols.
Our held-out wordings test transfer only within this multiple-choice format.
The other three benchmarks include 11 of the 15 models and use fixed subsets of 2,365 items, and OpenBookQA items come from its training pool (Appendix~\ref{app:models}).

\item \textbf{Closed models.}
We evaluate open checkpoints only.
Not every API exposes option-letter logits or a pinned, inspectable chat template, and an unpinned model alias can change partway through a grid.
Algorithm~\ref{alg:perproto} needs only 0/1 correctness, so API models fit when every model is scored the same way on one grid.
Mixing scoring methods makes the comparison cross-harness, which the guarantee does not cover.

\item \textbf{Vision.}
These studies are proofs of concept with \mbox{ResNet-18} on \mbox{CIFAR-10} and \mbox{CIFAR-100}, using two to eight seeds (Table~\ref{tab:analyses}).
Eight seeds already separate the reliability of EL2N from that of dynamic uncertainty.
Distance and CKA checks run on any encoder, but we have not tested the size of the coordinate effect beyond \mbox{ResNet-18}.

\item \textbf{Reproducibility.}
The per-class count of Section~\ref{sec:real} and the embedding values of Table~\ref{tab:embed} need \mbox{CIFAR-100} labels and features regenerated from the public dataset (Appendix~\ref{app:code}).

\item \textbf{Populations.}
Every reliability we report refers to the stated population of units and replicates.
A different item pool, training recipe or prompt set has its own reliability.
\end{itemize}
\section{Conclusion}\label{sec:concl}
Measurements computed from trained models carry noise from training seeds and prompts. Before correcting a correlation for this noise, check that both reliabilities are clearly above zero, and then divide by their upper confidence bounds. Compare models within each prompt setting, on one grid of prompt configurations that every model shares. When the data cannot support more, a declined correction or a refused pair is the result to report.

\appendix\raggedbottom\renewcommand{\dbltopfraction}{0.85}
\makeatletter\setlength{\@dblfptop}{0pt}\makeatother
\counterwithin{table}{section}\counterwithin{figure}{section}
\section{Notation, vocabulary, and which reliability each analysis uses}\label{app:notation}

\subsection{Notation}

Table~\ref{tab:notation} lists the symbols that appear in more than one section. Each keeps the meaning given here throughout the paper and the appendix.

\begin{table*}[t]\centering\footnotesize\setlength{\tabcolsep}{3pt}
\def\rag{\raggedright\let\\\tabularnewline}
\begin{minipage}[t]{0.49\textwidth}\centering
\begin{tabular}[t]{@{}p{2.2cm}p{6.2cm}@{}}
\toprule
Symbol & Meaning \\
\midrule
\multicolumn{2}{@{}l}{\textit{Measurements and reliability}} \\
$i=1,\dots,N$ & \rag unit \\
$s=1,\dots,K$ & \rag replicate (seed or configuration); never the protocol \\
$\hat\theta_{i,s}$ & \rag measurement $\thetastar_i+e_{i,s}$, target value plus replicate noise \\
$\tau^2$, $\sigma^2$ & \rag variance of $\thetastar$ across units; variance of $e$ \\
$\rho_1$ & \rag single-replicate reliability, $\tau^2/(\tau^2+\sigma^2)$ \\
$\rho_K$ & \rag reliability of a $K$-replicate mean, $\tau^2/(\tau^2+\sigma^2/K)=K\rho_1/(1+(K-1)\rho_1)$ (Spearman--Brown) \\
$\MSW$, $\MSB$ & \rag mean square within units (how much the replicates of one unit disagree); mean square between units \\
$\rhat_K$ & \rag one-way estimate $\ICC{1,K}=1-\MSW/\MSB$ \citep{shrout1979} \\
$\rhat_1$ & \rag one-way estimate $\ICC{1,1}=(\MSB-\MSW)/(\MSB+(K-1)\MSW)$ \\
$\nu_1$, $\nu_2$ & \rag degrees of freedom $N(K-1)$ and $N-1$ of the $F$ pivot \eqref{eq:pivot} \\
\bottomrule
\end{tabular}
\end{minipage}\hfill
\begin{minipage}[t]{0.49\textwidth}\centering
\begin{tabular}[t]{@{}p{2.2cm}p{6.2cm}@{}}
\toprule
Symbol & Meaning \\
\midrule
\multicolumn{2}{@{}l}{\textit{Correction}} \\
$\robs$, $\rtrue$ & \rag observed correlation; true correlation of the target values \\
$\rplug$, $\rgate$ & \rag plug-in correction $\robs/\sqrt{\rhat_X\rhat_Y}$; output of Algorithm~\ref{alg:gated} \\
$\ell\in\{X,Y\}$ & \rag which measurement \\
$\epsilon$ & \rag floor in $\max(\rhat,\epsilon)$, and nothing else \\
$\qgate$, $\qucb$ & \rag upper-$\alpha$ quantiles of $F_{\nu_2,\nu_1}$ and $F_{\nu_1,\nu_2}$ \\
$u_\ell$, $u^F_\ell$, $u^B_\ell$ & \rag bound Algorithm~\ref{alg:gated} divides by; exact-$F$ bound $1-(1-\rhat_\ell)/\qucb$; unit-bootstrap bound \\
$S_\ell$ & \rag Spearman correlation, across units, of within-unit variance and unit mean \\
$m$ & \rag replicates per unit behind $\robs$ \\
\midrule
\multicolumn{2}{@{}l}{\textit{Model comparison}} \\
$\alpha$ & \rag significance level only; the AR(1) coefficient in Appendix~\ref{app:persist} is $\phi$ \\
$A$, $B$; $p$ & \rag models; a pair $p=(A,B)$ \\
$P$; $M$ & \rag number of pairs; number of models \\
$\pi$ & \rag protocol, zero-shot or two-shot \\
$c$ & \rag configuration (16 per protocol) \\
$C_A[c,\pi,i]$ & \rag correctness, $1$ if model $A$ answers item $i$ correctly under $(c,\pi)$ and $0$ otherwise \\
$d^{p,\pi}_i$ & \rag per-item gap, averaged over the protocol's 16 configurations \\
$\Delta_{p,\pi}$, $\hat\Delta_{p,\pi}$ & \rag target mean gap; its estimate ($A$ minus $B$, in accuracy points) \\
$\hat R$ & \rag correlation of the $2P$ centered per-item gaps \\
$\qalpha$ & \rag max-$t$ critical value \\
$g(1)$, $g(32)$ & \rag G-theory reliability of item difficulty for one configuration and for the mean of the 32 configurations (both protocols pooled) \\
\bottomrule
\end{tabular}
\end{minipage}
\caption{Each symbol keeps one meaning throughout the paper. Replicates are indexed by $s$ alone, and $e$ is the replicate noise.}
\label{tab:notation}
\end{table*}

\subsection{Vocabulary}

Each object has one name. The definitions below are the ones used in the text, tables and figures.

\begin{description}\setlength{\itemsep}{1pt}
\item[Measurement.] A number computed from a trained model for each unit, such as a difficulty score, an embedding coordinate or a model's correctness on one item. ``Score'' refers only to difficulty scores.
\item[Unit.] The thing measured, such as a training example, a benchmark item or an image.
\item[Replicate.] One instance of the randomness that the target averages over, meaning one training seed or one prompt configuration within a protocol. In Appendix~\ref{app:persist} a replicate is one readout of a simulated trajectory.
\item[Target value and estimand.] The target value $\thetastar_i$ is the mean of unit $i$'s measurement over the replicate population. The estimand is the population quantity we want to estimate, such as the true correlation $\rtrue$ or a target mean gap $\Delta_{p,\pi}$.
\item[Protocol.] A setting that the reported result names and that we report separately (Section~\ref{sec:protocols}). Our protocols are zero-shot and two-shot, the latter with the same two solved examples written inline before every question.
\item[Configuration.] One wording of the instructions combined with one order of the answer options.
\item[Prompt grid.] The 32 configurations, 16 per protocol, from 4 wordings and 4 option orders.
\item[Evaluation harness.] The whole scoring pipeline, made of the template, the shot format, the scoring rule and the grid. We use the term only when two pipelines are compared.
\item[Split-grid self-comparison.] A model compared with itself on two disjoint halves of its 16 configurations. Any stable winner in this comparison is an error.
\item[Simulation run.] One simulated dataset in the disattenuation studies of Section~\ref{sec:sim} and Appendix~\ref{app:sim}.
\item[Simulated benchmark.] One whole simulated benchmark in the studies of Algorithm~\ref{alg:perproto} (Appendix~\ref{app:alg2-sim}).
\item[Comparison family.] All $2P$ contrasts on one benchmark, one per pair and protocol. Algorithm~\ref{alg:perproto} covers the family with simultaneous intervals that share the critical value $\qalpha$.
\item[\Certified.] We call a direction \emph{certified} when, in every protocol, its simultaneous interval excludes zero with the same sign. The family-wise error level is $\alpha$, asymptotically in the number of items, for items drawn independently from an item population and the fixed prompt grid, with every model scored the same way on the same items. It does not extend to unseen prompts or other harnesses.
\item[\ProtDep.] Every protocol's interval excludes zero, and the signs differ. We also call such a pair a reversal.
\item[\Refused.] In at least one protocol the simultaneous interval contains zero, so no winner is certified at level $\alpha$. A refusal is not evidence of a tie, and the intervals are still reported. A refused pair is either unresolved in both protocols or resolved in only one protocol.
\item[\Declined.] The flag that gated disattenuation (Algorithm~\ref{alg:gated}) attaches to its output when either reliability fails the gate. Algorithm~\ref{alg:gated} then returns the uncorrected estimate $\clip(\robs)$, the observed correlation clipped to $[-1,1]$. \Declined{} is a decision about a correction, and \Refused{} is a verdict about a pair of models.
\end{description}

\subsection{Which reliability each analysis uses}

Table~\ref{tab:analyses} lists every analysis that reports or uses a reliability, with the estimate behind it. Rows (1)--(3) and (5) feed a correction, so their reliability refers to the same number of replicates as the correlation being corrected (Section~\ref{sec:reliability}). Rows (4), (6) and (7) report a reliability as a finding and correct nothing.

\begin{table*}[t]\centering\scriptsize\setlength{\tabcolsep}{3pt}
\def\rag{\raggedright\let\\\tabularnewline}
\begin{tabular}{@{}p{3.1cm}p{2.6cm}p{2.3cm}p{3.1cm}p{4.6cm}p{0.9cm}@{}}
\toprule
Analysis & Units & Replicates & Numerator & Reliability estimate & Gate $\alpha$ \\
\midrule
\rag (1) Per-class demo (Sec.~\ref{sec:real}, App.~\ref{app:perclass}) & \rag about 500 examples per class & \rag 2 seeds & \rag correlation of two-seed means & \rag one-way $\ICC{1,2}$ on the same two seeds & .25 \\
\rag (2) Cross-seed correction (Sec.~\ref{sec:real}, App.~\ref{app:pooled}) & \rag 50,000 examples & \rag seeds 0--3 & \rag EL2N on seed 2 vs.\ dynamic uncertainty on seed 3 & \rag one-way $\ICC{1,1}$ from seeds 0 and 1 (in the $K{=}2$ pivot) & .1 \\
\rag (3) Same-seed comparison (App.~\ref{app:pooled}) & \rag 50,000 examples & \rag seeds 0--3 & \rag two-seed means of seeds 2 and 3 & \rag one-way $\ICC{1,2}$ from seeds 0 and 1 & .1 \\
\rag (4) Score reliability (Sec.~\ref{sec:real}, App.~\ref{app:score-rel}) & \rag 50,000 examples & \rag 8 seeds & -- & \rag crossed two-way $\mathrm{ICC}(C,1)$; intervals from one-way $\ICC{1,1}$ over random 4-of-8 seed subsets & -- \\
\rag (5) Simulations (Sec.~\ref{sec:sim}, App.~\ref{app:sim}) & \rag $N$ as stated in each row & \rag $K{=}3$ & \rag correlation of $K$-replicate means & \rag one-way $\ICC{1,K}$ on the same replicates & .1 \\
\rag (6) LLM item difficulty, $g(1)$ and $g(32)$ (Sec.~\ref{sec:gcoef}, App.~\ref{app:gtheory}) & \rag 2,365 items & \rag 32 configurations, both protocols pooled & -- & \rag crossed item $\times$ configuration G-theory & -- \\
\rag (7) Embeddings (Sec.~\ref{sec:embed}, App.~\ref{app:embed}) & \rag test images (held-out half) $\times$ 512 dimensions & \rag 3 seeds & -- & \rag per-dimension cross-seed reliability after a Procrustes fit on the other half & -- \\
\bottomrule
\end{tabular}
\caption{Each correction divides by a reliability for the same number of replicates as its numerator. Numerator: the observed correlation $\robs$ that a correction divides. --: not applicable, because the analysis reports a reliability and corrects nothing. $\mathrm{ICC}(C,1)$: single-seed reliability from the crossed example $\times$ seed design, which removes the shift that each seed applies to all examples.}
\label{tab:analyses}
\end{table*}
\section{Shared bias: the persistence study}\label{app:persist}

This synthetic study, in which the true persistence is known, shows a bias that dividing by a reliability cannot fix. Noise shared within one readout of a trajectory makes a persistent process look anti-persistent, and pairing two independent readouts recovers the true persistence (Figure~\ref{fig:persist}). In a persistent process, consecutive steps tend to move in the same direction. In an anti-persistent one, they tend to reverse.

\paragraph{Generator.} We simulate $N{=}400$ units, each a trajectory of $T{=}20$ steps in $D{=}16$ dimensions. The velocity follows an AR(1) process with coefficient $\phi$, so each step's velocity is $\phi$ times the previous one plus fresh Gaussian noise, and $\phi$ is the true persistence. Two independent readouts observe each trajectory. Each readout adds Gaussian noise to every position, with standard deviation 0.5, 1 or 2. A readout plays the role of a replicate here. Both readouts observe the same trajectories, so the study illustrates shared bias and does not audit training noise. The 27 regimes cross 9 values of $\phi$ in [$-$0.2, 0.6] with the 3 noise levels. Each regime has 40 simulation runs from a deterministic PCG32 random number generator.

\paragraph{Two estimators.} Both estimators work with displacements, the changes in observed position from one step to the next. The within-run estimator correlates consecutive displacements of the same readout. The noise at one observed position enters two consecutive displacements with opposite signs, which adds a negative term to their correlation. Dividing by a reliability rescales an estimate but cannot remove an additive term of this kind. The cross-replicate estimator with disattenuation correlates each displacement in one readout with the next displacement in the other readout, so the two noise terms are independent. The estimator then divides this correlation by the reliability of a single displacement, which is estimated as the correlation between the two readouts' displacements at the same step.

\paragraph{Only the cross-replicate estimator recovers the true persistence.} For each regime we average the estimates over its 40 runs and take the central 95\% of the run estimates as its range. The mean absolute error averages, over the 27 regimes, the distance between this mean and the true $\phi$. A regime is covered when the true $\phi$ lies inside its range. The within-run estimator has mean absolute error 0.44 and covers the true value in 0/27 regimes. The cross-replicate estimator with disattenuation has mean absolute error 0.0007 and covers it in 27/27. Table~\ref{tab:roadmap} quotes these two errors.

\begin{figure*}[t]\centering
\includegraphics[width=\textwidth]{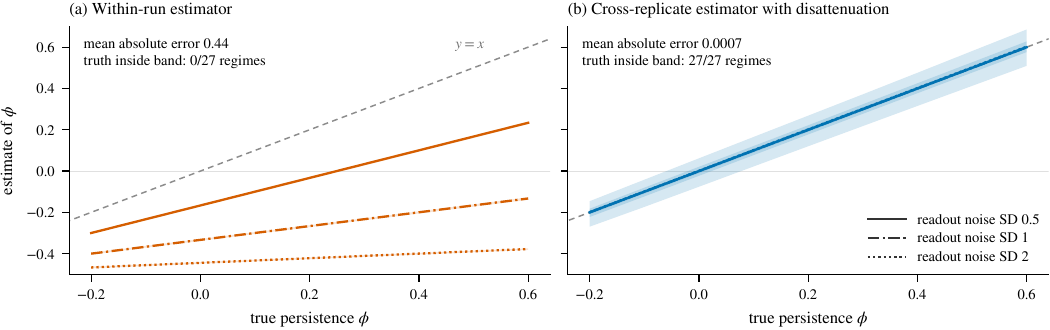}
\caption{With shared readout noise the within-run estimator reports spurious anti-persistence, and independent readouts recover the truth (synthetic). (a)~Within-run estimator. (b)~Cross-replicate estimator with disattenuation. Each line is the mean estimate over the 40 simulation runs of a regime, plotted against the true persistence $\phi$, with one line style per readout noise standard deviation (SD). Bands cover the central 95\% of runs. In (a) the bands are narrower than the lines. In (b) the three lines coincide, and the bands widen with the readout noise. Dashed line: $y{=}x$.}
\label{fig:persist}
\end{figure*}
\section{Proofs and the correction cap}\label{app:proofs}

This appendix proves Propositions~\ref{prop:known}--\ref{prop:magnitude} and tabulates the cap of Proposition~\ref{prop:magnitude}(a).
Expectations of nonnegative random variables take values in $[0,\infty]$, with $0\cdot\infty=0$.
The vector $\thetastar$ collects the target values of both measurements.
Conditional laws given $\thetastar$ are regular conditional distributions, which exist because every variable is real-valued.

\subsection{Proof of Proposition~\ref{prop:known}}\label{app:proof-known}

Proposition~\ref{prop:known} restates known facts.
Estimates of a variance component can be negative, so $\Pr(\rhat\le0)>0$ is a standard fact \citep{searle1992}.
\citet{zimmerman1997} gave Monte Carlo evidence that corrected correlations exceed one and vary widely.
Missing moments of an estimator that divides by a noisy coefficient are known from instrumental-variable estimators with one instrument \citep{kinal1980}.
The exact pivot~\eqref{eq:pivot} is the standard basis for $F$ intervals on the one-way ICC \citep{shrout1979}.
We give short proofs in our notation.
Throughout, $\rho_K<1$ and $F=(1-\rhat_K)/(1-\rho_K)$, which follows $F_{\nu_1,\nu_2}$ by~\eqref{eq:pivot}.

\paragraph{Part (i).}
The estimate is non-positive exactly when $1-\rhat_K\ge1$, that is, when $F\ge1/(1-\rho_K)$.
The $F$ distribution is continuous with an unbounded upper tail, so this event has probability $1-F_{\mathrm{cdf}}\big(1/(1-\rho_K);\nu_1,\nu_2\big)>0$.
On this event, $\sqrt{\rhat_K}$ is zero or not a real number.

\paragraph{Part (ii).}
Let $p_0=\Pr(\rhat_K\le0)>0$.
Because $\max(\rhat_K,\epsilon)\ge\epsilon$, we have $g_\epsilon^2\le1/\epsilon$ everywhere, with equality on $\{\rhat_K\le0\}$.
Hence
\[
p_0/\epsilon\;\le\;\E[g_\epsilon^2]\;\le\;1/\epsilon,
\]
so $\E[g_\epsilon^2]=\Theta(1/\epsilon)$, finite for each $\epsilon>0$ and unbounded as $\epsilon\to0$.
The divergence does not come only from non-positive estimates.
By~\eqref{eq:pivot}, $\rhat_K$ has density $f(t)=f_F\big((1-t)/(1-\rho_K)\big)/(1-\rho_K)$ for $t<1$, where $f_F$ is the $F_{\nu_1,\nu_2}$ density.
This density is continuous and positive on $(-\infty,1)$.
Fix $\delta\in(0,1)$, and let $f_{\min}>0$ and $f_{\max}<\infty$ be the minimum and maximum of $f$ on $[0,\delta]$.
For $\epsilon<\delta$,
\[
\E\big[g_\epsilon^2\,\mathbf 1\{\rhat_K>0\}\big]\;\ge\;\int_\epsilon^\delta\frac{f(t)}{t}\,dt\;\ge\;f_{\min}\log(\delta/\epsilon),
\]
which also diverges.
On the same event the first moment stays bounded, uniformly in $\epsilon$, because $g_\epsilon\le\rhat_K^{-1/2}$ there:
\[
\E\big[g_\epsilon\,\mathbf 1\{\rhat_K>0\}\big]\;\le\;f_{\max}\int_0^\delta t^{-1/2}\,dt+\delta^{-1/2}\;<\;\infty.
\]
The singularity of $1/\rhat_K$ at zero is thus just strong enough to break the second moment.

\paragraph{Part (iii).}
On $\{\rhat_X\rhat_Y>0\}$, $|\rplug|>1$ holds if and only if $\rhat_X\rhat_Y<\robs^2$, so the two events in (iii) coincide.
Fix a value $r\neq0$ of $\robs$ and let $\eta=|r|\in(0,1]$.
The event $\{0<\rhat_X\rhat_Y<r^2\}$ contains $\{\rhat_X\in(0,\eta),\,\rhat_Y\in(0,\eta)\}$, which has positive conditional probability by assumption.
The assumption holds when $\robs$, $\rhat_X$ and $\rhat_Y$ come from three disjoint seed blocks of the balanced Gaussian model.
By Lemma~\ref{lem:gauss} in Appendix~\ref{app:proof-split}, the three statistics are then independent given $\thetastar$, and each $\rhat_\ell$ has a conditional density that is positive on $(0,1)$.
Hence, for every $\thetastar$,
\[
\begin{aligned}
&\Pr\big(\rhat_X\in(0,\eta),\,\rhat_Y\in(0,\eta)\,\big|\,\thetastar,\robs\big)\\
&\qquad=\prod_{\ell\in\{X,Y\}}\Pr\big(\rhat_\ell\in(0,\eta)\,\big|\,\thetastar\big)>0,
\end{aligned}
\]
and averaging over $\thetastar$ given $\robs$ keeps the probability positive.

The assumption also holds when $\robs$ and the reliabilities share replicates, as in the per-class analysis and the simulations.
Take the balanced Gaussian model of Lemma~\ref{lem:gauss} with a single block of $K\ge2$ replicates, so that $\robs$ correlates the $K$-replicate means of $X$ and $Y$ and each $\rhat_\ell$ is $\ICC{1,K}$ or $\ICC{1,1}$ on the same replicates.
For each measurement the unit means are independent of the within-unit deviations, so $\MSW_X$ and $\MSW_Y$ are independent of all unit means and of each other, and $\nu_1\MSW_\ell/\sigma_\ell^2$ follows $\chi^2_{\nu_1}$.
The unit means determine $\robs$, $\MSB_X$ and $\MSB_Y$, and each $\MSB_\ell$ is positive almost surely.
Given the unit means, $\rhat_\ell\in(0,\eta)$ holds exactly when $\MSW_\ell$ falls in an open interval of positive length below $\MSB_\ell$, namely $\big((1-\eta)\MSB_\ell,\MSB_\ell\big)$ for $\ICC{1,K}$ and $\big((1-\eta)\MSB_\ell/(1+(K-1)\eta),\MSB_\ell\big)$ for $\ICC{1,1}$.
The $\chi^2$ density is positive on $(0,\infty)$, so the conditional probability of $\{\rhat_X\in(0,\eta),\,\rhat_Y\in(0,\eta)\}$ given the unit means is a product of two positive factors.
Averaging over the unit means given $\robs$ keeps it positive for almost every value of $\robs$.

\subsection{Proof of Proposition~\ref{prop:split}}\label{app:proof-split}

A measure-theory step first turns the interval bound in (H2) into a bound on integrals.

\begin{lemma}\label{lem:borel}
Let $\mu$ be a finite Borel measure on $(0,\delta)$ with $\mu((a,b))\ge c\,(b-a)$ whenever $0<a<b<\delta$, and let $\lambda$ be Lebesgue measure.
Then $\mu(B)\ge c\,\lambda(B)$ for every Borel set $B\subseteq(0,\delta)$, and $\int_{(0,\delta)}h\,d\mu\ge c\int_0^\delta h(t)\,dt$ for every Borel function $h\ge0$.
\end{lemma}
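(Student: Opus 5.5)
The plan has two stages: first show that the interval inequality passes from open intervals to all Borel subsets of $(0,\delta)$, then lift it from indicator functions to nonnegative Borel functions with the standard simple-function argument.

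For the first stage, take any open set $U\subseteq(0,\delta)$. It is a countable disjoint union of open intervals $(a_j,b_j)$ with $0\le a_j<b_j\le\delta$. The hypothesis only covers intervals with $0<a<b<\delta$, so endpoints at $0$ or $\delta$ need care. I would handle them by inner approximation: $\mu((a_j+1/n,\,b_j-1/n))\ge c\,(b_j-a_j-2/n)$, and continuity from below gives $\mu((a_j,b_j))\ge c\,(b_j-a_j)$. Countable additivity of both $\mu$ and $\lambda$ then gives $\mu(U)\ge c\,\lambda(U)$.

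Next, take an arbitrary Borel set $B\subseteq(0,\delta)$. The step I expect to need the most care is choosing the right regularity. Outer regularity of $\mu$ goes the wrong way: we need a lower bound on $\mu(B)$, and approximating $B$ from outside by open sets only bounds $\mu(B)$ from above. So I would instead use outer regularity of $\mu$, applied to the complement. Since $\mu$ is a finite Borel measure on a metric space, it is regular. Hence for each $\varepsilon>0$ there is an open $U\supseteq(0,\delta)\setminus B$, relatively open in $(0,\delta)$, with $\mu(U)\le\mu((0,\delta)\setminus B)+\varepsilon$. Then
\[
\mu(B)\ \ge\ \mu((0,\delta))-\mu(U)-\varepsilon .
\]
This does not immediately give the claim, because $\mu((0,\delta))$ may exceed $c\delta$.

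A cleaner route avoids regularity altogether. Consider the finite signed measure $\nu=\mu-c\lambda$ on $(0,\delta)$, which is nonnegative on every open interval. The class of finite disjoint unions of intervals (open, closed, half-open) is an algebra generating the Borel $\sigma$-algebra. On it $\nu\ge0$: a point has $\nu$-mass at least $0$, since $\lambda$ of a point is $0$ and $\mu\ge0$, and half-open intervals are handled by continuity. Alternatively, define $\mu'(B)=\mu(B)-c\lambda(B)$ and verify that it is a nonnegative premeasure on this algebra. The monotone class theorem then extends nonnegativity to all Borel sets: the class $\{B:\nu(B)\ge0\}$ is closed under increasing unions and decreasing intersections, because both measures are finite. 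Either way, the result is $\mu(B)\ge c\,\lambda(B)$.

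For the integral statement, first apply the set inequality to indicators, which gives it for nonnegative simple functions by linearity. For a general Borel $h\ge0$, approximate $h$ monotonically by simple functions; the monotone convergence theorem on both sides then yields
\[
\int_{(0,\delta)} h\,d\mu\ \ge\ c\int_0^\delta h(t)\,dt,
\]
with values allowed in $[0,\infty]$.
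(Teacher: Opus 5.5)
Your argument is correct, but it reaches Borel sets by a different route from the paper. One remark along the way is also wrong.

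The paper uses exactly the outer regularity you set aside. Take an open $U$ with $B\subseteq U\subseteq(0,\delta)$ and $\mu(U\setminus B)<\eta$. Then
$\mu(B)=\mu(U)-\mu(U\setminus B)>c\,\lambda(U)-\eta\ge c\,\lambda(B)-\eta$, and letting $\eta\to0$ finishes the step. Outer approximation does bound $\mu(B)$ from below, because it makes the $\mu$-mass of the gap $U\setminus B$ small. The comparison $\lambda(U)\ge\lambda(B)$ needs only monotonicity of $\lambda$. So your complement detour was unnecessary.

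Your replacement is nonetheless sound:
\begin{itemize}
\item $\mu-c\lambda\ge0$ holds on every subinterval of $(0,\delta)$. Closed and half-open intervals and single points follow from the open case by monotonicity of $\mu$ and $\lambda(\{a\})=0$.
\item It therefore holds on the algebra of finite disjoint unions of such intervals.
\item The class $\{B:\mu(B)\ge c\,\lambda(B)\}$ is a monotone class, because both measures are finite on $(0,\delta)$.
\item The monotone class theorem then gives every Borel set.
\end{itemize}

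What each route buys: yours is purely measure-theoretic and needs no topology. It also makes the decomposition of open sets superfluous, since you only use the endpoint handling for single intervals. The paper's route is shorter once regularity of finite Borel measures on metric spaces is granted.

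Your integral step is the same as the paper's.
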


\begin{proof}
An open set $U\subseteq(0,\delta)$ is a countable disjoint union of open intervals $(a_j,b_j)$ with $0\le a_j<b_j\le\delta$.
Each of them is an increasing union of intervals $(a,b)$ with $0<a<b<\delta$, so continuity from below gives $\mu((a_j,b_j))\ge c\,(b_j-a_j)$, and summing gives $\mu(U)\ge c\,\lambda(U)$.
Now fix a Borel set $B\subseteq(0,\delta)$ and $\eta>0$.
A finite Borel measure on a metric space is outer regular, so there is an open set $U$ with $B\subseteq U\subseteq(0,\delta)$ and $\mu(U\setminus B)<\eta$.
Then $\mu(B)=\mu(U)-\mu(U\setminus B)>c\,\lambda(U)-\eta\ge c\,\lambda(B)-\eta$, and letting $\eta\to0$ gives $\mu(B)\ge c\,\lambda(B)$.
The integral inequality follows for nonnegative simple functions by linearity and for every Borel $h\ge0$ by monotone convergence.
\end{proof}

\paragraph{Proof of Proposition~\ref{prop:split}.}
Let $A=\{\rhat_X>0,\rhat_Y>0\}$.
By (H3), $0<\rhat_Y<1$ on $A$ almost surely, so
\[
\rplug^{\,2}\,\mathbf 1_A=\frac{\robs^2}{\rhat_X\rhat_Y}\,\mathbf 1_A\;\ge\;\robs^2\cdot\frac{\mathbf 1_A}{\rhat_X}\qquad\text{almost surely.}
\]
For Borel functions $f,g\ge0$, conditional independence gives $\E[f(\robs)\,g(\rhat_X,\rhat_Y)\mid\thetastar]=\E[f(\robs)\mid\thetastar]\,\E[g(\rhat_X,\rhat_Y)\mid\thetastar]$.
For indicators this is (H1), and it extends to $f,g\ge0$ by linearity and monotone convergence.
Applied to $f(r)=r^2$ and $g(x,y)=x^{-1}\mathbf 1\{x>0,y>0\}$, it gives
\[
\E\big[\rplug^{\,2}\,\mathbf 1_A\,\big|\,\thetastar\big]\;\ge\;\E\big[\robs^2\,\big|\,\thetastar\big]\cdot\E\big[\rhat_X^{-1}\mathbf 1_A\,\big|\,\thetastar\big].
\]
Fix $\thetastar\in\Theta_0$, with its constants $c$ and $\delta$ from (H2), and let $\mu(B)=\Pr(\rhat_X\in B,\,\rhat_Y>0\mid\thetastar)$ for Borel sets $B\subseteq(0,\delta)$.
Condition (H2) is the interval bound of Lemma~\ref{lem:borel} for $\mu$, so
\[
\E\big[\rhat_X^{-1}\mathbf 1_A\,\big|\,\thetastar\big]\;\ge\;\int_{(0,\delta)}t^{-1}\,\mu(dt)\;\ge\;c\int_0^\delta\frac{dt}{t}=\infty.
\]
Since $\E[\robs^2\mid\thetastar]>0$ on $\Theta_0$, the conditional expectation of $\rplug^{\,2}\mathbf 1_A$ is infinite there.
Tonelli's theorem, applied to the disintegration of the joint law over $\thetastar$, then gives
\[
\begin{aligned}
\E\big[\rplug^{\,2}\,\mathbf 1_A\big]&=\E\Big[\E\big[\rplug^{\,2}\,\mathbf 1_A\,\big|\,\thetastar\big]\Big]\\
&\ge\infty\cdot\Pr(\thetastar\in\Theta_0)=\infty.
\end{aligned}
\]
The argument holds for any noise distribution and allows $\rhat_X$ and $\rhat_Y$ to be dependent. \hfill$\square$

Condition (H3) holds for $\ICC{1,1}$ and $\ICC{1,K}$ when, almost surely, the unit means are not all equal and the replicates of some unit differ.

\paragraph{The Gaussian design.}
The next lemma shows that the balanced Gaussian model with three disjoint seed blocks meets (H1)--(H3).

\begin{lemma}[Three disjoint seed blocks]\label{lem:gauss}
For $\ell\in\{X,Y\}$, let $\hat\theta_{\ell,i,s}=\theta^\star_{\ell,i}+e_{\ell,i,s}$ for $N\ge2$ units.
The noise $e_{\ell,i,s}$ is Gaussian with variance $\sigma_\ell^2>0$, independent across units, replicates and measurements, and independent of $\thetastar$.
Split the replicates into three disjoint blocks.
Block 0 gives $\robs$, the Pearson correlation across units of the means of $m\ge1$ replicates of $X$ and of $Y$.
Block $X$ gives $\rhat_X$ from $K\ge2$ replicates of $X$, and block $Y$ gives $\rhat_Y$ from $K$ replicates of $Y$, each as $\ICC{1,K}$ or as $\ICC{1,1}$.
Then the following hold for every value of $\thetastar$.
\textup{(a)} Given $\thetastar$, the statistics $\robs$, $\rhat_X$ and $\rhat_Y$ are mutually independent.
\textup{(b)} Given $\thetastar$, each $\rhat_\ell$ has a continuous density that is positive on $(-\infty,1)$ for $\ICC{1,K}$ and on $(-1/(K-1),1)$ for $\ICC{1,1}$.
\textup{(c)} $\E[\robs^2\mid\thetastar]>0$.
Consequently (H1)--(H3) hold, with $\Theta_0$ equal to the whole space.
\end{lemma}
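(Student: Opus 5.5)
We prove the three parts in order, working conditionally on a fixed value of $\thetastar$ throughout. Given $\thetastar$, each observation $\hat\theta_{\ell,i,s}$ is a deterministic function of $\thetastar$ and its own noise $e_{\ell,i,s}$. Part (a) follows from disjointness. $\robs$ is a Borel function of $\thetastar$ and of the block-0 noises of $X$ and $Y$. $\rhat_X$ is a function of $\thetastar$ and the block-$X$ noises of $X$, and $\rhat_Y$ likewise uses the block-$Y$ noises of $Y$. These three noise collections are disjoint sets of mutually independent variables that are also independent of $\thetastar$. Hence, given $\thetastar$, they are independent, and so are the three statistics, by the grouping property of independence.

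For part (b), fix $\ell$. Write $\bar e_{i}$ for the block-$\ell$ noise mean of unit $i$, and $\bar\theta$, $\bar e$ for grand means. Then
\[
\MSW_\ell=\frac{1}{N(K-1)}\sum_{i,s}(e_{\ell,i,s}-\bar e_i)^2,
\qquad
\MSB_\ell=\frac{K}{N-1}\sum_i\big(\theta^\star_{\ell,i}-\bar\theta+\bar e_i-\bar e\big)^2 .
\]
By Cochran's theorem, $\MSW_\ell$ is independent of the unit means, and $\nu_1\MSW_\ell/\sigma_\ell^2\sim\chi^2_{\nu_1}$. Given $\thetastar$, $\MSB_\ell$ is $\sigma_\ell^2/(\nu_2)$ times a noncentral $\chi^2_{\nu_2}(\lambda)$ variable, with $\lambda$ determined by the spread of $\theta^\star_{\ell,\cdot}$; the central case arises when all target values are equal. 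Both laws have continuous densities that are positive on $(0,\infty)$. The ratio $W=\MSW_\ell/\MSB_\ell$ of independent such variables then has a continuous density that is positive on $(0,\infty)$: the density of $W$ at $w$ is $\int_0^\infty b\,f_W(wb)\,f_B(b)\,db$, which is positive and continuous by dominated convergence.

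Since $\ICC{1,K}=1-W$, this gives a density that is positive on $(-\infty,1)$. Since $\ICC{1,1}=(1-W)/(1+(K-1)W)$ is a smooth strictly decreasing bijection from $(0,\infty)$ onto $(-1/(K-1),1)$, the change-of-variables formula transfers the positivity and continuity. The one delicate point is continuity of the noncentral density near $0$ when $\nu_2=1$ (i.e.\ $N=2$), where the density blows up. We only need continuity on the open target interval, though, and that holds because the density of $W$ is continuous at every $w>0$. We verify this by the integral representation with local domination away from $w=0$.

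For part (c), condition on $\thetastar$. The block-0 unit means of $X$ and $Y$ are independent nondegenerate Gaussian vectors in $\mathbb{R}^N$ with $N\ge2$. Hence with positive probability they are not constant and are not orthogonal after centering, so $\robs$ is defined and nonzero on an event of positive probability. This gives $\E[\robs^2\mid\thetastar]>0$. We adopt the convention that $\robs=0$ where undefined; that event has probability zero.

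The consequences then follow directly. (H1) is part (a). (H3) holds because $W>0$ almost surely, so $\rhat_Y<1$. For (H2), take $\delta\in(0,1)$. Using (a), $\Pr(\rhat_X\in(a,b),\rhat_Y>0\mid\thetastar)=\Pr(\rhat_Y>0\mid\thetastar)\int_a^b f_X$. The first factor is positive by (b), and $f_X\ge\min_{[0,\delta]}f_X>0$ by continuity and positivity on a compact subset of the support. The resulting constant $c$ is valid for every $\thetastar$, so $\Theta_0$ is the whole space. We expect the main obstacle to be the careful handling of the noncentral density in (b), not any conceptual step.
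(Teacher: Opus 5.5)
Your proof is correct and follows the paper's argument essentially step for step. You get conditional independence from the disjoint noise blocks, use Cochran's theorem for independent central and noncentral $\chi^2$ mean squares, write the ratio density as $\int_0^\infty b\,f(wb)\,f_B(b)\,db$ with dominated convergence, and derive (H1)--(H3) the same way, with $c$ and $\delta$ chosen separately for each $\thetastar$. The differences are minor: the paper shows $\robs\neq0$ almost surely, because the zero set of the nonzero polynomial $\sum_i(x_i-\bar x)(y_i-\bar y)$ is Lebesgue-null, and it dominates the integrand globally by $b\,\sup f_{\MSW}\,f_B(b)$, using that the $\chi^2_{\nu_1}$ density is bounded since $\nu_1=N(K-1)\ge2$, so the blow-up of $f_B$ at $0$ when $N=2$ that you flag is harmless, as $b\,f_B(b)$ is integrable.
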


\begin{proof}
Given $\thetastar$, the three statistics are functions of $\thetastar$ and of three independent noise blocks, which gives (a).
For (b), fix $\thetastar$ and $\ell$, drop the index $\ell$, and let $\bar e_i$ be the mean noise of unit $i$ in the block.
The deviations $e_{i,s}-\bar e_i$ are independent of the means $\bar e_i$ for Gaussian noise, so $\MSW$ and $\MSB$ are independent.
After scaling by $\nu_1/\sigma^2$, $\MSW$ is central $\chi^2$ with $\nu_1$ degrees of freedom.
After scaling by $\nu_2/\sigma^2$, $\MSB$ is non-central $\chi^2$ with $\nu_2$ degrees of freedom and non-centrality $K\sum_i(\theta^\star_i-\bar\theta^\star)^2/\sigma^2$.
Both densities are continuous and positive on $(0,\infty)$, and the density $f_W$ of $\MSW$ is bounded because $\nu_1\ge2$.
The ratio $R=\MSW/\MSB$ therefore has density $f_R(r)=\int_0^\infty b\,f_W(rb)\,f_B(b)\,db$, where $f_B$ is the density of $\MSB$.
This density is positive on $(0,\infty)$, and it is continuous by dominated convergence, with the integrable bound $b\,\sup f_W\,f_B(b)$.
Since $\ICC{1,K}=1-R$ and $\ICC{1,1}=(1-R)/(1+(K-1)R)$ are smooth, strictly decreasing maps of $R$ onto $(-\infty,1)$ and $(-1/(K-1),1)$ with smooth inverses, (b) follows.
For (c), given $\thetastar$ the block-0 means have a non-degenerate Gaussian law on $\mathbb R^{2N}$.
The correlation is zero only if $\sum_i(x_i-\bar x)(y_i-\bar y)=0$, the zero set of a non-zero polynomial, which has Lebesgue measure zero.
Hence $\robs\neq0$ almost surely, and $\E[\robs^2\mid\thetastar]>0$.

Condition (H1) follows from (a).
For (H3), both forms of the ICC are below one exactly when $\MSW>0$, which holds almost surely.
For (H2), fix $\delta\in(0,1)$, let $c_X>0$ be the minimum on $[0,\delta]$ of the conditional density of $\rhat_X$, and let $p_Y^+=\Pr(\rhat_Y>0\mid\thetastar)>0$.
By (a), $\Pr(\rhat_X\in(a,b),\,\rhat_Y>0\mid\thetastar)\ge c_X\,p_Y^+\,(b-a)$ whenever $0<a<b<\delta$, and (c) gives the remaining part of (H2).
\end{proof}

\paragraph{The floored rate.}
With floor $\epsilon$ on each reliability, the plug-in becomes $\hat r_\epsilon=\robs\,\{\max(\rhat_X,\epsilon)\max(\rhat_Y,\epsilon)\}^{-1/2}$.
The proof of its rate uses the mutual independence of all three statistics in Lemma~\ref{lem:gauss}(a), which is more than (H1) states.

\begin{lemma}[Floored rate]\label{lem:floor}
In the setting of Lemma~\ref{lem:gauss}, let $p_\ell(\thetastar)=\Pr(\rhat_\ell\le0\mid\thetastar)$.
Then, for every $\epsilon>0$,
\[
C\,\epsilon^{-2}\;\le\;\E[\hat r_\epsilon^{\,2}]\;\le\;\epsilon^{-2},
\]
where $C=\E\big[\E[\robs^2\mid\thetastar]\,p_X(\thetastar)\,p_Y(\thetastar)\big]\in(0,1]$.
So the second moment of the plug-in with floor $\epsilon$ is $\Theta(\epsilon^{-2})$.
\end{lemma}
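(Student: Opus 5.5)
The plan is to condition on $\thetastar$ and use the mutual independence in Lemma~\ref{lem:gauss}(a) to factor $\E[\hat r_\epsilon^{\,2}\mid\thetastar]$. Write
\[
\hat r_\epsilon^{\,2}=\robs^2\cdot\max(\rhat_X,\epsilon)^{-1}\cdot\max(\rhat_Y,\epsilon)^{-1},
\]
a product of three nonnegative Borel functions of $\robs$, $\rhat_X$ and $\rhat_Y$ respectively. Given $\thetastar$ these three statistics are mutually independent, so the same linearity-and-monotone-convergence extension used in the proof of Proposition~\ref{prop:split} gives
\[
\E[\hat r_\epsilon^{\,2}\mid\thetastar]=\E[\robs^2\mid\thetastar]\prod_{\ell\in\{X,Y\}}\E\big[\max(\rhat_\ell,\epsilon)^{-1}\mid\thetastar\big].
\]
This factorization is the only place the full mutual independence, rather than (H1), is needed.

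For the upper bound, I use the pointwise inequality $\max(\rhat_\ell,\epsilon)^{-1}\le\epsilon^{-1}$, together with $\robs^2\le1$ because $\robs$ is a Pearson correlation. These give $\hat r_\epsilon^{\,2}\le\epsilon^{-2}$ everywhere, hence also in expectation; no conditioning is needed. For the lower bound, I restrict each factor to the event $\{\rhat_\ell\le0\}$. On that event $\max(\rhat_\ell,\epsilon)=\epsilon$, so $\E[\max(\rhat_\ell,\epsilon)^{-1}\mid\thetastar]\ge p_\ell(\thetastar)/\epsilon$. Multiplying the three factors gives
\[
\E[\hat r_\epsilon^{\,2}\mid\thetastar]\ge\epsilon^{-2}\,\E[\robs^2\mid\thetastar]\,p_X(\thetastar)\,p_Y(\thetastar),
\]
and taking the expectation over $\thetastar$ by the tower property (Tonelli, since everything is nonnegative) gives $C\epsilon^{-2}$ with $C$ as stated.

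It remains to check that $C\in(0,1]$. The bound $C\le1$ is immediate since $\robs^2\le1$ and $p_\ell\le1$. For $C>0$, it suffices that the integrand is positive for every $\thetastar$. Lemma~\ref{lem:gauss}(c) gives $\E[\robs^2\mid\thetastar]>0$. Lemma~\ref{lem:gauss}(b) says each $\rhat_\ell$ has a conditional density positive on $(-\infty,1)$ for $\ICC{1,K}$, or on $(-1/(K-1),1)$ for $\ICC{1,1}$; both intervals contain a neighbourhood of $(-\eta,0]$, so $p_\ell(\thetastar)>0$. The expectation of a strictly positive random variable is strictly positive, so $C>0$.

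The main obstacle is only bookkeeping: the factorization step requires justifying conditional expectations of products of unbounded nonnegative functions. I will handle it exactly as in the proof of Proposition~\ref{prop:split}, by extending from indicators to simple functions to nonnegative Borel functions via monotone convergence. I also need measurability of $\thetastar\mapsto p_\ell(\thetastar)$, which comes from regular conditional distributions.
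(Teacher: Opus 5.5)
Your proposal is correct and follows essentially the same route as the paper: you factor $\E[\hat r_\epsilon^{\,2}\mid\thetastar]$ using the mutual conditional independence of Lemma~\ref{lem:gauss}(a), bound each reliability factor below by $p_\ell(\thetastar)/\epsilon$ on $\{\rhat_\ell\le0\}$, integrate over $\thetastar$, and obtain $C>0$ from Lemma~\ref{lem:gauss}(b) and~(c), with the upper bound coming from $\robs^2\le1$ and the floor. Your extension of the factorization to unbounded nonnegative functions and your remark on the measurability of $p_\ell$ fill in details that the paper leaves implicit.
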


\begin{proof}
The upper bound holds because $\robs^2\le1$ and each floored reliability is at least $\epsilon$.
For the lower bound, Lemma~\ref{lem:gauss}(a) gives
\[
\E[\hat r_\epsilon^{\,2}\mid\thetastar]=\E[\robs^2\mid\thetastar]\prod_{\ell\in\{X,Y\}}\E\big[1/\max(\rhat_\ell,\epsilon)\,\big|\,\thetastar\big].
\]
As in Proposition~\ref{prop:known}(ii), each factor of the product is at least $p_\ell(\thetastar)/\epsilon$.
Taking expectations over $\thetastar$ gives the lower bound.
The integrand of $C$ is positive for every $\thetastar$ by Lemma~\ref{lem:gauss}(b) and~(c), so $C>0$.
\end{proof}

This floor acts on each reliability, as in the synthetic check of Appendix~\ref{app:interval}.
The plug-in in the simulations of Appendices~\ref{app:sim-design}--\ref{app:robust} floors the product of the two reliabilities, which is a different rule (Appendix~\ref{app:sim-design}).

\subsection{Proof of Proposition~\ref{prop:magnitude}}\label{app:proof-magnitude}

Write $\rho_\ell$ for the reliability of the $m$-replicate measurement $\ell$, and $G$ for the event that both gates pass.
We assume $\rho_X,\rho_Y>0$, since the true correlation $\rtrue$ is defined only then.
Let $L=1-1/(\qgate\qucb)$, which is positive when $\qgate\qucb>1$.

\paragraph{Part (a).}
Suppose $G$ occurs.
For each $\ell$ the gate passed, so $\MSB_\ell/\MSW_\ell>\qgate$ and $1-\rhat_\ell=\MSW_\ell/\MSB_\ell<1/\qgate$.
The exact-$F$ bound then satisfies $u^F_\ell=1-(1-\rhat_\ell)/\qucb>1-1/(\qgate\qucb)=L$.
If $\MSW_\ell=0$, the gate passes with $\rhat_\ell=1$ and $u^F_\ell=1>L$.
The max rule only raises the bound, so the pre-map bound is above $L$ in both branches.
For $m=K$ no map follows, and $|\rgate|\le|\robs|/\sqrt{u_Xu_Y}\le|\robs|/L$, because clipping only shrinks a value.
Every step uses only the definitions in Algorithm~\ref{alg:gated}, so part (a) needs no distributional assumption.

\paragraph{Part (b).}
For the hypothesis of (b), regard $u_\ell$ as computed from the data whether or not the gates pass; Algorithm~\ref{alg:gated} uses it only on $G$.
Outside $G$ the algorithm returns $\clip(\robs)$, and $|\clip(\robs)|\le|\robs|\le|\robs|/\sqrt{\rho_X\rho_Y}$ because $\rho_X,\rho_Y\le1$.
On $G$, if $u_\ell\ge\rho_\ell$ for both $\ell$, then $\sqrt{u_Xu_Y}\ge\sqrt{\rho_X\rho_Y}$ and
\[
|\rgate|\;\le\;\frac{|\robs|}{\sqrt{u_Xu_Y}}\;\le\;\frac{|\robs|}{\sqrt{\rho_X\rho_Y}}.
\]
The bound can therefore fail only on $G\cap(\{u_X<\rho_X\}\cup\{u_Y<\rho_Y\})$.
Each of the two events has probability at most $\alpha$, so the union bound limits the failure probability to $2\alpha$.
Only $\Pr(G\cap\{u_\ell<\rho_\ell\})\le\alpha$ is used, and we verify this weaker condition in the three cases below. \hfill$\square$

\paragraph{The exact-$F$ bound.}
Under the balanced Gaussian model with $m=K$, the pivot~\eqref{eq:pivot} gives
$\Pr(\rho_K\le u^F)=\Pr\big((1-\rhat_K)/(1-\rho_K)\le\qucb\big)=1-\alpha$,
whether or not the gate passes.

\paragraph{The max rule.}
The event $\{\rho_\ell\le\max(u^F_\ell,u^B_\ell)\}$ is the union of $\{\rho_\ell\le u^F_\ell\}$ and $\{\rho_\ell\le u^B_\ell\}$.
The larger bound therefore covers whenever either bound covers, and its coverage is at least the larger of the two coverages.

\paragraph{The Spearman--Brown map.}
Write $v_\ell$ for the bound before the map and $\mathrm{SB}(u)=(m/K)\,u/\big(1-(1-m/K)\,u\big)$ for the map.
On $G$, part (a) gives $v_\ell>0$, and $v_\ell\le1$ because $\rhat_\ell\le1$ and every bootstrap value of $\ICC{1,K}$ is at most one.
The map fixes $0$ and $1$.
Its derivative, $(m/K)/\big(1-(1-m/K)\,u\big)^2$, is positive on $[0,1]$, so the map is continuous and strictly increasing there.
By~\eqref{eq:rhoK}, the map sends the reliability of a $K$-replicate mean to that of an $m$-replicate mean.
Hence, on $G$, $v_\ell$ covers the $K$-replicate reliability if and only if $u_\ell=\mathrm{SB}(v_\ell)$ covers $\rho_\ell$, and $\Pr(G\cap\{u_\ell<\rho_\ell\})$ equals the probability that $G$ occurs and $v_\ell$ misses.
A mapped bound thus covers on $G$ exactly when the original does.

\begin{remark}[The data-dependent switch]
Algorithm~\ref{alg:gated} uses the max rule only when $|S_\ell|>0.5$.
The bound it divides by is $u^F_\ell$ on one branch and $\max(u^F_\ell,u^B_\ell)$ on the other, and both are at least $u^F_\ell$.
Under the balanced Gaussian model the hypothesis of (b) therefore holds whichever branch is taken.
Without the Gaussian model, coverage of $u^B_\ell$ alone does not suffice, because $u^F_\ell$ is used when $|S_\ell|\le0.5$.
The hypothesis then needs, for each measurement, a bound that covers on the branch where it is used, for example with conditional coverage of at least $1-\alpha$ given the branch.
Table~\ref{tab:scope} lists this condition as, for each measure, a bound that covers on the branch where it is used.
\end{remark}

\subsection{The correction cap}\label{app:cap}

\begin{table}[t]\centering\small\setlength{\tabcolsep}{3pt}
\begin{tabular}{@{}lrrrrrrr@{}}
\toprule
Setting & $N$ & $K$ & $\alpha$ & $\qgate$ & $\qucb$ & $\qgate\qucb$ & cap \\
\midrule
Table~\ref{tab:mse} & 20 & 3 & .1 & 1.615 & 1.730 & 2.793 & 1.56 \\
Table~\ref{tab:mse}, base & 50 & 3 & .1 & 1.357 & 1.392 & 1.889 & 2.12 \\
Table~\ref{tab:mse} & 100 & 3 & .1 & 1.243 & 1.258 & 1.563 & 2.77 \\
Table~\ref{tab:mse} & 500 & 3 & .1 & 1.103 & 1.106 & 1.220 & 5.55 \\
per-class & 500 & 2 & .25 & 1.062 & 1.062 & 1.128 & 8.79 \\
cross-seed size & 50,000 & 2 & .1 & 1.012 & 1.012 & 1.023 & 44.1 \\
\bottomrule
\end{tabular}
\caption{The cap on an accepted correction loosens as the number of units grows. $\qgate$ and $\qucb$ are the upper-$\alpha$ quantiles of $F_{\nu_2,\nu_1}$ and $F_{\nu_1,\nu_2}$, with $\nu_1=N(K-1)$ and $\nu_2=N-1$. Cap: $(1-1/(\qgate\qucb))^{-1}$, the largest factor by which an accepted correction can multiply $\robs$ when $m=K$ (Proposition~\ref{prop:magnitude}(a)). Settings: rows of Table~\ref{tab:mse}; per-class, close to the per-class analysis of Section~\ref{sec:real}; cross-seed size, $N$ and $K$ of the cross-seed data of Section~\ref{sec:real}. The cross-seed correction itself has $m=1$, so its cap is larger and follows from the bounds after this table.}
\label{tab:caps}
\end{table}

Table~\ref{tab:caps} gives the cap of Proposition~\ref{prop:magnitude}(a) at the settings we use.
As $N$ grows, both quantiles fall toward one, so their product approaches one and the cap loosens.
At the cross-seed setting the upper bounds sit just above the estimates, because $\qucb$ is close to one there.

\paragraph{Why the product exceeds one.}
Proposition~\ref{prop:magnitude} assumes $\qgate\qucb>1$, and this holds for every $\alpha<1/2$.
If $W$ follows $F_{\nu_1,\nu_2}$, then $1/W$ follows $F_{\nu_2,\nu_1}$, so $\qucb$ is the reciprocal of the lower-$\alpha$ quantile of $F_{\nu_2,\nu_1}$.
The product $\qgate\qucb$ is therefore the ratio of the upper-$\alpha$ to the lower-$\alpha$ quantile of one continuous distribution, which exceeds one when $\alpha<1/2$.

\paragraph{The cap after a Spearman--Brown map.}
When $\robs$ averages $m\neq K$ replicates, each pre-map bound is above $L=1-1/(\qgate\qucb)$ by part (a), and the map is increasing on $[0,1]$.
The correction factor therefore obeys
\[
\frac{1}{\sqrt{u_Xu_Y}}\;\le\;\frac{1}{\mathrm{SB}_{K\to m}(L)}\;=\;\frac{K-(K-m)\,L}{m\,L},
\]
where $\mathrm{SB}_{K\to m}$ is the map $\mathrm{SB}$ of Appendix~\ref{app:proof-magnitude}, written with its replicate counts.
For $m<K$ the map lowers each bound, so this cap exceeds $1/L$.
For $m>K$ it raises each bound, and the cap in Table~\ref{tab:caps} still applies.

\paragraph{The cap when $\ICC{1,1}$ enters the pivot directly.}
The cross-seed correction of Section~\ref{sec:real} has $m=1$ and $K=2$, and it puts $\ICC{1,1}$ into the $K$-replicate pivot without a map.
A passed gate gives $\MSB_\ell/\MSW_\ell>\qgate$, so $1-\ICC{1,1}=K\MSW_\ell/(\MSB_\ell+(K-1)\MSW_\ell)<K/(\qgate+K-1)$.
Each bound then exceeds $1-K/\big((\qgate+K-1)\qucb\big)$, and the correction factor is at most the reciprocal of this value whenever it is positive.
At the cross-seed size this cap is larger than the table's value for $m=K$ and smaller than the cap after a Spearman--Brown map.

\section{Disattenuation simulations}\label{app:sim}

This appendix gives the design behind Table~\ref{tab:mse} and Figures~\ref{fig:teaser}(a, b) and~\ref{fig:robust}, the full baseline table, and four further checks.

\subsection{Design}\label{app:sim-design}

Each simulation run draws target values for $N$ units from a bivariate normal distribution with unit variances and correlation $\rtrue$.
Each unit then receives $K$ replicates of each measurement.
The noise on a replicate has variance $\sigma^2=K(1-\rho)/\rho$, so the $K$-replicate mean has reliability $\rho$.
We compute $\robs$ as the correlation of the $K$-replicate means across units, and $\rhat_X$, $\rhat_Y$ as $\ICC{1,K}$ on the same replicates.
The numerator and the denominators therefore share replicates.
The split design of Proposition~\ref{prop:split} is checked separately in Appendix~\ref{app:interval}.

The base cell of Table~\ref{tab:mse} has $N=50$, $K=3$, \mbox{$\rho_X=\rho_Y=0.2$}, $\rtrue=0.4$ and Gaussian noise.
Each other row changes one setting.
The asymmetric row sets $\rho_X=0.3$ and $\rho_Y=0.8$, and the $t_3$ row draws the noise from a Student-$t$ distribution with 3 degrees of freedom.
The remaining rows set $\rtrue$ to 0 or 0.8, or $N$ to 20, 100 or 500.
We compute $F$ quantiles by bisection on the regularized incomplete beta function.

Table~\ref{tab:mse} and Table~\ref{tab:full} use 2,000 runs per row.
The study of $\alpha$ (Appendix~\ref{app:alpha}) uses 3,000 runs per cell, and the gate ablation (Appendix~\ref{app:ablation}) uses 4,000.
The robustness sweep of Appendix~\ref{app:robust} also uses 4,000 runs per cell, with $K=3$.
Figure~\ref{fig:teaser}(a, b) shows the Gaussian rows of that sweep.

The plug-in in Figure~\ref{fig:teaser} and in the note to Table~\ref{tab:mse} divides by $\max(\rhat_X\rhat_Y,10^{-9})^{1/2}$.
Its floor therefore acts on the product of the two reliabilities.
At this floor the MSE is finite but roughly proportional to the inverse of the floor.
The product of the two estimates is non-positive in a sizable share of low-reliability runs, so a smaller floor gives a larger MSE.
This product floor differs from the floor on each reliability, $\max(\rhat_\ell,\epsilon)$, used in Proposition~\ref{prop:known}(ii) and Lemma~\ref{lem:floor}.
The best post-hoc floor also floors each reliability, as $\max(\rhat_\ell,c)$.
The Monte Carlo s.e.\ of every Algorithm~\ref{alg:gated} cell in Table~\ref{tab:mse} is at most 0.005, and it is largest at $\rtrue=0.8$.

\subsection{Baselines}\label{app:baselines}

The floor-based baselines all search one grid of eight floors,
\[
\mathcal C=\{0.02,\ 0.06,\ 0.1,\ 0.15,\ 0.2,\ 0.3,\ 0.4,\ 0.5\}.
\]
\begin{itemize}
\item \emph{Uncorrected estimate.} $\clip(\robs)$, with no correction.
\item \emph{Clipped plug-in.} $\clip\big(\robs/\sqrt{\rhat_X\rhat_Y}\big)$.
\item \emph{Joint.} A moment estimator. It estimates the covariance and the two variances of the target values from covariances between different replicates; for the covariance, it pairs replicate $p$ of $X$ with replicate $q\neq p$ of $Y$. It returns their correlation clipped to $[-1,1]$, and 0 when a variance estimate is not positive.
\item \emph{jPSD.} The same $2\times2$ moment covariance, projected onto the positive-semidefinite matrices by eigenvalue truncation and read off as a correlation. The projection minimizes a Frobenius distance, so it need not match a constrained restricted maximum likelihood (REML) or maximum likelihood (ML) fit. At low reliability it pushes the correlation toward $\pm1$. In the low-reliability rows its MSE is no lower than that of the joint estimator and above that of the uncorrected estimate (Table~\ref{tab:full}).
\item \emph{Positive-part.} Replaces each reliability by $\max(\rhat,0)$ and declines to correct when either is not positive. On the balanced one-way design it equals the constrained REML reliability. The constrained ML estimate differs, because ML divides the between-unit sum of squares by $N$ where REML divides it by $N-1$ (Appendix~\ref{app:interval}).
\item \emph{Fiducial Bayesian shrinkage.} Averages the clipped corrected estimate over 24 fiducial draws of each reliability.
\item \emph{Bootstrap floor.} Refits the model at the estimated reliabilities, simulates 120 datasets, and picks the floor in $\mathcal C$ with the lowest MSE in the bootstrap.
\item \emph{Best post-hoc floor$^\dagger$.} The single floor $c\in\mathcal C$ for which $\clip\big(\robs\,\{\max(\rhat_X,c)\max(\rhat_Y,c)\}^{-1/2}\big)$ has the lowest MSE against the known truth, over all runs of a scenario. It is chosen after seeing the truth, so it cannot be deployed. No candidate returns the uncorrected estimate, so it can do worse than that estimate. At $\rtrue=0$ it reaches 0.08 against 0.02.
At $N=20$ it reaches 0.26 against 0.16.
\end{itemize}

\subsection{Full baseline table}\label{app:fulltable}

\begin{table*}[t]\centering\small\setlength{\tabcolsep}{5pt}
\begin{tabular}{@{}lcccccccccc@{}}
\toprule
Scenario & Clipped & Uncorr. & Joint (bad \%) & jPSD & Pos.-part & Bayes & Boot.\ floor & Alg.~\ref{alg:gated} & Best floor$^\dagger$ & Declined (\%) \\
\midrule
Base & .52 & \textbf{.12} & .31 (52) & .56 & .27 & .47 & .22 & \textbf{.12} & .14 & 87 \\
Asym.\ $.3/.8$ & .13 & \textbf{.06} & .11 (14) & .14 & .10 & .12 & .10 & \textbf{.06} & .06 & 42 \\
$t_3$ noise & .46 & \textbf{.12} & .30 (48) & .49 & .27 & .42 & .21 & \textbf{.12} & .13 & 84 \\
$\rtrue{=}0$ & .57 & \textbf{.02} & .28 (50) & .61 & .26 & .54 & .19 & .03 & .08 & 87 \\
$\rtrue{=}.8$ & .37 & .43 & .45 (60) & .45 & .34 & .34 & \textbf{.29} & .42 & .27 & 86 \\
$N{=}20$ & .69 & \textbf{.16} & .33 (68) & .72 & .31 & .65 & .29 & \textbf{.16} & .26 & 94 \\
$N{=}100$ & .34 & \textbf{.11} & .27 (39) & .39 & .22 & .32 & .19 & \textbf{.11} & .10 & 75 \\
$N{=}500$ & .07 & .11 & .08 (6) & .09 & .07 & .07 & .08 & \textbf{.05} & .04 & 12 \\
\bottomrule
\end{tabular}
\caption{Every deployable correction except Algorithm~\ref{alg:gated} is at least 0.10 worse than the uncorrected estimate in some row. MSE against the true correlation, with the runs and settings of Table~\ref{tab:mse}. Clipped: clipped plug-in. Uncorr.: uncorrected estimate. Joint (bad \%): moment estimator, with the share of runs in which its unclipped value is undefined or outside $[-1,1]$. jPSD: moment covariance projected by eigenvalue truncation. Pos.-part: positive-part rule. Bayes: fiducial Bayesian shrinkage. Boot.\ floor: bootstrap floor. $^\dagger$Best post-hoc floor, which cannot be deployed. Declined (\%): percentage of runs in which a gate of Algorithm~\ref{alg:gated} fails. Bold: lowest deployable MSE in the row (ties bold). The largest Monte Carlo s.e.\ over deployable methods and rows is 0.021, and that of Algorithm~\ref{alg:gated} cells is at most 0.005.}
\label{tab:full}
\end{table*}

Table~\ref{tab:full} adds four baselines and the row with 100 units to Table~\ref{tab:mse}.
At three decimals, with 500 units Algorithm~\ref{alg:gated} reaches 0.048 against 0.105 uncorrected and 0.041 for the best post-hoc floor.
In the asymmetric row its gain is small, from 0.063 to 0.058.
Every deployable correction other than Algorithm~\ref{alg:gated} is at least 0.10 worse than the uncorrected estimate in some row.
Positive-part and Bayesian shrinkage are worse than both the uncorrected estimate and Algorithm~\ref{alg:gated} in every row except two.
At $\rtrue=0.8$ both reach 0.34, against 0.43 uncorrected and 0.42 for Algorithm~\ref{alg:gated}.
All four values trail the bootstrap floor at 0.29.
With 500 units both reach 0.07, which beats the uncorrected 0.11 and trails Algorithm~\ref{alg:gated} at 0.05.
The unclipped joint estimator is undefined or outside $[-1,1]$ in 48--68\% of runs in the low-reliability rows.
The largest Monte Carlo s.e.\ over all deployable methods and rows is 0.021.

\subsection{Sensitivity to the level \texorpdfstring{$\alpha$}{alpha}}\label{app:alpha}

\begin{table}[t]\centering\small
\begin{tabular}{@{}lcccc@{}}
\toprule
Cell & $\alpha{=}.05$ & $.1$ & $.25$ & $.4$ \\
\midrule
$N{=}50$, $\rho{=}.1$ & \textbf{.150} & .153 & .160 & .212 \\
$N{=}50$, $\rho{=}.2$ & .124 & \textbf{.122} & .136 & .188 \\
$N{=}50$, $\rho{=}.5$ & \textbf{.051} & .053 & .063 & .076 \\
$N{=}20$, $\rho{=}.2$ & \textbf{.156} & .159 & .169 & .234 \\
$N{=}500$, $\rho{=}.2$ & .054 & \textbf{.044} & .047 & .057 \\
\bottomrule
\end{tabular}
\caption{The three smaller levels give similar MSE, and $\alpha{=}0.4$ is worse in every cell. MSE of Algorithm~\ref{alg:gated} with $K{=}3$, $\rtrue{=}0.4$ and 3,000 runs per cell. $\rho$: reliability of the 3-replicate mean, the same for both measurements. Bold: lowest MSE in the row.}
\label{tab:alpha}
\end{table}

No single level of $\alpha$ is best in every cell (Table~\ref{tab:alpha}).
The level 0.05 is lowest in three cells, and 0.1 is lowest in the other two.
Cell by cell, the three smaller levels stay within about 25\% of each other.
The level 0.4 is worse in every cell, by up to 54\% against the best level.
We use $\alpha=0.1$ throughout and did not tune it per cell.
These values come from a separate simulation with 3,000 runs per cell.
The same cells in Tables~\ref{tab:mse}, \ref{tab:full} and~\ref{tab:ablate} come from other runs and differ from these by Monte Carlo noise.

\subsection{Gate ablation}\label{app:ablation}

\begin{table}[t]\centering\small\setlength{\tabcolsep}{4pt}
\begin{tabular}{@{}rrccccc@{}}
\toprule
$N$ & $\rho$ & Uncorr. & Alg.~\ref{alg:gated} & No gate & Gate + plug-in & Floor .1 \\
\midrule
20 & .1 & \textbf{.184} & .185 & .471 & .187 & .703 \\
20 & .2 & \textbf{.157} & .158 & .347 & .162 & .575 \\
50 & .1 & \textbf{.150} & .152 & .359 & .156 & .519 \\
50 & .2 & \textbf{.123} & .124 & .209 & .131 & .356 \\
50 & .5 & .060 & \textbf{.055} & \textbf{.055} & .075 & .090 \\
500 & .2 & .104 & .046 & \textbf{.040} & .063 & .062 \\
\bottomrule
\end{tabular}
\caption{The gate matters at low reliability and the upper bound at high reliability. MSE against the truth with $\rtrue{=}0.4$, $K{=}3$, $\alpha{=}0.1$ and 4,000 runs per cell. $\rho$: reliability of the 3-replicate mean for both measurements. No gate: always divide by the upper bounds. Gate + plug-in: apply the gate, then divide by the point estimates. Floor .1: clipped plug-in with a fixed floor of 0.1. Bold: lowest deployable MSE in the row, the uncorrected estimate included (ties bold).}
\label{tab:ablate}
\end{table}

Both steps of Algorithm~\ref{alg:gated} matter, in different regimes.
Table~\ref{tab:ablate} removes one step at a time.
The variant without a gate always divides by the upper bounds.
The gate-plus-plug-in variant keeps the gate but divides by the point estimates when both gates pass.
The floor variant is the clipped plug-in with a fixed floor of 0.1.

Removing the gate hurts most at low reliability.
There it raises the MSE by up to 2.5$\times$, from 0.185 to 0.471 at $N=20$ and $\rho=0.1$.
Removing the upper bound costs less, but that variant is worse than Algorithm~\ref{alg:gated} in every row.
When reliability is well estimated, the roles reverse.
With 500 units, removing the gate does not hurt (0.040 against 0.046), while removing the bound does (0.063).
The fixed floor is worse than Algorithm~\ref{alg:gated} in every row.
In the four rows with $\rho\le0.2$ and $N\le50$, the uncorrected estimate is lowest and Algorithm~\ref{alg:gated} is close behind.

\subsection{Non-Gaussian noise and a bootstrap version}\label{app:robust}

\begin{figure*}[t]\centering
\includegraphics[width=\textwidth]{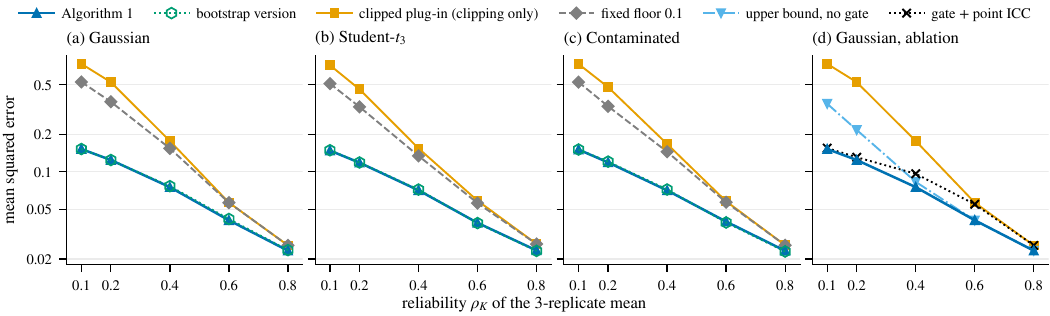}
\caption{Algorithm~\ref{alg:gated} stays stable under heavy-tailed and contaminated noise. Each panel shows the MSE (log scale) against the reliability of the 3-replicate mean, for $N{=}50$, $K{=}3$, $\rtrue{=}0.4$, $\alpha{=}0.1$ and 4,000 runs per point. Panels (a)--(c) use Gaussian, Student-$t_3$ and contaminated noise with equal variance, and compare Algorithm~\ref{alg:gated}, its bootstrap version, the clipped plug-in and the plug-in with a fixed floor of 0.1. Panel (d) takes Algorithm~\ref{alg:gated} apart under Gaussian noise, with the clipped plug-in, the upper bound without the gate, and the gate with the point ICC. Proposition~\ref{prop:magnitude} covers only Gaussian noise, so the Student-$t_3$ and contaminated panels are empirical checks.}
\label{fig:robust}
\end{figure*}

\begin{table}[t]\centering\small
\begin{tabular}{@{}lcc@{}}
\toprule
Method & $\rho{=}.1$ & $\rho{=}.4$ \\
\midrule
Clipped plug-in & .733 & .176 \\
Upper bound, no gate & .351 & .083 \\
Gate + point ICC & .155 & .096 \\
Algorithm~\ref{alg:gated} & \textbf{.152} & \textbf{.075} \\
\bottomrule
\end{tabular}
\caption{Under Gaussian noise the gate gives most of the gain at low reliability, and the upper bound adds a further gain at $\rho{=}0.4$. MSE from the sweep of Figure~\ref{fig:robust}. $\rho$: reliability of the 3-replicate mean. The same variants in Table~\ref{tab:ablate} come from a different run. Bold: lowest MSE in the column.}
\label{tab:robustablate}
\end{table}

Algorithm~\ref{alg:gated} is about as accurate under heavy-tailed and contaminated noise as under Gaussian noise (Figure~\ref{fig:robust}).
We keep the base cell of Table~\ref{tab:mse} ($N=50$, $K=3$, $\rtrue=0.4$) and use $\alpha=0.1$.
The sweep sets the reliability of the 3-replicate mean, for both measurements, to five levels between 0.1 and 0.8, with 4,000 runs per cell.
The replicate noise is Gaussian, Student-$t_3$ scaled by $\sigma/\sqrt3$, or a contaminated Gaussian.
The contaminated noise draws 95\% of errors from a Gaussian with variance $\sigma^2/2.2$ and 5\% from one with variance $25\sigma^2/2.2$.
All three noise types share the variance $\sigma^2=K(1-\rho)/\rho$.
At $\rho=0.4$ the MSE of Algorithm~\ref{alg:gated} is 0.075 with Gaussian noise and 0.071 with each of the other two.

The bootstrap version of Algorithm~\ref{alg:gated} (\texttt{bootstrap\_ucb} in the released results) resamples the $N$ units 500 times and keeps all $K$ replicates of each unit.
This version has its own gate.
The bootstrap version declines to correct when either lower bootstrap quantile of the ICC is not positive, and otherwise divides by the upper bootstrap quantiles.
The max rule of Section~\ref{sec:gated} is a different procedure.
The max rule keeps the exact-$F$ gate and uses a unit bootstrap with 400 resamples only as a second upper bound.
The bootstrap version is competitive, and its largest MSE gain over Algorithm~\ref{alg:gated} is 1.6\%.
Its total runtime is 12.9 times that of Algorithm~\ref{alg:gated}, and the mean of the per-cell runtime ratios is 15.5.
Table~\ref{tab:robustablate} gives the Gaussian ablation at $\rho=0.1$ and $\rho=0.4$.
Proposition~\ref{prop:magnitude} applies only to the Gaussian panels of Figure~\ref{fig:robust}, and the other panels are empirical checks.

\subsection{Intervals for \texorpdfstring{$\rtrue$}{r*} and a check of Proposition~\ref{prop:split}}\label{app:interval}

A Fieller-type interval for $\rtrue$ over-covers in our simulations but says little at low reliability.
We computed it \citep{fieller1954} with $\robs$ and the reliabilities on separate replicates, over 4,000 runs (\texttt{scripts/interval\_compare.py}).
The interval uses normal approximations and is set to $[-1,1]$ when its quadratic has no bounded solution.
The interval covers $\rtrue$ in at least 0.95 of runs at nominal 0.90.
At $\rho=0.2$ its mean width is about 1.9, on a scale that runs from $-$1 to 1.
On the balanced one-way design the positive-part reliability $\max(\rhat,0)$ equals the constrained REML estimate, a known closed form \citep{searle1992}.
The script computes the REML value from that closed form, so its check confirms only the code.
An interval that propagates Bonferroni-adjusted pivot bounds on the two reliabilities under-covers, and we do not report it as valid.

We check Proposition~\ref{prop:split} on synthetic data with \texttt{scripts/\allowbreak prop2\_check.py}.
The check uses separate seed blocks for $\robs$ and for each reliability, and a floor $\epsilon$ on each reliability, as in Appendix~\ref{app:proof-split}.
This per-factor floor differs from the product floor of Appendix~\ref{app:sim-design}.
Across runs, the correlation between $\robs$ and $\rhat$ is below 0.01 in absolute value.
The correlation given the target values is also near zero.
As $\epsilon$ falls from $10^{-2}$ to $10^{-6}$, the simulated second moment of the floored plug-in rises from 3 to $7\times10^{7}$.
This growth is consistent with the $\Theta(\epsilon^{-2})$ rate of Appendix~\ref{app:proof-split}.
The design is set in the script, listed in Appendix~\ref{app:code}.

\section{Difficulty scores on CIFAR}\label{app:scores}

This appendix gives the details behind Section~\ref{sec:real}.

\subsection{Score recipe and reliability}\label{app:score-rel}

We train ResNet-18 \citep{he2016} with a $3\times3$ stem on \mbox{CIFAR-100} and on \mbox{CIFAR-10} \citep{krizhevsky2009}, with eight seeds per dataset.
Training uses SGD with learning rate 0.1, Nesterov momentum 0.9 and weight decay $5\times10^{-4}$.
A cosine schedule runs for 30 epochs with batch size 256.
EL2N \citep{paul2021} is the norm of the prediction error; we take it at epoch 10.
\citet{he2024} define dynamic uncertainty as the standard deviation of the true-class probability within sliding windows of consecutive epochs, averaged over the windows.
We use a single window that covers every epoch after a 5-epoch warm-up, so our score is the variability of \citet{swayamdipta2020} computed over those epochs.
We keep the name dynamic uncertainty for it.

\begin{table}[b]\centering\small
\begin{tabular}{@{}lcc@{}}
\toprule
Score & CIFAR-100 & CIFAR-10 \\
\midrule
EL2N & .58 \ [.55, .60] & .55 \ [.45, .60] \\
Dynamic uncertainty & .84 \ [.83, .85] & .79 \ [.74, .82] \\
\bottomrule
\end{tabular}
\caption{Dynamic uncertainty is more reliable than EL2N on both datasets. Single-seed reliability over eight seeds and the 50,000 training examples of each dataset. Point values: crossed two-way $\ICC{C,1}$, which removes shifts a seed applies to every example. Brackets: 2.5th to 97.5th percentiles of the one-way $\ICC{1,1}$ over 300 random draws of 4 of the 8 seeds.}
\label{tab:scorerel}
\end{table}

Both scores are measured on all 50,000 training examples of each dataset.
Every seed scores every example, so we estimate reliability with the crossed design of examples by seeds.
Two-way variance components from generalizability theory give the consistency coefficient $\ICC{C,1}$ \citep{mcgraw1996}, the single-seed reliability after removing any shift that a seed applies to all examples.
To see how much this value depends on the seeds we happened to train, we recompute the one-way $\ICC{1,1}$ on 300 random draws of 4 of the 8 seeds.
Table~\ref{tab:scorerel} reports both.
On each dataset the subset intervals for EL2N and for dynamic uncertainty do not overlap, so the ordering of the two scores is robust.
The level of each reliability moves modestly with the choice of seeds, most for EL2N on \mbox{CIFAR-10}.
One minus the reliability is the share of seed noise, 42\% of the variance of a single-seed EL2N score on CIFAR-100 and 45\% on CIFAR-10.

\subsection{Per-class demonstration}\label{app:perclass}

The per-class analysis of Section~\ref{sec:real} runs once for each CIFAR-100 class.
The units are that class's roughly 500 training examples, and there are $K=2$ seeds.
We take $\robs$ as the correlation of the two-seed means of EL2N and dynamic uncertainty, and each reliability as $\ICC{1,2}$ on the same two seeds.
The plug-in divides by $\max(\rhat_X\rhat_Y,10^{-9})^{1/2}$, the product floor of Appendix~\ref{app:sim-design}.
The gate uses $\alpha=0.25$ for this two-seed demonstration.
Because the correlation and the reliabilities share the same two seeds, the analysis illustrates on real scores the failures that Proposition~\ref{prop:known} describes.
We do not offer its values as corrected estimates.
At $N\approx500$ the finite-sample bias of the ICC is about $0.004(1-\rho_K)$ (Appendix~\ref{app:bias}).
Recomputing the count of 70 classes requires regenerated CIFAR-100 labels and scores (Appendix~\ref{app:code}).

\subsection{Cross-seed correction on all 50,000 examples}\label{app:pooled}

\begin{table*}[t]\centering\small
\begin{tabular}{@{}llll@{}}
\toprule
Estimator & Estimate & $X$ / $Y$ seeds & Reliabilities: seeds, form \\
\midrule
Observed $\robs$ & .64 & 2 / 3 & -- \\
Alg.~\ref{alg:gated} & .88 & 2 / 3 & 0, 1; $\ICC{1,1}$ \\
Joint & .93 & all four & -- \\
Plug-in over 12 partitions & mean .93, s.d.\ .03, range .89--.98 & all ordered pairs & the other two; $\ICC{1,1}$ \\
Plug-in over 12 partitions, CIFAR-10 & mean .94, s.d.\ .04 & all ordered pairs & the other two; $\ICC{1,1}$ \\
Alg.~\ref{alg:gated}, same seeds & .94 & two-seed means on 2, 3 & 0, 1; $\ICC{1,2}$ \\
\bottomrule
\end{tabular}
\caption{The cross-seed estimates agree, but real data have no ground truth. Correlation of EL2N and dynamic uncertainty over the 50,000 CIFAR-100 training examples, except the row marked CIFAR-10. Partitions: the 12 ordered choices of the $X$ seed and the $Y$ seed among four, with reliabilities from the other two. They share seeds, so their spread is not an error bar. Alg.~\ref{alg:gated} puts $\ICC{1,1}$ into the exact-$F$ pivot for $K{=}2$, a conservative bound.}
\label{tab:pooledest}
\end{table*}

The cross-seed correction pools all 50,000 CIFAR-100 training examples and uses four of the eight seeds.
The analysis computes $\robs$ from EL2N on one seed and dynamic uncertainty on another, so the errors $e_X$ and $e_Y$ come from independent training runs.
The single-seed reliabilities $\ICC{1,1}$ come from the two remaining seeds, so, given the target values, $\robs$ is independent of $\rhat_X$ and $\rhat_Y$.
The point correction needs the first condition, and the design also provides the second.
The two reliabilities share their seeds, which the correction does not require.
Proposition~\ref{prop:split} is a property of the plug-in, shown on synthetic data in Appendix~\ref{app:interval}, and the real correction does not rely on it.

Choosing the $X$ seed and the $Y$ seed in order from the four, with the reliabilities taken from the other two, gives 12 partitions.
Table~\ref{tab:pooledest} lists the estimates.
Over the 12 partitions the plug-in has mean 0.93, s.d.\ 0.03 and range 0.89--0.98 on CIFAR-100.
On CIFAR-10 its mean is 0.94 with s.d.\ 0.04.
The partitions share seeds, so these spreads show how much the estimate moves between partitions and are not error bars.
Algorithm~\ref{alg:gated} divides by upper confidence bounds, which are slightly larger than the point estimates, so its 0.88 lies just below the plug-in range.
At $N=50{,}000$ the quantile $\qucb$ is 1.01, so the bounds stay close to the point estimates (Table~\ref{tab:caps}).
The joint estimator gives 0.93.

Algorithm~\ref{alg:gated} put $\ICC{1,1}$ directly into the exact-$F$ pivot for $K=2$.
This bound is conservative.
Computing the bound for the two-seed mean and mapping it to one seed with the Spearman--Brown step of Algorithm~\ref{alg:gated} gives the same 0.88.

A same-seed version correlates the two-seed means of both scores on seeds 2 and 3.
This version uses $\ICC{1,2}$ from seeds 0 and 1, and Algorithm~\ref{alg:gated} then gives 0.94.
That correlation keeps the errors the two scores share within a seed, which can inflate it (Appendix~\ref{app:errcorr}).

\subsection{Heteroscedasticity and coverage}\label{app:hetero}

\begin{figure*}[t]\centering
\includegraphics[width=\textwidth]{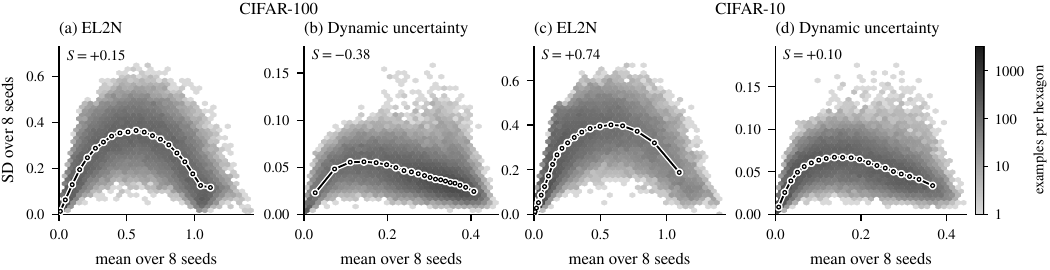}
\caption{Score variance rises with the score mean strongly only for CIFAR-10 EL2N. Each panel bins the 50,000 training examples of one dataset by their mean over eight seeds ($x$) and their standard deviation across the seeds ($y$); color shows the count on a log scale. The line joins the median standard deviation within 20 equal-count bins of the mean. Its markers crowd where examples are dense, so $S$ follows the branch that holds most examples. $S$: Spearman correlation across examples between the variance across seeds and the mean. (a)~CIFAR-100 EL2N; (b)~CIFAR-100 dynamic uncertainty; (c)~CIFAR-10 EL2N; (d)~CIFAR-10 dynamic uncertainty.}
\label{fig:hetero}
\end{figure*}

The exact-$F$ bound assumes homoscedastic noise, whose variance is the same for every unit.
Noise whose variance changes across units is heteroscedastic.
We check it with $S$, the Spearman correlation across examples between an example's variance over the eight seeds and its mean.
The variance uses $K-1$ in the denominator, with $N=50{,}000$ and $K=8$.
On CIFAR-100, $S$ is +0.15 for EL2N and $-$0.38 for dynamic uncertainty.
On CIFAR-10 it is +0.74 for EL2N and +0.10 for dynamic uncertainty (Figure~\ref{fig:hetero}).
The point correction needs only errors with conditional mean zero, so unequal variances affect only the coverage of the exact-$F$ bound.

Coverage here is the share of simulated runs in which the upper bound $u$ is at least the true reliability, at nominal 0.90.
Two calibrations measure it.
In \texttt{scripts/\allowbreak coverage\_hetero.py}, the exact-$F$ bound covers in 0.88--0.90 of runs for $S$ up to about 0.5.
At the strongest simulated heteroscedasticity, slightly above the CIFAR-10 EL2N value of 0.74, its coverage falls to \mbox{0.81--0.84}.
In \texttt{scripts/\allowbreak boot\_calibrate.py}, exact-$F$ covers in 0.79 of runs under strong heteroscedasticity.
A unit bootstrap with 400 resamples raises this to 0.87, with a wider bound.
That bootstrap bound slightly under-covers when the noise is homoscedastic, so exact-$F$ stays the default.
Where the exact-$F$ bound is valid, the max rule only widens it, and a fully bootstrapped version of Algorithm~\ref{alg:gated} took about 13 times the runtime (Appendix~\ref{app:robust}).
The max rule of Algorithm~\ref{alg:gated} uses the same 400 resamples.
The threshold of the max rule, $|S|>0.5$, is pre-specified from the first calibration and is not a theoretical cutoff.

Both calibrations compute $S$ against the simulated target values.
Algorithm~\ref{alg:gated} and the real-data checks compute it against the unit means, which carry replicate noise.
That noise attenuates $S$ when reliability is low, so at low reliability the check $|S_\ell|>0.5$ fires less often than the calibrations suggest.

The calibrations use positive values of $S$.
Negating every score maps a design with variance--mean correlation $S$ to its mirror image with $-S$ and leaves MSW, MSB, the ICC and its bound unchanged.
Each calibration at $+S$ therefore applies equally to its mirror image at $-S$.
On that basis we read CIFAR-100 dynamic uncertainty, at $-$0.38, against the near-nominal range.

The script \texttt{scripts/\allowbreak cond\_coverage.py} measures coverage given that the gate passes, also at nominal 0.90.
The conditional coverage is 1.00 at $\rho_K=0.3$ and $N=50$, and 0.94 at $N=200$.
Where the gate always passes, it is about 0.90.

\subsection{Sensitivity to correlated errors}\label{app:errcorr}

Errors that two scores share within a seed inflate a same-seed correlation, and averaging more seeds shrinks the inflation.
We simulate a single-seed reliability of 0.5 with $\rtrue=0.7$ and $N=2{,}000$.
With $K=2$ seeds, an error correlation of 0.3 inflates the same-seed estimate by +0.15.
An error correlation of 0.6 inflates it by +0.29.
The inflation halves with each doubling of $K$, and it is about zero when the errors are uncorrelated.
Computing $\robs$ across seeds removes this term, which is why Section~\ref{sec:whento} recommends it when $K\le2$ or when errors from the same run correlate strongly.

\subsection{Finite-sample bias of the ICC}\label{app:bias}

The ICC is biased downward by an amount that can be removed in closed form.
From the exact pivot~\eqref{eq:pivot},
\[
\E[\rhat_K]=1-(1-\rho_K)\,\frac{N-1}{N-3},
\]
so the bias is $-2(1-\rho_K)/(N-3)$.
The bias does not depend on $K$ and shrinks as $O(1/N)$.
The bias-corrected estimate (\texttt{scripts/icc\_bias.py}) is
\[
\rhat_K^{\mathrm{bc}}=1-(1-\rhat_K)\,\frac{N-3}{N-1}.
\]
The bias is below $10^{-4}$ for the pooled scores, with \mbox{$N=50{,}000$}.
For the per-class demonstration, with $N\approx500$, it is about $0.004(1-\rho_K)$.

Precision improves separately with the number of seeds.
Take reliability 0.5 and $N=1{,}000$ units.
The standard deviation of the single-seed reliability estimate is then 0.024 at $K=2$ and 0.016 at $K=4$.
The standard deviation falls to 0.013 at $K=10$ and 0.012 at $K=20$.
\section{LLM evaluation: models, prompts and reliability}\label{app:llm}
This appendix lists the models and benchmark items behind Section~\ref{sec:compare} (Appendix~\ref{app:models}).
The later subsections give the reliability analysis summarized in Section~\ref{sec:gcoef} (Appendices~\ref{app:gtheory} and~\ref{app:binary}) and the shot-format control of Section~\ref{sec:results} (Appendix~\ref{app:shotformat}).

\subsection{Models, benchmarks and prompts}\label{app:models}

\begin{table}[t]\centering\footnotesize\setlength{\tabcolsep}{4pt}
\begin{tabular}{@{}llll@{}}
\toprule
Model & Family & Size & Benchmarks \\
\midrule
\mbox{Qwen2.5-0.5B-Instruct} & Qwen & 0.5B & all four \\
\mbox{Qwen2-0.5B-Instruct} & Qwen & 0.5B & all four \\
\mbox{Qwen2.5-1.5B-Instruct} & Qwen & 1.5B & all four \\
\mbox{Qwen2-1.5B-Instruct} & Qwen & 1.5B & all four \\
\mbox{Qwen2.5-3B-Instruct} & Qwen & 3B & ARC \\
\mbox{Qwen2.5-7B-Instruct} & Qwen & 7B & ARC \\
\mbox{Qwen2-7B-Instruct} & Qwen & 7B & ARC \\
\midrule
\mbox{SmolLM2-135M-Instruct} & SmolLM2 & 135M & all four \\
\mbox{SmolLM2-360M-Instruct} & SmolLM2 & 360M & all four \\
\mbox{SmolLM2-1.7B-Instruct} & SmolLM2 & 1.7B & all four \\
\midrule
\mbox{Falcon3-1B-Instruct} & Falcon3 & 1B & all four \\
\mbox{Falcon3-3B-Instruct} & Falcon3 & 3B & all four \\
\mbox{Falcon3-7B-Instruct} & Falcon3 & 7B & ARC \\
\midrule
\mbox{Llama-3.2-1B-Instruct} & \mbox{Llama-3.2} & 1B & all four \\
\mbox{Llama-3.2-3B-Instruct} & \mbox{Llama-3.2} & 3B & all four \\
\bottomrule
\end{tabular}
\caption{The 15 models span four families and 135M to 7B parameters. Qwen2 and Qwen2.5 count as one family. Benchmarks: the benchmarks whose reported comparison family includes the model. ``ARC'' marks the four models included only in the ARC comparison family, and ``all four'' adds HellaSwag, MMLU and OpenBookQA.}
\label{tab:models}
\end{table}

\paragraph{Models and scoring.}
We compare the 15 instruction-tuned open models in Table~\ref{tab:models}, all of them Instruct checkpoints.
A model answers an item by comparing the next-token logits of the option letters after its own chat template, so scoring is deterministic.
Every model sees the same prompt grid of 4 instruction wordings $\times$ 4 option orders in each of two protocols, zero-shot and two-shot.
We chose open checkpoints of this size because the full 32-configuration grid runs on one consumer GPU for every model.
The near-chance \mbox{SmolLM2-135M} and \mbox{SmolLM2-360M} serve as negative controls.
We loaded the \mbox{Llama-3.2} checkpoints from the \texttt{unsloth/} repositories on the Hugging Face Hub and the other models from their developers' repositories, and the scripts in the code release give every repository ID.
Exact model revisions were not recorded, so the correctness tensors in the code release are the reproducibility target (Appendix~\ref{app:code}).

\paragraph{Benchmark items.}
ARC-Easy \citep{clark2018} contributes the 2,365 four-option test questions.
The 11 models marked ``all four'' are also compared on HellaSwag \citep{zellers2019}, MMLU \citep{hendrycks2021} and OpenBookQA \citep{mihaylov2018}, with 2,365 items from each.
For HellaSwag these are the first eligible validation contexts, and for MMLU the first 2,365 four-option test questions.
OpenBookQA items come from its training pool, because its test split has only 500 items.
The ARC comparison family therefore has 105 pairs, and each of the other three has 55.
The full prompt texts are in the code release.

\subsection{One configuration gives item-difficulty reliability of at most 0.65}\label{app:gtheory}

\begin{table}[t]\centering\footnotesize\setlength{\tabcolsep}{2.6pt}
\begin{tabular}{@{}lcrccccc@{}}
\toprule
 & & & & & & & $g(1)$/$g(32)$ \\
Model & acc & spr & wrd & ord & shot & res & (pooled) \\
\midrule
\mbox{Qwen2.5-0.5B} & .610 & 26 & .19 & .00 & .73 & .08 & .36/.95 \\
\mbox{Qwen2-0.5B} & .613 & 8 & .71 & .01 & .04 & .24 & .48/.97 \\
\mbox{Qwen2.5-1.5B} & .886 & 4 & .24 & .03 & .43 & .30 & .58/.98 \\
\mbox{Qwen2-1.5B} & .845 & 2 & .27 & .21 & .16 & .36 & .65/.98 \\
\mbox{Qwen2.5-3B} & .927 & 1.5 & .34 & .13 & .05 & .48 & .59/.98 \\
\mbox{Qwen2.5-7B} & .962 & 1.1 & .09 & .46 & .26 & .19 & .65/.98 \\
\mbox{Qwen2-7B} & .946 & 1.4 & .20 & .05 & .49 & .25 & .65/.98 \\
\midrule
\mbox{SmolLM2-135M} & .248 & 4 & .04 & .63 & .01 & .32 & .04/.54 \\
\mbox{SmolLM2-360M} & .255 & 3 & .03 & .51 & .00 & .46 & .04/.55 \\
\mbox{SmolLM2-1.7B} & .758 & 13 & .42 & .09 & .06 & .42 & .43/.96 \\
\midrule
\mbox{Falcon3-1B} & .762 & 4 & .05 & .04 & .81 & .10 & .51/.97 \\
\mbox{Falcon3-3B} & .871 & 4 & .14 & .21 & .48 & .17 & .55/.97 \\
\mbox{Falcon3-7B} & .945 & 1.6 & .07 & .69 & .01 & .23 & .62/.98 \\
\midrule
\mbox{Llama-3.2-1B} & .493 & 45 & .18 & .00 & .79 & .03 & .15/.85 \\
\mbox{Llama-3.2-3B} & .781 & 52 & .29 & .00 & .41 & .29 & .34/.94 \\
\bottomrule
\end{tabular}
\caption{For every model, the reliability of item difficulty under one configuration, $g(1)$, is at most 0.65, and the 32-configuration mean reaches $g(32)\ge0.85$ for every model above chance. ARC-Easy, 2,365 items, 32 configurations. acc: mean accuracy over the 32 configurations. spr: range of accuracy across them, in points. wrd, ord, shot, res: shares of the variance of configuration-level accuracy due to wording, option order, shot count and the residual (they sum to 1). $g(1)$, $g(32)$: reliability of item difficulty for one configuration and for the 32-configuration mean, pooled over both protocols. \mbox{SmolLM2-135M} and \mbox{SmolLM2-360M} sit near the 0.25 chance floor. The \mbox{Llama-3.2} spreads come from the protocol, because the fixed two-shot block costs them 21--29 points.}
\label{tab:llmscale}
\end{table}

\begin{figure*}[t]\centering
\includegraphics[width=\textwidth]{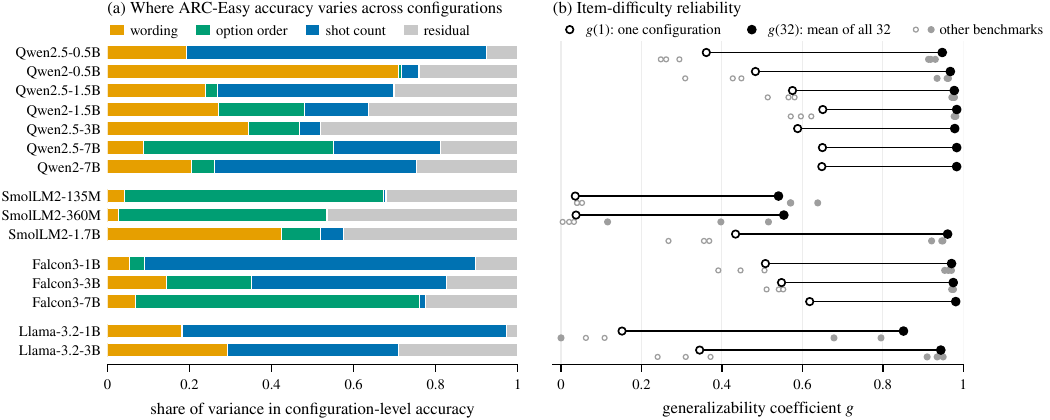}
\caption{One prompt configuration measures item difficulty poorly for every model, and the 32-configuration mean (both protocols pooled) is reliable whenever the model has item signal. (a)~Shares of the variance of configuration-level ARC-Easy accuracy due to wording, option order, shot count and the residual, for the 15 models. (b)~$g(1)$ (open circle) and $g(32)$ (filled circle) on ARC-Easy, joined by a line. Small grey circles give the same two coefficients on HellaSwag, MMLU and OpenBookQA for the 11 models compared there.}
\label{fig:gcoef}
\end{figure*}

For each model we record whether it answers each of the 2,365 ARC-Easy items correctly under each of the 32 configurations.
Table~\ref{tab:llmscale} summarizes this array in two ways.
The first splits the variance of the 32 configuration-level accuracies into wording, option order, shot count and a residual.
The second treats each item's correctness as a measurement of the item's difficulty for that model, with configurations as replicates.
In the crossed item $\times$ configuration design of generalizability theory, $g(1)$ is the reliability of item difficulty under one configuration and $g(32)$ the reliability of its mean over all 32.
The two are linked by the Spearman--Brown formula of Section~\ref{sec:reliability} with $K{=}32$.
Because $g(32)$ averages over both protocols, it is a pooled estimand.
Figure~\ref{fig:gcoef} shows the same quantities and adds the other three benchmarks.

\paragraph{One configuration is a noisy measure for every model.}
On ARC, $g(1)$ ranges from 0.04 to 0.65, and even the 7B models reach at most 0.65.
The mean over all 32 configurations is reliable for every model above chance, with $g(32)\ge0.85$.
On HellaSwag, MMLU and OpenBookQA, $g(1)$ is at most 0.62 for every model, and all 24 model--benchmark combinations with $g(1)\ge0.2$ reach $g(32)\ge0.91$ (Figure~\ref{fig:gcoef}(b)).

\paragraph{Averaging cannot create item signal.}
\mbox{SmolLM2-135M} and \mbox{SmolLM2-360M} answer near the chance floor, so their correctness says little about which items are hard.
For both, $g(1)$ is about 0.04, and averaging all 32 configurations lifts $g(32)$ only to about 0.55.
We use the two models as negative controls.

\paragraph{The coefficients are estimated precisely.}
We resampled items 2,000 times to get 95\% intervals for $g(1)$.
The intervals are narrow and lie well below one.
For \mbox{Qwen2.5-0.5B} and \mbox{SmolLM2-1.7B} the intervals are [0.35, 0.37] and [0.42, 0.45], respectively.
Both Qwen 7B models have intervals inside the range from 0.60 to 0.70.

\paragraph{Accurate models can still be unstable across prompts.}
Within Qwen2.5, the spread falls from 26 points at 0.5B to 4 at 1.5B and 1.1 at 7B, and Falcon3 behaves similarly.
\mbox{Llama-3.2-3B} breaks this pattern, with a zero-shot accuracy of 0.885 and yet a spread of 52 points.
The protocol drives this spread, since the fixed two-shot block drops its accuracy to 0.677.
\mbox{Llama-3.2-1B} falls the same way, from 0.637 to 0.349.

\paragraph{Which source of variation dominates depends on the model.}
Even two models of the same size can differ.
For \mbox{Qwen2.5-0.5B}, shot count explains 0.73 of the variance (item-bootstrap interval [0.69, 0.76]).
For \mbox{Qwen2-0.5B}, wording explains 0.71.
At 7B the spreads are about one point, so every share is noisy.
For \mbox{Qwen2.5-7B}, option order explains 0.46, but its interval runs from 0.10 to 0.75.

\paragraph{The \mbox{Llama-3.2} two-shot drop depends on the wording.}
Two inline examples cost \mbox{Llama-3.2-3B} 10, 12, 11 and 51 points under the four wordings.
Under each of the four option orders the cost is 20 to 21 points.
For \mbox{Llama-3.2-1B} the cost is large under every wording, 22 to 35 points.
The drop is an interaction between the fixed two-shot block and the wording under the chat template, so it is a property of the evaluation harness.
The \ProtDep{} verdicts of Section~\ref{sec:results} describe pairs under this harness, with its inline examples.

\subsection{A binary-scale check agrees with \texorpdfstring{$g(1)$}{g(1)}}\label{app:binary}
The $g$ coefficients fit Gaussian variance components to 0/1 correctness, so the binary scale could distort them.
As a check, we fit a beta-binomial model with one common intraclass correlation per model, by maximum likelihood over the 2,365 items (script in Appendix~\ref{app:code}).
This correlation matches the Gaussian $g(1)$ to within 0.05 for 14 of the 15 models.
The largest of these gaps is 0.046, for \mbox{Falcon3-7B}.
For \mbox{SmolLM2-135M} both methods give 0.04, so the low coefficients of the near-chance models are not a floor artifact.
The exception is \mbox{Llama-3.2-3B}, with a gap of 0.12.
The beta-binomial model assumes that all 32 configurations are exchangeable, and the two-shot drop of \mbox{Llama-3.2-3B} breaks that assumption.
The exception is consistent with treating shot count as a protocol (Section~\ref{sec:protocols}).
A crossed logistic generalized linear mixed model would avoid the Gaussian approximation altogether and could still model item and prompt effects while reporting per-protocol gaps, and we leave it open.

\subsection{Chat-formatted examples remove the two-shot drop}\label{app:shotformat}

\begin{table}[t]\centering\footnotesize\setlength{\tabcolsep}{4pt}
\begin{tabular}{@{}llcccc@{}}
\toprule
 & & \multicolumn{4}{c}{Wording} \\
\cmidrule(l){3-6}
Model & Condition & 1 & 2 & 3 & 4 \\
\midrule
\mbox{Llama-3.2-3B} & zero-shot & .883 & .881 & .883 & .882 \\
 & two-shot, inline & .782 & .752 & .775 & .366 \\
 & two-shot, chat & .895 & .895 & .893 & .895 \\
\midrule
\mbox{Llama-3.2-1B} & zero-shot & .489 & .679 & .693 & .671 \\
 & two-shot, inline & .268 & .322 & .405 & .343 \\
 & two-shot, chat & .741 & .719 & .735 & .730 \\
\midrule
\mbox{Qwen2.5-0.5B} & zero-shot & .704 & .653 & .697 & .683 \\
 & two-shot, inline & .514 & .463 & .592 & .573 \\
 & two-shot, chat & .731 & .729 & .731 & .736 \\
\midrule
\mbox{Falcon3-3B} & zero-shot & .863 & .871 & .860 & .864 \\
 & two-shot, inline & .878 & .878 & .880 & .876 \\
 & two-shot, chat & .881 & .880 & .879 & .882 \\
\bottomrule
\end{tabular}
\caption{Formatting the two examples as chat turns removes the two-shot drop. ARC-Easy accuracy on the 2,365 questions at the identity option order, for each of the four instruction wordings. Inline: the two solved examples written into the prompt before the question, the protocol used throughout the paper. Chat: the same examples given as earlier chat turns.}
\label{tab:shotformat}
\end{table}

\begin{figure}[t]\centering
\includegraphics[width=\linewidth]{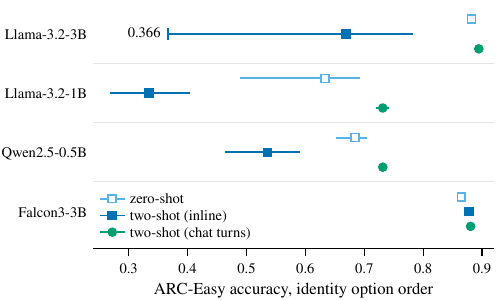}
\caption{Formatting the two examples as chat turns removes the two-shot drop of \mbox{Llama-3.2-3B}. ARC-Easy accuracy at the identity option order under zero-shot, two-shot with the examples written inline (the protocol used throughout), and two-shot with the same examples given as earlier chat turns. Markers show the mean over the four wordings, and bars show their range. The labeled point (0.366) is \mbox{Llama-3.2-3B} with inline examples under wording 4.}
\label{fig:shotformat}
\end{figure}

To separate two-shot prompting from the way the shots are presented, we re-evaluated four models on ARC-Easy with the two solved examples given as earlier chat turns.
Each example becomes a user question followed by the assistant's answer letter (script in Appendix~\ref{app:code}).
All runs use the identity option order and each of the four wordings.
Table~\ref{tab:shotformat} gives accuracy by wording, and Figure~\ref{fig:shotformat} plots the means and ranges.

Chat formatting removes the \mbox{Llama-3.2-3B} drop, including the fall to 0.366 under wording 4.
For \mbox{Llama-3.2-1B} and \mbox{Qwen2.5-0.5B}, the other two models that lose accuracy with inline examples, the chat format also makes the two-shot effect positive.
\mbox{Falcon3-3B}, which is insensitive to the inline block, stays where it was.
The inline format used throughout the paper is one of several ways to present the same two examples.
For \mbox{Llama-3.2-3B} under wording 4, the choice moves accuracy by more than 50 points.
Pooling the two protocols would average over a choice of this size.
\section{Per-protocol comparison: details and error rates}\label{app:alg2}

This appendix gives the details of Algorithm~\ref{alg:perproto} and the error rates behind Tables~\ref{tab:verdicts} and~\ref{tab:pooled}.
Every result uses $\alpha=0.05$ unless stated otherwise.

\begin{figure*}[t]\centering
\includegraphics[width=\textwidth]{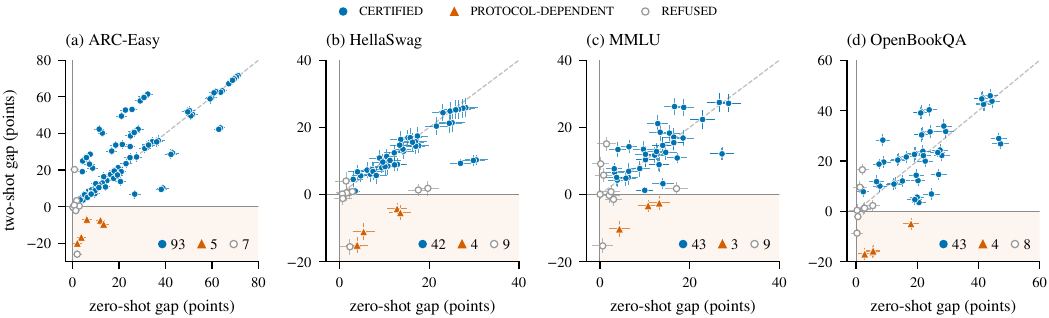}
\caption{Most pairs agree in direction across prompt settings; 16 are significant with opposite signs. One panel per benchmark. Each point is a model pair, with its zero-shot gap on the horizontal axis and its two-shot gap on the vertical axis, and each pair is ordered so that its zero-shot gap is at least zero. A gap is the first-named model's accuracy minus the second-named model's, in percentage points. Filled circles: \Certified{}; triangles: \ProtDep{}; open circles: \Refused{}. Thin lines: simultaneous 95\% intervals of Algorithm~\ref{alg:perproto} in each protocol. The shaded region marks opposite signs in the two settings, and the diagonal marks equal gaps.}
\label{fig:verdictmap}
\end{figure*}

\subsection{How Algorithm~\ref{alg:perproto} computes intervals and rank sets}\label{app:alg2-proc}

For pair $p=(A,B)$ and protocol $\pi$, the per-item gap $d^{p,\pi}_i$ averages the correctness difference over that protocol's 16 fixed configurations.
The standard error $\widehat{\mathrm{se}}_{p,\pi}$ is the sample standard deviation of the $N$ per-item gaps divided by $\sqrt N$.
We stack the centered vectors $d^{p,\pi}_i-\hat\Delta_{p,\pi}$ over all $2P$ pairs and protocols and compute their correlation matrix $\hat R$.
Draws of $Z\sim\mathcal N(0,\hat R)$ approximate the joint limiting distribution of the $2P$ $t$-statistics, and we use 10,000 draws.

The released code handles three numerical details.
First, a contrast whose centered per-item gaps have a sum of squares at or below machine precision is dropped from the calibration.
Second, pair contrasts among $M$ models span at most $M-1$ dimensions in each protocol, so $\hat R$ has rank at most $2(M-1)$ and we draw $Z$ through an eigendecomposition square root.
The square root keeps only the eigenvalues above the dimension of $\hat R$ times machine precision times $\max(\lambda_{\max},1)$, where $\lambda_{\max}$ is the largest eigenvalue.
Third, $\qalpha$ is the empirical $(1-\alpha)$ quantile of $\max_j|Z_j|$ over the draws, taking the larger order statistic when the quantile falls between two draws.

The deployed critical values are 3.513 on ARC, 3.336 on HellaSwag, 3.344 on MMLU and 3.380 on OpenBookQA.
For example, on ARC, \mbox{Qwen2.5-1.5B} minus \mbox{Qwen2-1.5B} is +4.6 points [+2.7, +6.4] zero-shot and +3.6 points [+1.8, +5.5] two-shot (Figure~\ref{fig:alg2}(b)).
Both intervals exclude zero on the positive side, so \mbox{Qwen2.5-1.5B} is certified ahead of \mbox{Qwen2-1.5B} in the sense of Section~\ref{sec:alg2}.
That direction holds at family-wise level $\alpha$, asymptotically in the number of items, for items drawn independently from an item population and the fixed prompt grid.

Within one protocol, a model is shown to be above another when their pair interval excludes zero in its favor.
The model's best possible rank is one plus the number of models shown to be above it, and its worst is $M$ minus the number of models it is shown to be above.
Because the intervals are simultaneous, the rank sets of all models contain the true ranks at once with asymptotic probability at least $1-\alpha$, on the fixed grid.
\citet{avelar2026} construct prompt-specific rank confidence sets for LLMs under a contextual Bradley--Terry model \citep{bradley1952} for human preferences.
Our rank sets adapt that idea to paired correctness on shared items.

\subsection{Other multiplicity corrections and a finite-sample variant}\label{app:alg2-mult}

The verdict rule of Algorithm~\ref{alg:perproto} needs only simultaneous intervals, so other ways to build them fit the same rule.
Single-step Bonferroni intervals \citep{dunn1961} use the normal quantile $z_{1-\alpha/(4P)}$, and \v{S}id\'ak intervals use $z_{(1+(1-\alpha)^{1/(2P)})/2}$ \citep{sidak1967}.
On ARC, with $2P=210$, these quantiles are 3.675 and 3.668, above the max-$t$ value of 3.513.
The intervals keep their centers and become wider, so under the same verdict rule either correction certifies a subset of the pairs that Algorithm~\ref{alg:perproto} certifies.
We computed these two values directly from the normal distribution.
Resampling versions of max-$t$, with bootstrap or permutation draws \citep{westfall1993}, estimate the same kind of quantile and change only $\qalpha$.
Step-down methods \citep{holm1979} can reject more hypotheses, but they do not give simultaneous intervals of this form.

\begin{remark}[A finite-sample variant]
Each per-item gap $d^{p,\pi}_i$ lies in $[-1,1]$.
Betting confidence intervals for the mean of a bounded variable \citep{waudbysmith2024} can therefore be built for every contrast, each with miscoverage $\alpha/(2P)$.
A union bound over the $2P$ intervals gives an FWER of at most $\alpha$ in finite samples for independent items, under the same verdict rule.
The price is power, because these intervals are wider than the max-$t$ intervals.
All results in this paper use the max-$t$ intervals.
\end{remark}

\subsection{Known-truth simulation}\label{app:alg2-sim}

We check the verdicts on simulated benchmarks where the true gaps are known.
Each simulated benchmark has six models and 2,365 paired binary items, with an item difficulty shared by all models.
We draw 1,000 independent simulated benchmarks per scenario.
The scenarios are a global null, in which all models are equal in both protocols, and three kinds of ranking.
Stable rankings keep the same order in both protocols, near-tie stable rankings do the same with small gaps, and reversals invert the order between the protocols.
All rates below are over pairs across simulated benchmarks.

With stable rankings, Algorithm~\ref{alg:perproto} returns the exact verdict, with the correct class and direction, for 89.5\% of pairs and returns \Refused{} for the rest.
The pooled comparison resolves 99.95\% of these pairs.
With near-tie stable rankings, the two rates are 4.5\% and 33.3\%.
With reversals, Algorithm~\ref{alg:perproto} returns \ProtDep{} for 68.3\% of pairs, never names a stable winner, and returns \Refused{} for the rest.
The pooled comparison cannot return \ProtDep{}, and it names a false stable winner for 64.6\% of reversed pairs.

Under the global null, the FWER is 5.1\% for Algorithm~\ref{alg:perproto} and 4.3\% for the pooled comparison, with a Monte Carlo s.e.\ of about 0.7\%.
Here the FWER is the share of simulated benchmarks with at least one false resolution.
A separate null simulation of Algorithm~\ref{alg:perproto} alone, over 2,000 simulated benchmarks, gives $86/2{,}000=4.3\%$ (Monte Carlo s.e.\ 0.5\%).
That second null is the binary null quoted in Section~\ref{sec:results}.
These simulations test the fixed-grid target under item sampling, and Appendix~\ref{app:transfer} measures transfer to new prompts.

\subsection{ARC verdicts}\label{app:verdicts}

Figure~\ref{fig:verdictmap} plots every pair on all four benchmarks.
On ARC, 93 of the 105 pairs are \Certified{}, and the other 12 are listed below.
For each \ProtDep{} pair we give the gap, the first-named model's accuracy minus the second-named model's, in points, written as zero-shot / two-shot, with the simultaneous intervals of the first pair in brackets.
Two \Refused{} pairs carry their gaps in the same form, and the other gaps and intervals are in the released results (Appendix~\ref{app:code}).
\begin{itemize}\setlength{\itemsep}{1pt}
\item \ProtDep{}, 5 pairs:
  \begin{itemize}\setlength{\itemsep}{0pt}
  \item \mbox{Qwen2.5-0.5B} vs.\ \mbox{Qwen2-0.5B}, +6.1 [+3.9, +8.4] / $-$6.9 [$-$9.2, $-$4.5]
  \item \mbox{Qwen2-1.5B} vs.\ \mbox{Llama-3.2-3B}, $-$3.8 / +16.6
  \item \mbox{SmolLM2-1.7B} vs.\ \mbox{Llama-3.2-3B}, $-$12.0 / +7.4
  \item \mbox{Falcon3-1B} vs.\ \mbox{Llama-3.2-3B}, $-$13.3 / +9.5
  \item \mbox{Falcon3-3B} vs.\ \mbox{Llama-3.2-3B}, $-$2.0 / +19.9
  \end{itemize}
\item \Refused{}, unresolved in both protocols, 2 pairs:
  \begin{itemize}\setlength{\itemsep}{0pt}
  \item \mbox{SmolLM2-135M} vs.\ \mbox{SmolLM2-360M}, both near the chance floor
  \item \mbox{Qwen2-7B} vs.\ \mbox{Falcon3-7B}, $-$0.1 / +0.3
  \end{itemize}
\item \Refused{}, resolved in only one protocol, 5 pairs:
  \begin{itemize}\setlength{\itemsep}{0pt}
  \item \mbox{Qwen2-0.5B} vs.\ \mbox{Llama-3.2-1B}
  \item \mbox{Qwen2.5-1.5B} vs.\ \mbox{Falcon3-3B}, +2.8 / +0.4
  \item \mbox{Qwen2.5-1.5B} vs.\ \mbox{Llama-3.2-3B}
  \item \mbox{Qwen2-1.5B} vs.\ \mbox{Falcon3-3B}
  \item \mbox{SmolLM2-1.7B} vs.\ \mbox{Falcon3-1B}
  \end{itemize}
\end{itemize}
Five of the 93 certified pairs flip sign on some individual configurations, on at most 8 of the 32, and each of the five involves a \mbox{Llama-3.2} model.
Certification concerns the mean direction within each protocol.

\subsection{Permutation null on real data}\label{app:permnull}

A permutation null keeps the real items and makes all models equal.
For each item, we reassign the models' complete two-protocol scores to model labels at random, which keeps the dependence among models and protocols within the item.
Every pair then has a true gap of zero in both protocols, so any interval that excludes zero is a false resolution.
We draw 200 permutations of each benchmark.
The share of permutations with at least one false resolution is 6.5\% on ARC, 6.0\% on HellaSwag, 7.0\% on MMLU and 5.0\% on OpenBookQA.
The Monte Carlo s.e.\ is 1.5--1.8\%, so these FWERs are consistent with the nominal 5\% within Monte Carlo error.

A second diagnostic works on one pair at a time and flips the sign of each item's gap at random, in both protocols at once, which makes that pair's true gap zero.
The diagnostic uses 200 flips per pair and max-$t$ critical values from an earlier calibration of the same comparison families.
It finds 2/21,000 false stable winners on ARC, 2/11,000 on HellaSwag and on MMLU, and 1/11,000 on OpenBookQA.
Because each pair is tested alone, the diagnostic measures per-pair errors, and the FWER comes from the permutation null above.

\subsection{Transfer to held-out and new prompts}\label{app:transfer}

\paragraph{Held-out wordings and orders.}
We choose two of the four wordings in each of the 6 possible ways and rerun Algorithm~\ref{alg:perproto} on the chosen two, with all four option orders.
We then check every certified direction on the other two wordings.
A check fails when the held-out mean gap, averaged over the held-out configurations within a protocol, has the opposite sign in either protocol.
The held-out-wording error rate is the share of failed checks.
On ARC, 11/555 checks fail (2.0\%), and on HellaSwag 4/257 fail (1.6\%).
On MMLU, 6/251 fail (2.4\%), and on OpenBookQA 13/261 fail (5.0\%).
Splitting the option orders in the same way gives 0/553, 0/248, 1/246 and 0/258 failures.
Full rank sets transfer less well.
Simultaneous rank coverage over all models, in both protocols, holds for 0--2 of the six wording splits and for 5--6 of the six order splits.
We therefore claim rank sets only for the fixed grid.

\paragraph{Four new wordings.}
After all verdicts were fixed, we wrote four new instruction wordings whose structure differs from the original four (the texts are in the code release).
All 15 models answered the 2,365 ARC-Easy questions under each new wording at the identity option order, in both protocols, which gives 8 new configurations per model.
Of the 93 certified ARC directions, 1 fails on the new wordings (Figure~\ref{fig:newwordings}).
The failing pair is \mbox{Qwen2.5-0.5B} vs.\ \mbox{Llama-3.2-1B}, two models with large configuration spreads (26 and 45 points).
\mbox{Qwen2.5-0.5B} is certified ahead, but on the new wordings the gap is $-$2.6 points zero-shot and +6.2 two-shot.
The pair is also one of the five certified pairs that already flipped on some original configurations (Appendix~\ref{app:verdicts}).
Across all certified pairs, 5 of the 744 individual new configurations flip, and all 5 protocol-dependent sign patterns replicate.
These are measured error rates, outside the guarantee.

\begin{figure}[tb]\centering
\includegraphics[width=\linewidth]{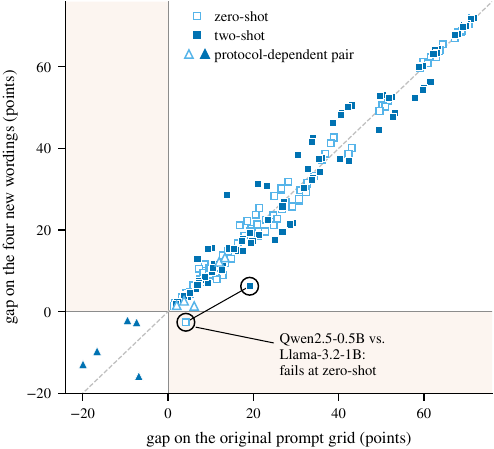}
\caption{All but one of the 93 certified ARC directions keep their sign on four wordings written after all verdicts were fixed, a measured rate outside the guarantee. Each point is one ARC pair in one protocol. The horizontal axis gives its gap on the original prompt grid, and the vertical axis its mean gap over the four new wordings at the identity option order. A gap is the first-named model's accuracy minus the second-named model's, in percentage points. Squares: \Certified{} pairs; triangles: the five \ProtDep{} pairs. Open markers: zero-shot; filled markers: two-shot. The highlighted pair, \mbox{Qwen2.5-0.5B} vs.\ \mbox{Llama-3.2-1B}, is the only certified direction that fails. Points in the shaded quadrants changed direction, and the diagonal marks equal gaps.}
\label{fig:newwordings}
\end{figure}

\subsection{Split-grid self-comparison}\label{app:selfcomp}

A split-grid self-comparison is a stress test of configuration bias.
We split a model's 16 configurations into two disjoint halves of 8 at random, use the same split in both protocols, and compare the model on one half with itself on the other.
The true gap is zero, so any stable winner is false.
We draw 200 random splits per model, and each method uses its critical value from the real comparison family.
The two halves form different grids, so this test lies outside the guarantee, which needs one grid shared by all models (Section~\ref{sec:harness}).

Algorithm~\ref{alg:perproto} names a false stable winner in 157/3,000 splits on ARC, 35/2,200 on HellaSwag, 32/2,200 on MMLU and 118/2,200 on OpenBookQA.
The pooled comparison does so in 581/3,000, 292/2,200, 254/2,200 and 670/2,200 splits.
In total, Algorithm~\ref{alg:perproto} names a false stable winner in 342/9,600 splits (3.6\%) and the pooled comparison in 1,797/9,600 (18.7\%).
Pooling produces 5.3 times as many false stable winners.
Figure~\ref{fig:selfcomp} shows that the rate grows with a model's configuration spread, the range of its accuracy across the 32 configurations.

\begin{figure}[tb]\centering
\includegraphics[width=\linewidth]{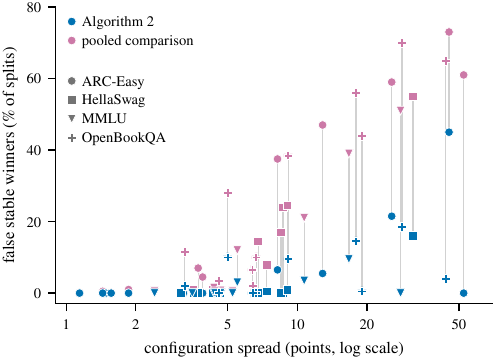}
\caption{False winners in split-grid self-comparisons grow with a model's configuration spread, and pooling multiplies them. Each point is one model on one benchmark. The horizontal axis is the model's configuration spread, the range of its accuracy across the 32 configurations in points (log scale). The vertical axis is the share of 200 random splits of its grid into two halves in which a method names the model a stable winner over itself, which is always an error. Thin vertical lines join the two methods' rates for the same model. Colors distinguish Algorithm~\ref{alg:perproto} from the pooled comparison, and marker shapes distinguish the benchmarks.}
\label{fig:selfcomp}
\end{figure}

\subsection{The pooled comparison}\label{app:pooled-baseline}

\begin{figure}[tb]\centering
\includegraphics[width=\linewidth]{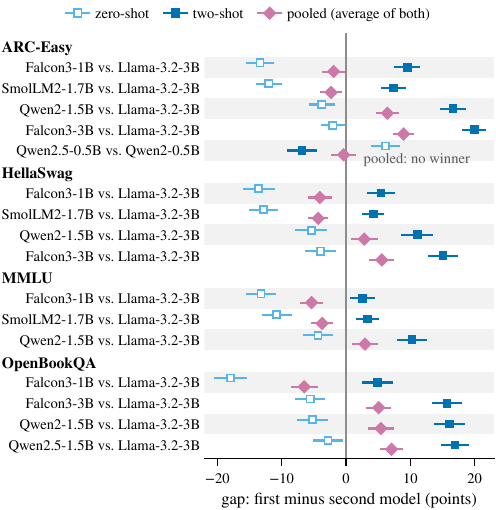}
\caption{Pooling the zero-shot and two-shot protocols names a stable winner for 15 of the 16 reversed pairs. Each row is one pair, and rows are grouped by benchmark and sorted by the zero-shot gap. A gap is the first-named model's accuracy minus the second-named model's, in percentage points. Open and filled squares: simultaneous 95\% intervals of Algorithm~\ref{alg:perproto} at zero-shot and two-shot. Diamonds: the pooled comparison's interval for the average of the two protocols.}
\label{fig:reversals}
\end{figure}

The pooled comparison averages each pair's per-item gaps over the two protocols with equal weight.
The pooled pair intervals are then calibrated jointly by max-$t$ from their estimated correlation, as in Algorithm~\ref{alg:perproto}.
The two methods share the paired design and the multiplicity control and differ only in the target, which for the pooled comparison is one average gap per pair.
On real data the pooled comparison resolves 100/105 ARC pairs and 254/270 pairs overall.
The resolved pairs include 15 of the 16 pairs whose protocol intervals have opposite signs (Figure~\ref{fig:reversals}).
For \mbox{Qwen2-1.5B} minus \mbox{Llama-3.2-3B}, it reports +6.4 points, the average of $-$3.8 zero-shot and +16.6 two-shot.
Under the simulated global null its FWER is 4.3\% (Appendix~\ref{app:alg2-sim}), so its own test is calibrated.
The pooled comparison errs on reversals because averaging removes the protocol distinction before any interval is built.

Three variants, none of which we ran, could recover some of the power that Algorithm~\ref{alg:perproto} gives up when the protocols agree, each at a cost.
Testing for an interaction and pooling only when none is found chooses the analysis after seeing the data, so the family-wise guarantee is lost.
A pooled contrast fixed in advance can join the same max-$t$ family with its own verdict, but it raises $\qalpha$ for every pair and answers only the average-gap question.
Empirical-Bayes shrinkage controls the average error across pairs, a weaker target than family-wise error.

\subsection{Sensitivity to \texorpdfstring{$\alpha$}{alpha}}\label{app:alg2-alpha}

Verdict counts barely change with $\alpha$ (Table~\ref{tab:alg2alpha}).
On every benchmark, the counts at $\alpha=0.01$, 0.05 and 0.1 differ by at most two pairs.
The command that reproduces the table is listed in Appendix~\ref{app:code}.

\begin{table}[tb]\centering\small
\begin{tabular}{@{}lccc@{}}
\toprule
 & $\alpha{=}.01$ & $\alpha{=}.05$ & $\alpha{=}.1$ \\
\midrule
ARC & 92/4/9 & 93/5/7 & 93/5/7 \\
HellaSwag & 42/4/9 & 42/4/9 & 42/5/8 \\
MMLU & 43/3/9 & 43/3/9 & 43/3/9 \\
OpenBookQA & 43/4/8 & 43/4/8 & 45/4/6 \\
\bottomrule
\end{tabular}
\caption{Verdict counts change by at most two pairs between $\alpha=0.01$ and $\alpha=0.1$. Each entry gives the number of \Certified{} / \ProtDep{} / \Refused{} pairs. ARC has 105 pairs among 15 models, and the other benchmarks have 55 pairs among 11 models. Table~\ref{tab:verdicts} reports $\alpha=0.05$.}
\label{tab:alg2alpha}
\end{table}

\subsection{Replication on three more benchmarks}\label{app:crossbench}

The pattern seen on ARC repeats on HellaSwag, MMLU and OpenBookQA (Table~\ref{tab:verdicts} and Figure~\ref{fig:verdictmap}).
Their comparison families contain the same 11 models, with 55 pairs each.
The three 7B models and \mbox{Qwen2.5-3B} are included only in the reported ARC comparison family.
Every protocol-dependent pair on these three benchmarks involves \mbox{Llama-3.2-3B}.
As on ARC, \mbox{Llama-3.2-3B} leads its partner zero-shot and trails it two-shot.
For \mbox{Qwen2-1.5B} minus \mbox{Llama-3.2-3B}, the zero-shot and two-shot gaps are $-$5.4 and +11.1 points on HellaSwag.
The same gaps are $-$4.3 and +10.2 on MMLU, and $-$5.2 and +16.1 on OpenBookQA.
\mbox{Llama-3.2-3B} also has the largest configuration spread on each of these benchmarks, at 31.6, 27.9 and 43.8 points.
Across all four benchmarks, 44 pairs are certified on at least two benchmarks, and each keeps the same direction wherever it is certified.
\section{Embedding details}\label{app:embed}

\paragraph{Training and features.}
We train ResNet-18 on \mbox{CIFAR-100} with standard augmentation, using a $3\times3$ stem and no max-pooling.
The optimizer is SGD with learning rate 0.1, Nesterov momentum 0.9, weight decay $5\times10^{-4}$ and a cosine learning-rate schedule.
We train three seeds with this recipe.
For each seed we keep the 512-dimensional penultimate features, taken after global average pooling, for every test image.
We repeat the study on CIFAR-10.

\begin{table}[t]\centering\small\setlength{\tabcolsep}{3pt}
\begin{tabular}{@{}lcc@{}}
\toprule
Cross-seed measure & CIFAR-100 & CIFAR-10 \\
\midrule
per-dimension, raw & .003 & .005 \\
per-dimension, after held-out Procrustes & .813 & .902 \\
pairwise distances & .823 & -- \\
\midrule
agreement: linear CKA & .852 & .924 \\
\bottomrule
\end{tabular}
\caption{Seeds disagree on the coordinates of the features and agree on the representation. Penultimate features of three \mbox{ResNet-18} seeds, evaluated on the half of the test images not used to fit the alignment. Rows above the rule are reliabilities. Raw: before alignment. CKA is a similarity index, shown as agreement. --: not computed.}
\label{tab:embed}
\end{table}

\paragraph{Why the coordinates are free.}
The network's logits are $Wh+c$, where $h$ is the penultimate feature vector.
For any orthogonal matrix $Q$, the features $Qh$ with output weights $WQ^\top$ give the same logits.
The loss therefore cannot prefer one basis over another through the output layer.
The ReLU at the end of the last block makes every feature nonnegative before pooling, and a rotation of a nonnegative vector usually has negative coordinates.
A network with this architecture cannot produce every rotated version of its features, so some bases are favored and the symmetry is approximate.
Section~\ref{sec:embed} measures how large the coordinate effect is in practice.

\paragraph{Measures.}
Let $Z_a$ hold the features of seed $a$, with one row per image and one column per dimension.
For per-dimension reliability and for the alignment, we standardize each column across images.
Per-dimension reliability between seeds $a$ and $b$ is
\[
\bar r(Z_a,Z_b)=\frac{1}{512}\sum_{j=1}^{512}\operatorname{corr}_i\big(Z_{a,ij},\,Z_{b,ij}\big),
\]
the correlation across images of dimension $j$ under the two seeds, averaged over dimensions.
With two seeds as replicates, each correlation estimates the single-replicate reliability $\rho_1$ of that dimension (Section~\ref{sec:reliability}).
We split the test images at random into a fit half $F$ and an evaluation half $E$.
On $F$ we compute the orthogonal Procrustes map $R=\arg\min_{R^\top R=I}\lVert Z_a^{F}R-Z_b^{F}\rVert$ in the Frobenius norm \citep{schonemann1966}.
Its solution is $R=UV^\top$, where $(Z_a^{F})^\top Z_b^{F}=USV^\top$ is a singular value decomposition.
The aligned reliability is $\bar r(Z_a^{E}R,\,Z_b^{E})$, computed only on images that played no part in the fit.
For pairwise distances we draw random pairs of evaluation images and correlate their Euclidean distance under seed $a$ with their distance under seed $b$, as in representational similarity analysis \citep{kriegeskorte2008}.
Linear CKA is $\lVert A^\top B\rVert^2/(\lVert A^\top A\rVert\,\lVert B^\top B\rVert)$ in the Frobenius norm, where $A$ and $B$ are the column-centered evaluation features as extracted \citep{kornblith2019}.
Distances and linear CKA are unchanged by any orthogonal map of either seed's features, so they need no alignment.
The CIFAR-100 values in Table~\ref{tab:embed} are computed on the evaluation half and averaged over all pairs of the three seeds.

\paragraph{The aligned value is stable.}
The reference seed is the seed whose coordinates the others are mapped onto.
Changing it keeps the aligned CIFAR-100 reliability within [0.812, 0.814].
Fitting the alignment on 25--75\% of the test images gives 0.807--0.817.
On CIFAR-10, per-dimension reliability is 0.005 before alignment and 0.902 after it, and linear CKA is 0.924.
We did not compute pairwise-distance reliability for CIFAR-10.
We report linear CKA as agreement, because it is a similarity index that is not calibrated as a reliability.

\paragraph{Regenerating the features.}
The three CIFAR-100 feature matrices behind Table~\ref{tab:embed} are not in the released files.
The script \texttt{scripts/\allowbreak gauge\_probe.py} retrains the three seeds on public CIFAR-100 and extracts the features again (Appendix~\ref{app:code}).
\section{Code and data}\label{app:code}

Code, data and prompt texts are at \url{https://github.com/ziangni-sys/measured-or-real}.

\begin{table*}[!t]\centering\scriptsize\setlength{\tabcolsep}{4pt}\renewcommand{\arraystretch}{1.12}
\begin{tabular}{@{}p{4.6cm}p{7.9cm}p{4.7cm}@{}}
\toprule
\raggedright Result & \raggedright Command & \raggedright Data in the repository \tabularnewline
\midrule
\raggedright Tab.~\ref{tab:mse}, Tab.~\ref{tab:full}: simulation MSE &
\raggedright \texttt{erel disatt2} &
\raggedright -- \tabularnewline
\raggedright Tab.~\ref{tab:ablate}: gate ablation &
\raggedright \texttt{erel ablate} &
\raggedright -- \tabularnewline
\raggedright Fig.~\ref{fig:teaser}(a, b), Tab.~\ref{tab:robustablate}, Fig.~\ref{fig:robust}: robustness sweep &
\raggedright \texttt{scripts/disatt\_robustness.py -{}-repetitions 4000 -{}-bootstrap-repetitions 500 -{}-seed 20270720}; figures: \texttt{fig1\_teaser.py}, \texttt{figD1\_robustness.py} &
\raggedright \texttt{results/disatt\_robustness\_full\_\allowbreak 4000x500\_seed20270720/} \tabularnewline
\raggedright Prop.~\ref{prop:split}: simulation check (App.~\ref{app:interval}) &
\raggedright \texttt{scripts/prop2\_check.py} &
\raggedright -- \tabularnewline
\raggedright Fieller interval and REML closed form (App.~\ref{app:interval}) &
\raggedright \texttt{scripts/interval\_compare.py} &
\raggedright -- \tabularnewline
\raggedright Persistence study (App.~\ref{app:persist}, Fig.~\ref{fig:persist}) &
\raggedright \texttt{erel synth data/synth.csv}; figure: \texttt{figB1\_persistence.py} &
\raggedright \texttt{data/synth.csv} \tabularnewline
\raggedright Score reliabilities and variance--mean check (Tab.~\ref{tab:scorerel}, App.~\ref{app:hetero}, Fig.~\ref{fig:hetero}) &
\raggedright \texttt{scripts/difficulty\_gtheory.py}, then again with \texttt{DATASET=cifar10}; figure: \texttt{figE1\_hetero.py} &
\raggedright \texttt{data/prune\_scores/}, \texttt{data/prune\_scores\_cifar10/} \tabularnewline
\raggedright Cross-seed correction (Section~\ref{sec:real}, Tab.~\ref{tab:pooledest}) &
\raggedright \texttt{scripts/real\_recover.py}, then again with \texttt{DATASET=cifar10} &
\raggedright \texttt{data/prune\_scores/}, \texttt{data/prune\_scores\_cifar10/} \tabularnewline
\raggedright Coverage calibrations (App.~\ref{app:hetero}) &
\raggedright \texttt{scripts/coverage\_hetero.py}, \texttt{scripts/cond\_coverage.py}, \texttt{scripts/boot\_calibrate.py} &
\raggedright -- \tabularnewline
\raggedright Finite-sample bias of the ICC (App.~\ref{app:bias}) &
\raggedright \texttt{scripts/icc\_bias.py} &
\raggedright -- \tabularnewline
\raggedright Per-class count (Section~\ref{sec:real}, App.~\ref{app:perclass}) &
\raggedright \texttt{scripts/prune\_cifar.py}, then \texttt{scripts/real\_disatt.py} &
\raggedright needs regenerated CIFAR-100 labels \tabularnewline
\raggedright Verdicts and the pooled comparison on real data (Tab.~\ref{tab:verdicts}, \ref{tab:pooled}; App.~\ref{app:alg2-proc}, \ref{app:verdicts}, \ref{app:pooled-baseline}; Fig.~\ref{fig:teaser}(c), \ref{fig:reversals}, \ref{fig:verdictmap}) &
\raggedright \texttt{python -m scripts.algo2\_simrank.run}; figures: \texttt{fig1\_teaser.py}, \texttt{fig2\_reversals.py}, \texttt{figG1\_verdictmap.py} &
\raggedright \texttt{results/algo2\_reported/} \tabularnewline
\raggedright Held-out wordings and orders, permutation null (Tab.~\ref{tab:verdicts}; App.~\ref{app:permnull}, \ref{app:transfer}) &
\raggedright \texttt{python -m scripts.algo2\_simrank.validate\_extended} &
\raggedright \texttt{results/algo2\_reported/} \tabularnewline
\raggedright Split-grid self-comparison (Tab.~\ref{tab:verdicts}, \ref{tab:pooled}; App.~\ref{app:selfcomp}; Fig.~\ref{fig:selfcomp}) &
\raggedright stored output; figure: \texttt{figG3\_selfcomp.py} &
\raggedright \texttt{results/algo2\_reported/\allowbreak cross\_harness\_native.json} \tabularnewline
\raggedright Simulated benchmarks with known truth (Tab.~\ref{tab:pooled}, App.~\ref{app:alg2-sim}) &
\raggedright \texttt{scripts/validate\_decisions.py} &
\raggedright \texttt{results/algo2\_reported/} \tabularnewline
\raggedright New ARC wordings (App.~\ref{app:transfer}, Fig.~\ref{fig:newwordings}) &
\raggedright \texttt{scripts/holdout\_wordings.py -{}-analyze-only}; figure: \texttt{figG2\_newwordings.py} &
\raggedright \texttt{data/llm\_holdout\_*.npy} \tabularnewline
\raggedright Level sensitivity of Algorithm~\ref{alg:perproto} (Tab.~\ref{tab:alg2alpha}) &
\raggedright \texttt{python -m scripts.algo2\_simrank.run -{}-alpha}, once per level &
\raggedright -- \tabularnewline
\raggedright $g(1)$ and $g(32)$ (Section~\ref{sec:gcoef}, Tab.~\ref{tab:llmscale}, Fig.~\ref{fig:gcoef}) &
\raggedright \texttt{scripts/llm\_gtheory.py}, \texttt{scripts/analyze\_scale.py}; figure: \texttt{figF1\_gcoef.py} &
\raggedright \texttt{data/llm\_decomp\_*.npy} \tabularnewline
\raggedright Beta-binomial check (App.~\ref{app:binary}) &
\raggedright \texttt{scripts/binary\_gtheory.py} &
\raggedright \texttt{data/llm\_decomp\_*.npy} \tabularnewline
\raggedright Shot-format control (Fig.~\ref{fig:shotformat}, Tab.~\ref{tab:shotformat}) &
\raggedright \texttt{scripts/shot\_format.py -{}-analyze-only}; figure: \texttt{fig3\_shotformat.py} &
\raggedright \texttt{data/llm\_chatshot\_*.npy}, \texttt{data/llm\_decomp\_*.npy} \tabularnewline
\raggedright Embeddings (Tab.~\ref{tab:embed}, App.~\ref{app:embed}) &
\raggedright \texttt{scripts/gauge\_probe.py} (retrains the three seeds), then \texttt{scripts/gauge\_probe2.py} &
\raggedright needs regenerated features \tabularnewline
\raggedright Schematics (Fig.~\ref{fig:overview}, \ref{fig:gate}, \ref{fig:alg2}) &
\raggedright \texttt{pdflatex} on \texttt{figscripts/\allowbreak schematics/\allowbreak schem\_overview.tex}, \texttt{schem\_gate.tex} and \texttt{schem\_alg2.tex} &
\raggedright values written in the sources, interval positions from \texttt{results/\allowbreak algo2\_reported/} \tabularnewline
\bottomrule
\end{tabular}
\caption{Most results run from the released files, and two need regenerated CIFAR-100 inputs. Each row gives a result, the command that produces it, and the data it reads or writes. Every Python command is prefixed with \texttt{uv run}, and a bare script path stands for \texttt{uv run python} followed by that path. \texttt{erel} stands for \texttt{cargo run -{}-release -p erel-cli -{}-}. Flags and seeds not shown here are in \texttt{REPRODUCIBILITY.md}. Figure scripts are in \texttt{figscripts/}, and the schematic sources in \texttt{figscripts/\allowbreak schematics/}. --: the command generates its own data. Stored output: the file is released, but no released script writes it.}
\label{tab:repro}
\end{table*}

\paragraph{Contents.}
The repository holds a Rust core with the gate and the exact-$F$ bounds of Algorithm~\ref{alg:gated} and the simulations of Table~\ref{tab:mse}.
The repository also holds Python scripts for the other analyses, the data files named in Table~\ref{tab:repro}, and one plotting script per data figure in \texttt{figscripts/}.
The three schematic figures are TikZ sources in \texttt{figscripts/\allowbreak schematics/}, and each compiles on its own with \texttt{pdflatex} and needs the \texttt{standalone} class and the \texttt{tikz} and \texttt{newtx} packages.
Python commands run from the repository root after \texttt{uv sync}, in the environment pinned by \texttt{uv.lock}.
Table~\ref{tab:repro} lists the command for each result it covers.
The repository's \texttt{REPRODUCIBILITY.md} gives the flags and seeds of every command, and \texttt{scripts/\allowbreak smoke\_test.py} runs shortened versions of the CPU Python commands.

\paragraph{What runs from the released files.}
The LLM correctness tensors are included, so the analyses of Algorithm~\ref{alg:perproto}, the LLM reliability analyses and the prompt controls run without a GPU.
The disattenuation simulations and the robustness sweep generate their own data on a CPU.
The per-example difficulty scores of all eight seeds are included for both datasets.
The score reliabilities and the cross-seed correction of Section~\ref{sec:real} therefore also run from the released files.
The split-grid self-comparison counts come from a stored output file (Table~\ref{tab:repro}), and no released script regenerates that file.
No released script computes the joint estimate of Table~\ref{tab:pooledest} either.
The moment estimator of Appendix~\ref{app:baselines}, applied to seeds 0--3 of the released scores, gives 0.931 on \mbox{CIFAR-100} and 0.930 on \mbox{CIFAR-10}.

\paragraph{What needs regenerated inputs.}
Two inputs are missing from the released files, the CIFAR-100 class labels and the embedding features.
The per-class count of Section~\ref{sec:real} needs the labels.
The script \texttt{scripts/\allowbreak prune\_cifar.py} downloads public CIFAR-100 and recreates the label cache, and \texttt{scripts/\allowbreak real\_disatt.py} then recomputes the count.
Table~\ref{tab:embed} needs the features.
The script \texttt{scripts/\allowbreak gauge\_probe.py} regenerates them by retraining the three ResNet-18 seeds on a GPU (Appendix~\ref{app:embed}).

\paragraph{LLM checkpoints.}
We did not record the exact revisions of the model checkpoints.
The correctness tensors are therefore the reproducibility target for every LLM result.

\paragraph{Algorithm~\ref{alg:gated} in the code.}
Every reported number for Algorithm~\ref{alg:gated} uses the exact-$F$ branch, and the released implementation of Algorithm~\ref{alg:gated} has only that branch.
It lacks the in-line variance--mean check, the unit-bootstrap bound of the max rule and the Spearman--Brown map behind the check in Appendix~\ref{app:pooled}.
Stand-alone versions of the first two are in \texttt{scripts/\allowbreak difficulty\_gtheory.py} and \texttt{scripts/\allowbreak boot\_calibrate.py}.

\paragraph{Compute and data.}
LLM scoring, the difficulty scores and the embeddings ran on one consumer GPU.
The simulations and the statistical analyses run on a CPU.
All datasets and model checkpoints are public, and we collected no new data and no data about people.


{\small\begin{thebibliography}{99}
\bibitem[Angrist \& Krueger(1995)]{angrist1995} J.~D.~Angrist and A.~B.~Krueger. Split-sample instrumental variables estimates of the return to schooling. \emph{Journal of Business \& Economic Statistics}, 13(2):225--235, 1995.
\bibitem[Avelar Menendez et al.(2026)]{avelar2026} A.~R.~Avelar Menendez, Y.~Liu, and X.~Dai. Prompt-dependent ranking of large language models with uncertainty quantification. arXiv:2603.03336, 2026.
\bibitem[Bancroft(1944)]{bancroft1944} T.~A.~Bancroft. On biases in estimation due to the use of preliminary tests of significance. \emph{Annals of Mathematical Statistics}, 15(2):190--204, 1944.
\bibitem[Baranchik(1964)]{baranchik1964} A.~J.~Baranchik. Multiple regression and estimation of the mean of a multivariate normal distribution. Technical Report 51, Department of Statistics, Stanford University, 1964.
\bibitem[Ben Allal et al.(2025)]{benallal2025} L.~Ben Allal et al. SmolLM2: when smol goes big -- data-centric training of a small language model. arXiv:2502.02737, 2025.
\bibitem[Berger(1982)]{berger1982} R.~L.~Berger. Multiparameter hypothesis testing and acceptance sampling. \emph{Technometrics}, 24(4):295--300, 1982.
\bibitem[Bound et al.(1995)]{bound1995} J.~Bound, D.~A.~Jaeger, and R.~M.~Baker. Problems with instrumental variables estimation when the correlation between the instruments and the endogenous explanatory variable is weak. \emph{Journal of the American Statistical Association}, 90(430):443--450, 1995.
\bibitem[Bouthillier et al.(2021)]{bouthillier2021} X.~Bouthillier et al. Accounting for variance in machine learning benchmarks. \emph{MLSys}, 2021.
\bibitem[Bradley \& Terry(1952)]{bradley1952} R.~A.~Bradley and M.~E.~Terry. Rank analysis of incomplete block designs: I. The method of paired comparisons. \emph{Biometrika}, 39(3/4):324--345, 1952.
\bibitem[Brennan(2001)]{brennan2001} R.~L.~Brennan. \emph{Generalizability Theory}. Springer, 2001.
\bibitem[Brown(1910)]{brown1910} W.~Brown. Some experimental results in the correlation of mental abilities. \emph{British Journal of Psychology}, 3(3):296--322, 1910.
\bibitem[Carroll et al.(2006)]{carroll2006} R.~J.~Carroll, D.~Ruppert, L.~A.~Stefanski, and C.~M.~Crainiceanu. \emph{Measurement Error in Nonlinear Models}. 2nd ed., Chapman \& Hall, 2006.
\bibitem[Clark et al.(2018)]{clark2018} P.~Clark et al. Think you have solved question answering? Try ARC, the AI2 Reasoning Challenge. arXiv:1803.05457, 2018.
\bibitem[Cronbach et al.(1972)]{cronbach1972} L.~J.~Cronbach, G.~C.~Gleser, H.~Nanda, and N.~Rajaratnam. \emph{The Dependability of Behavioral Measurements: Theory of Generalizability for Scores and Profiles}. Wiley, 1972.
\bibitem[D'Amour et al.(2022)]{damour2022} A.~D'Amour et al. Underspecification presents challenges for credibility in modern machine learning. \emph{Journal of Machine Learning Research}, 23(226):1--61, 2022.
\bibitem[Ding et al.(2021)]{ding2021} F.~Ding, J.-S.~Denain, and J.~Steinhardt. Grounding representation similarity through statistical testing. \emph{NeurIPS}, 2021.
\bibitem[Dror et al.(2018)]{dror2018} R.~Dror, G.~Baumer, S.~Shlomov, and R.~Reichart. The hitchhiker's guide to testing statistical significance in natural language processing. \emph{ACL}, pp.~1383--1392, 2018.
\bibitem[Dunn(1961)]{dunn1961} O.~J.~Dunn. Multiple comparisons among means. \emph{Journal of the American Statistical Association}, 56(293):52--64, 1961.
\bibitem[Falcon-LLM Team(2024)]{falcon2024} Falcon-LLM Team. The Falcon 3 family of open models. Technology Innovation Institute, \url{https://huggingface.co/blog/falcon3}, December 2024.
\bibitem[Fieller(1954)]{fieller1954} E.~C.~Fieller. Some problems in interval estimation. \emph{Journal of the Royal Statistical Society, Series B}, 16(2):175--185, 1954.
\bibitem[Fuller(1987)]{fuller1987} W.~A.~Fuller. \emph{Measurement Error Models}. Wiley, 1987.
\bibitem[Gail \& Simon(1985)]{gailsimon1985} M.~Gail and R.~Simon. Testing for qualitative interactions between treatment effects and patient subsets. \emph{Biometrics}, 41(2):361--372, 1985.
\bibitem[Gao et al.(2023)]{gao2023} L.~Gao et al. A framework for few-shot language model evaluation. Version v0.4.0, Zenodo, doi:10.5281/zenodo.10256836, 2023.
\bibitem[Grattafiori et al.(2024)]{grattafiori2024} A.~Grattafiori et al. The Llama 3 herd of models. arXiv:2407.21783, 2024.
\bibitem[He et al.(2016)]{he2016} K.~He, X.~Zhang, S.~Ren, and J.~Sun. Deep residual learning for image recognition. \emph{CVPR}, pp.~770--778, 2016.
\bibitem[He et al.(2024)]{he2024} M.~He, S.~Yang, T.~Huang, and B.~Zhao. Large-scale dataset pruning with dynamic uncertainty. \emph{CVPR Workshops}, pp.~7713--7722, 2024. arXiv:2306.05175.
\bibitem[Hendrycks et al.(2021)]{hendrycks2021} D.~Hendrycks et al. Measuring massive multitask language understanding. \emph{ICLR}, 2021.
\bibitem[Holm(1979)]{holm1979} S.~Holm. A simple sequentially rejective multiple test procedure. \emph{Scandinavian Journal of Statistics}, 6(2):65--70, 1979.
\bibitem[Hothorn et al.(2008)]{hothorn2008} T.~Hothorn, F.~Bretz, and P.~Westfall. Simultaneous inference in general parametric models. \emph{Biometrical Journal}, 50(3):346--363, 2008.
\bibitem[Jacobs \& Wallach(2021)]{jacobs2021} A.~Z.~Jacobs and H.~Wallach. Measurement and fairness. \emph{FAccT}, 2021.
\bibitem[Kinal(1980)]{kinal1980} T.~W.~Kinal. The existence of moments of k-class estimators. \emph{Econometrica}, 48(1):241--249, 1980.
\bibitem[Kornblith et al.(2019)]{kornblith2019} S.~Kornblith, M.~Norouzi, H.~Lee, and G.~Hinton. Similarity of neural network representations revisited. \emph{ICML}, 2019.
\bibitem[Kriegeskorte et al.(2008)]{kriegeskorte2008} N.~Kriegeskorte, M.~Mur, and P.~Bandettini. Representational similarity analysis -- connecting the branches of systems neuroscience. \emph{Frontiers in Systems Neuroscience}, 2:4, 2008.
\bibitem[Krizhevsky(2009)]{krizhevsky2009} A.~Krizhevsky. Learning multiple layers of features from tiny images. Technical report, University of Toronto, 2009.
\bibitem[Li et al.(2016)]{li2016} Y.~Li, J.~Yosinski, J.~Clune, H.~Lipson, and J.~Hopcroft. Convergent learning: do different neural networks learn the same representations? \emph{ICLR}, 2016.
\bibitem[Liang et al.(2023)]{liang2023} P.~Liang et al. Holistic evaluation of language models. \emph{Transactions on Machine Learning Research}, 2023.
\bibitem[Maia Polo et al.(2024)]{polo2024} F.~Maia Polo et al. Efficient multi-prompt evaluation of LLMs. \emph{NeurIPS}, 2024.
\bibitem[Mart\'inez-Plumed et al.(2019)]{martinez2019} F.~Mart\'inez-Plumed, R.~B.~C.~Prud\^encio, A.~Mart\'inez-Us\'o, and J.~Hern\'andez-Orallo. Item response theory in AI: analysing machine learning classifiers at the instance level. \emph{Artificial Intelligence}, 271:18--42, 2019.
\bibitem[McCoy et al.(2020)]{mccoy2020} R.~T.~McCoy, J.~Min, and T.~Linzen. BERTs of a feather do not generalize together: large variability in generalization across models with similar test set performance. \emph{BlackboxNLP}, pp.~217--227, 2020.
\bibitem[McGraw \& Wong(1996)]{mcgraw1996} K.~O.~McGraw and S.~P.~Wong. Forming inferences about some intraclass correlation coefficients. \emph{Psychological Methods}, 1(1):30--46, 1996.
\bibitem[Meta(2024)]{meta2024} Meta. Llama 3.2: revolutionizing edge AI and vision with open, customizable models. \url{https://ai.meta.com/blog/llama-3-2-connect-2024-vision-edge-mobile-devices/}, September 2024.
\bibitem[Mihaylov et al.(2018)]{mihaylov2018} T.~Mihaylov, P.~Clark, T.~Khot, and A.~Sabharwal. Can a suit of armor conduct electricity? A new dataset for open book question answering. \emph{EMNLP}, pp.~2381--2391, 2018.
\bibitem[Miller(2024)]{miller2024} E.~Miller. Adding error bars to evals: a statistical approach to language model evaluations. arXiv:2411.00640, 2024.
\bibitem[Mizrahi et al.(2024)]{mizrahi2024} M.~Mizrahi et al. State of what art? A call for multi-prompt LLM evaluation. \emph{Transactions of the Association for Computational Linguistics}, 12:933--949, 2024.
\bibitem[Mogstad et al.(2024)]{mogstad2024} M.~Mogstad, J.~P.~Romano, A.~M.~Shaikh, and D.~Wilhelm. Inference for ranks with applications to mobility across neighbourhoods and academic achievement across countries. \emph{Review of Economic Studies}, 91(1):476--518, 2024.
\bibitem[Paul et al.(2021)]{paul2021} M.~Paul, S.~Ganguli, and G.~K.~Dziugaite. Deep learning on a data diet: finding important examples early in training. \emph{NeurIPS}, 2021.
\bibitem[Picard(2021)]{picard2021} D.~Picard. \texttt{torch.manual\_seed(3407)} is all you need: on the influence of random seeds in deep learning architectures for computer vision. arXiv:2109.08203, 2021.
\bibitem[Rodriguez et al.(2021)]{rodriguez2021} P.~Rodriguez et al. Evaluation examples are not equally informative: how should that change NLP leaderboards? \emph{ACL}, 2021.
\bibitem[Sch\"onemann(1966)]{schonemann1966} P.~H.~Sch\"onemann. A generalized solution of the orthogonal Procrustes problem. \emph{Psychometrika}, 31(1):1--10, 1966.
\bibitem[Sclar et al.(2024)]{sclar2024} M.~Sclar, Y.~Choi, Y.~Tsvetkov, and A.~Suhr. Quantifying language models' sensitivity to spurious features in prompt design. \emph{ICLR}, 2024.
\bibitem[Searle et al.(1992)]{searle1992} S.~R.~Searle, G.~Casella, and C.~E.~McCulloch. \emph{Variance Components}. Wiley, 1992.
\bibitem[Sellam et al.(2022)]{sellam2022} T.~Sellam et al. The MultiBERTs: BERT reproductions for robustness analysis. \emph{ICLR}, 2022.
\bibitem[Shrout \& Fleiss(1979)]{shrout1979} P.~E.~Shrout and J.~L.~Fleiss. Intraclass correlations: uses in assessing rater reliability. \emph{Psychological Bulletin}, 86(2):420--428, 1979.
\bibitem[\v{S}id\'ak(1967)]{sidak1967} Z.~\v{S}id\'ak. Rectangular confidence regions for the means of multivariate normal distributions. \emph{Journal of the American Statistical Association}, 62(318):626--633, 1967.
\bibitem[Sorscher et al.(2022)]{sorscher2022} B.~Sorscher, R.~Geirhos, S.~Shekhar, S.~Ganguli, and A.~S.~Morcos. Beyond neural scaling laws: beating power-law scaling via data pruning. \emph{NeurIPS}, 2022.
\bibitem[Spearman(1904)]{spearman1904} C.~Spearman. The proof and measurement of association between two things. \emph{American Journal of Psychology}, 15(1):72--101, 1904.
\bibitem[Spearman(1910)]{spearman1910} C.~Spearman. Correlation calculated from faulty data. \emph{British Journal of Psychology}, 3(3):271--295, 1910.
\bibitem[Staiger \& Stock(1997)]{staiger1997} D.~O.~Staiger and J.~H.~Stock. Instrumental variables regression with weak instruments. \emph{Econometrica}, 65(3):557--586, 1997.
\bibitem[Swayamdipta et al.(2020)]{swayamdipta2020} S.~Swayamdipta et al. Dataset cartography: mapping and diagnosing datasets with training dynamics. \emph{EMNLP}, 2020.
\bibitem[Toneva et al.(2019)]{toneva2019} M.~Toneva et al. An empirical study of example forgetting during deep neural network learning. \emph{ICLR}, 2019.
\bibitem[Waudby-Smith \& Ramdas(2024)]{waudbysmith2024} I.~Waudby-Smith and A.~Ramdas. Estimating means of bounded random variables by betting. \emph{Journal of the Royal Statistical Society, Series B}, 86(1):1--27, 2024.
\bibitem[Westfall \& Young(1993)]{westfall1993} P.~H.~Westfall and S.~S.~Young. \emph{Resampling-Based Multiple Testing: Examples and Methods for p-Value Adjustment}. Wiley, 1993.
\bibitem[Williams et al.(2021)]{williams2021} A.~H.~Williams, E.~Kunz, S.~Kornblith, and S.~W.~Linderman. Generalized shape metrics on neural representations. \emph{NeurIPS}, 2021.
\bibitem[Yang et al.(2024a)]{yang2024a} A.~Yang et al. Qwen2 technical report. arXiv:2407.10671, 2024.
\bibitem[Yang et al.(2024b)]{yang2024b} A.~Yang et al. Qwen2.5 technical report. arXiv:2412.15115, 2024.
\bibitem[Zellers et al.(2019)]{zellers2019} R.~Zellers, A.~Holtzman, Y.~Bisk, A.~Farhadi, and Y.~Choi. HellaSwag: can a machine really finish your sentence? \emph{ACL}, pp.~4791--4800, 2019.
\bibitem[Zheng et al.(2024)]{zheng2024} C.~Zheng, H.~Zhou, F.~Meng, J.~Zhou, and M.~Huang. Large language models are not robust multiple choice selectors. \emph{ICLR}, 2024.
\bibitem[Zimmerman \& Williams(1997)]{zimmerman1997} D.~W.~Zimmerman and R.~H.~Williams. Properties of the Spearman correction for attenuation for normal and realistic non-normal distributions. \emph{Applied Psychological Measurement}, 21(3):253--270, 1997.
\end{thebibliography}
}
\end{document}